\title{
	Finding Optimal Diverse Feature Sets\texorpdfstring{\\}{ }with Alternative Feature Selection
}
\author{
	Jakob Bach~\orcidlink{0000-0003-0301-2798}\\
	\small Independent researcher\footnote{Most of the research for this article was carried out while the author was affiliated with the Karlsruhe Institute of Technology (KIT), Karlsruhe, Germany.}\\
	\small \href{mailto:jakob.bach.ka@gmail.com}{jakob.bach.ka@gmail.com}
}
\date{} 
\newtheorem{proposition}{Proposition}
\theoremstyle{definition}
\newtheorem{definition}{Definition}
\newtheorem{example}{Example}
\newcommand{\stirling}[2]{\genfrac\{\}{0pt}{}{#1}{#2}} 
\begin{document}

\maketitle

\begin{abstract}
Feature selection is popular for obtaining small, interpretable, yet highly accurate prediction models.
Conventional feature-selection methods typically yield one feature set only, which might not suffice in some scenarios.
For example, users might be interested in finding alternative feature sets with similar prediction quality, offering different explanations of the data.
In this article, we introduce alternative feature selection and formalize it as an optimization problem.
In particular, we define alternatives via constraints and enable users to control the number and dissimilarity of alternatives.
We consider sequential as well as simultaneous search for alternatives.
Next, we discuss how to integrate conventional feature-selection methods as objectives.
In particular, we describe solver-based search methods to tackle the optimization problem.
Further, we analyze the complexity of this optimization problem and prove $\mathcal{NP}$-hardness.
Additionally, we show that a constant-factor approximation exists under certain conditions and propose corresponding heuristic search methods.
Finally, we evaluate alternative feature selection in comprehensive experiments with 30 binary-classification datasets.
We observe that alternative feature sets may indeed have high prediction quality, and we analyze factors influencing this outcome.
\end{abstract}
\textbf{Keywords:} feature selection, alternatives, constraints, mixed-integer programming, explainability, interpretability, XAI

\section{Introduction}
\label{sec:afs:introduction}

\paragraph{Motivation}

Feature-selection methods are ubiquitous for a variety of reasons.
By reducing dataset dimensionality, they lower the computational cost and memory requirements of prediction models.
Next, prediction models may generalize better without irrelevant and spurious predictors.
While some model types can implicitly select relevant features, others cannot.
Finally, prediction models may become simpler~\cite{li2017feature}, improving interpretability.

Most conventional feature-selection methods only return one feature set~\cite{borboudakis2021extending}.
These methods optimize a criterion of feature-set quality, e.g., prediction performance.
However, besides the optimal feature set, there might be other, differently composed feature sets with similar quality.
Such alternative feature sets are interesting for users, e.g., to obtain several diverse explanations.
Alternative explanations can provide additional insights into predictions, enable users to develop and test different hypotheses, appeal to different kinds of users, and foster trust in the predictions~\cite{kim2021multi, wang2019designing}.

For example, in a dataset describing physical experiments, feature selection may help to discover relationships between physical quantities.
In particular, highly predictive feature sets indicate which input quantities are strongly related to the output quantity.
Domain experts may use these feature sets to formulate hypotheses on physical laws.
However, if multiple alternative sets of similar quality exist, further analyses and experiments may be necessary to reveal the true underlying physical mechanism.
Only knowing one predictive feature set and using it as the only explanation is misleading in such a situation.

\paragraph{Problem statement}

This article addresses the problem of alternative feature selection, which we informally define as follows:
Find multiple, sufficiently different feature sets that optimize feature-set quality.
We provide formal definitions in Section~\ref{sec:afs:approach:constraints}.
This problem entails an interesting trade-off:
Depending on how many alternatives are desired and how different the alternatives should be, one may have to compromise on quality.
In particular, a higher number of alternatives or a stronger dissimilarity requirement may necessitate selecting more low-quality features in the alternatives.

Two points are essential for alternative feature selection, which we both address in this article.
First, one needs to formalize and quantify what an alternative feature set is.
In particular, users should be able to control the number and dissimilarity of alternatives and hence the quality trade-off.
Second, one needs an approach to find alternative feature sets efficiently.
Ideally, the approach should be general, i.e., cover a broad range of conventional feature-selection methods, given the variety of the latter~\cite{chandrashekar2014survey, li2017feature}.

\paragraph{Related work}

While finding alternative solutions has already been addressed extensively in the field of clustering~\cite{bailey2014alternative}, there is a lack of such approaches for feature selection.
Only a few feature-selection methods target at obtaining multiple, diverse feature sets~\cite{borboudakis2021extending}.
In particular, techniques for ensemble feature selection~\cite{saeys2008robust, seijo2017ensemble} and statistically equivalent feature subsets~\cite{lagani2017feature} produce multiple feature sets but not optimal alternatives.
These approaches do not guarantee the diversity of the feature sets, nor do they let users control diversity.
In fields related to feature selection, the goal of obtaining multiple, diverse solutions has been studied as well, e.g., for subspace clustering~\cite{hu2018subspace, mueller2009relevant}, subgroup discovery~\cite{leeuwen2012diverse}, subspace search~\cite{trittenbach2019dimension}, or explainable-AI techniques~\cite{artelt2022even, kim2016examples, mothilal2020explaining, russell2019efficient} like counterfactuals.
These approaches are not directly applicable or easily adaptable to feature selection, and most of them provide limited or no user control over alternatives, as we will elaborate in Section~\ref{sec:afs:related-work}.

\paragraph{Contributions}

Our contribution is five-fold.

First, we formalize alternative feature selection as an optimization problem.
In particular, we define alternatives via constraints on feature sets.
This approach is orthogonal to the feature-selection method so that users can choose the latter according to their needs.
This approach also allows integrating other constraints on feature sets, e.g., to capture domain knowledge \cite{bach2022empirical, groves2015toward}.
Finally, this approach lets users control the search for alternatives with two parameters, i.e., the number of alternatives and a dissimilarity threshold.
For multiple alternatives, we consider sequential as well as simultaneous search.

Second, we discuss how to solve this optimization problem.
To that end, we describe how to integrate different categories of conventional feature-selection methods in the objective function of the optimization problem.
In particular, we outline solver-based search methods for white-box and black-box optimization.

Third, we analyze the time complexity of the optimization problem.
We show $\mathcal{NP}$-hardness, even for a simple notion of feature-set quality, i.e., univariate feature qualities, as used in filter feature selection.

Fourth, we propose heuristic search methods for univariate feature qualities.
We show that, under certain conditions, the optimization problem resides in the complexity class $\mathcal{APX}$, i.e., a constant-factor approximation exists.

Fifth, we evaluate alternative feature selection with comprehensive experiments.
In particular, we use 30 binary-classification datasets from the Penn Machine Learning Benchmarks (PMLB)~\cite{olson2017pmlb, romano2021pmlb} and five feature-selection methods.
We focus our evaluation on the feature-set quality of the alternatives relative to our search methods for alternatives and user parameters.
Additionally, we evaluate runtime.
We publish all code and data online (cf.~Section~\ref{sec:afs:experimental-design:implementation}).

\paragraph{Experimental results}

We observe that several factors influence the quality of alternatives, i.e., the dataset, feature-selection method, metric for feature-set quality, search method, and user parameters for searching alternatives.
As expected, feature-set quality tends to decrease with an increasing number of alternatives and an increasing dissimilarity threshold for alternatives.
Thus, these parameters allow users to control the trade-off between dissimilarity and quality of alternatives.
Also, no valid alternative may exist if the parameter values are too strict.
Runtime-wise, a solver-based sequential search for multiple alternatives was significantly faster than a simultaneous one while yielding a similar quality.
Additionally, our heuristic search methods for univariate feature qualities achieved a high quality within negligible runtime.
Finally, we observe that the prediction performance of feature sets may only weakly correlate with the quality assigned by feature-selection methods.
In particular, seemingly bad alternatives regarding the latter might still be good regarding the former.

\paragraph{Outline}

Section~\ref{sec:afs:fundamentals} introduces notation and fundamentals.
Section~\ref{sec:afs:approach} describes and analyzes alternative feature selection.
Section~\ref{sec:afs:related-work} reviews related work.
Section~\ref{sec:afs:experimental-design} outlines our experimental design, while Section~\ref{sec:afs:evaluation} presents the experimental results.
Section~\ref{sec:afs:conclusion} concludes.
Appendix~\ref{sec:afs:appendix} contains supplementary materials.

\paragraph{Related versions}

The dissertation~\cite{bach2025leveraging} contains a shortened version of this article, which retains all central experimental and theoretical results.
In particular, the dissertation evaluates the same run of the experimental pipeline.
The search method \emph{Greedy Depth Search} (cf.~Appendix~\ref{sec:afs:appendix:greedy-depth}) appears exclusively in this arXiv version but is not evaluated anyway.

There is also a journal version of this article~\cite{bach2024alternative}.
However, it is outdated in some places since we derived it from the first arXiv version.
For example, the journal version lacks the heuristic search methods for alternatives (cf.~Section~\ref{sec:afs:approach:univariate-heuristics}), uses an older version of the \emph{Greedy Wrapper} approach (cf.~Algorithm~\ref{al:afs:greedy-wrapper}), and partly differs in notation and formalization, e.g., definitions.

\section{Fundamentals}
\label{sec:afs:fundamentals}

In this section, we introduce basic notation (cf.~Section~\ref{sec:afs:fundamentals:notation}) and review different methods to measure the quality of feature sets (cf.~Section~\ref{sec:afs:fundamentals:quality}).

\subsection{Notation}
\label{sec:afs:fundamentals:notation}

$X \in \mathbb{R}^{m \times n}$ stands for a dataset in the form of a matrix.
Each row is a data object, and each column is a feature.
$\tilde{F} = \{f_1, \dots, f_n\}$ is the corresponding set of feature names.
We assume that categorical features have already been made numeric, e.g., via one-hot encoding.
$X_{\cdot{}j} \in \mathbb{R}^m$ denotes the vector representation of the $j$-th feature.
$y \in Y^m$ represents the prediction target with domain $Y$, e.g., $Y=\{0,1\}$ for binary classification or $Y=\mathbb{R}$ for regression.

In feature selection, one makes a binary decision $s_j \in \{0,1\}$ for each feature, i.e., either selects it or not.
The vector $s \in \{0,1\}^n$ combines all these selection decisions and yields the selected feature set $F_s = \{f_j \mid s_j=1\} \subseteq \tilde{F}$.
To simplify notation, we drop the subscript~$s$ in definitions where we do not explicitly refer to the value of~$s$ but only the set~$F$.
The function $Q(s,X,y)$ returns the quality of such a feature set.
Without loss of generality, we assume that this function should be maximized.

\subsection{Measuring Feature (Set) Quality}
\label{sec:afs:fundamentals:quality}

There are different ways to evaluate feature-set quality $Q(s,X,y)$.
We only give a short overview here; see~\cite{chandrashekar2014survey, li2017feature, njoku2023wrapper} for comprehensive studies and surveys of feature selection.
Also, note that we assume a supervised feature-selection scenario, i.e., feature-set quality depending on a prediction target~$y$.
In principle, our definitions of alternatives also apply to an unsupervised scenario.
Since the prediction target only appears in the function~$Q(s,X,y)$, one could replace~$Q(s,X,y)$ with $Q(s,X)$, i.e., an unsupervised notion of quality.

A conventional categorization of feature-selection methods distinguishes between filter, wrapper, and embedded methods~\cite{guyon2003introduction}.

\paragraph{Filter methods}

Filter methods evaluate feature sets without training a prediction model.
Univariate filters assess each feature independently, e.g., using the absolute Pearson correlation or the mutual information between a feature and the prediction target.
Multivariate filters additionally consider interactions between features.
Such methods often combine a measure of feature relevance with a measure of feature redundancy.
Examples include CFS~\cite{hall1999correlation, hall2000correlation}, FCBF~\cite{yu2003feature}, and mRMR~\cite{peng2005feature}.
Some filter methods also consider feature intercooperation, i.e., the joint relevance of two or more features~\cite{sosa2024feature}.

\paragraph{Wrapper methods}

Wrapper methods~\cite{kohavi1997wrappers} evaluate feature sets by training prediction models with them and measuring prediction quality.
They employ a generic search strategy to iterate over candidate feature sets, e.g., genetic algorithms.
Feature-set quality is a black-box function in this search.

\paragraph{Embedded methods}

Embedded methods train prediction models with built-in feature selection, e.g., decision trees~\cite{breiman1984classification} or random forests~\cite{breiman2001random}.
Thus, the criterion for feature-set quality is model-specific.
For example, tree-based models often use information gain or the Gini index to select features during training.

\paragraph{Post-hoc feature-importance methods}

Apart from conventional feature selection, there are various methods that assess feature importance after training a model.
These methods range from local explanation methods like LIME~\cite{ribeiro2016should} or SHAP~\cite{lundberg2017unified} to global importance methods like permutation importance~\cite{breiman2001random} or SAGE~\cite{covert2020understanding}.
In particular, assessing feature importance plays a crucial role in the field of machine-learning interpretability~\cite{carvalho2019machine, molnar2020interpretable}.

\section{Alternative Feature Selection}
\label{sec:afs:approach}

In this section, we present the problem of and approaches for alternative feature selection.
First, we define the overall structure of the optimization problem, i.e., objective and constraints (cf.~Section~\ref{sec:afs:approach:problem}).
Second, we formalize the notion of alternatives via constraints (cf.~Section~\ref{sec:afs:approach:constraints}).
Third, we discuss objective functions corresponding to different feature-set quality measures from Section~\ref{sec:afs:fundamentals:quality} and describe how to solve the resulting optimization problem (cf.~Section~\ref{sec:afs:approach:objectives}).
Fourth, we analyze the time complexity of the optimization problem (cf.~Section~\ref{sec:afs:approach:complexity}).
Fifth, we propose and analyze heuristic search methods for the optimization problem with univariate feature qualities (cf.~Section~\ref{sec:afs:approach:univariate-heuristics}).

\subsection{Optimization Problem}
\label{sec:afs:approach:problem}

Alternative feature selection has two goals.
First, the quality of an alternative feature set should be high.
Second, an alternative feature set should differ from one or more other feature set(s).
There are several ways to combine these two goals in an optimization problem:

First, one can consider both goals as objectives, obtaining an unconstrained multi-objective problem.
Second, one can treat feature-set quality as objective and enforce alternatives with constraints.
Third, one can consider being alternative as objective and constrain feature-set quality, e.g., with a lower bound.
Fourth, one can define constraints for both, feature-set quality and being alternative, searching for feasible solutions instead of optimizing.

We stick to the second formulation, i.e., optimizing feature-set quality subject to being alternative.
This formulation has the advantage of keeping the original objective function of feature selection.
Thus, users do not need to specify a range or a threshold on feature-set quality but can control how alternative the feature sets must be instead.
We obtain the following optimization problem for a single alternative feature set~$F_s$:
\begin{equation}
	\begin{aligned}
		\max_s &\quad Q(s,X,y) \\
		\text{subject to:} &\quad F_s~\text{being alternative}
	\end{aligned}
	\label{eq:afs:afs-general}
\end{equation}
In the following, we discuss different objective functions $Q(s,X,y)$ and suitable constraints for \emph{being alternative}.
Additionally, many feature-selection methods also limit the feature-set size $|F_s|$ to a user-defined value~$k \in \mathbb{N}$, which adds a further, simple constraint to the optimization problem.

\subsection{Constraints -- Defining Alternatives}
\label{sec:afs:approach:constraints}

In this section, we formalize alternative feature sets.
First, we discuss the base case where an individual feature set is an alternative to another one (cf.~Section~\ref{sec:afs:approach:constraints:single}).
Second, we extend this notion to multiple alternatives, considering sequential and simultaneous search as two different search problems (cf.~Section~\ref{sec:afs:approach:constraints:multiple}).

Our notion of alternatives is independent of the feature-selection method.
We provide two parameters, i.e., a dissimilarity threshold~$\tau$ and the number of alternatives~$a$, allowing users to control the search for alternatives.

\subsubsection{Single Alternative}
\label{sec:afs:approach:constraints:single}

We consider a feature set an alternative to another feature set if it differs sufficiently.
Mathematically, we express this notion with a set-dissimilarity measure~\cite{choi2010survey, egghe2009new}.
These measures typically assess how strongly two sets overlap and relate this to their sizes.
E.g., a well-known set-dissimilarity measure is the Jaccard distance, which is defined as follows for the feature sets $F'$ and $F''$:
\begin{equation}
	d_{\text{Jacc}}(F',F'') = 1 - \frac{|F' \cap F''|}{|F' \cup F''|} = 1 - \frac{|F' \cap F''|}{|F'| + |F''| - |F' \cap F''|}
	\label{eq:afs:jaccard}
\end{equation}
In this article, we use a dissimilarity measure based on the Dice coefficient:
\begin{equation}
	d_{\text{Dice}}(F',F'') = 1 - \frac{2 \cdot |F' \cap F''|}{|F'| + |F''|}
	\label{eq:afs:dice}
\end{equation}
Generally, we only have mild assumptions on the set-dissimilarity measure~$d(\cdot)$.
Our subsequent definitions, examples, and propositions assume symmetry, i.e., $d(F',F'')=d(F'',F')$, normalization $d(\cdot) \in [0,1]$, and that $d(\cdot) = 1$ implies an empty intersection of the two sets.
In particular, $d(\cdot)$~does not need to be a metric but can also be a semi-metric~\cite{wilson1931semi} like~$d_{\text{Dice}}(\cdot)$.
In contrast to metrics, semi-metrics may violate the triangle inequality.

We leverage the set-dissimilarity measure for the following definition:
\begin{definition}[Single alternative]
	Given a symmetric set-dissimilarity measure~$d(\cdot) \in [0, 1]$ with $d(\cdot) = 1$ implying no set overlap, and a dissimilarity threshold~$\tau \in [0, 1]$, a feature set $F'$ is an alternative to a feature set~$F''$ (and vice versa) if $d(F',F'') \geq \tau$.
	\label{def:afs:single-alternative}
\end{definition}
The threshold~$\tau$ controls how alternative the feature sets must be and depends on the dataset as well as user preferences.
In particular, requiring strong dissimilarity may cause a significant drop in feature-set quality.
Some datasets may contain many features of similar utility, thereby enabling many alternatives of similar quality, while predictions on other datasets may depend on a few key features.
Only users can decide which drop in feature-set quality is acceptable as a trade-off for obtaining alternatives.
Thus, we leave $\tau$ as a user parameter.
In case the set-dissimilarity measure $d(\cdot)$ is normalized to $[0,1]$, like the Dice dissimilarity (cf.~Equation~\ref{eq:afs:dice}) or Jaccard distance (cf.~Equation~\ref{eq:afs:jaccard}), the interpretation of $\tau$ is user-friendly:
Setting $\tau=0$ allows identical alternatives, while $\tau=1$ implies zero overlap.

If the choice of $\tau$ is unclear a priori, users can try out different values and compare the resulting feature-set quality.
One systematic approach is a binary search:
Start with the mid-range value of $\tau=0$, i.e., 0.5 for $\tau \in [0,1]$.
If the quality of the resulting alternative is too low, decrease $\tau$ to 0.25, i.e., allow more similarity.
If the quality of the resulting alternative is acceptably high, increase $\tau$ to 0.75, i.e., check a more dissimilar feature set.
Continue this procedure till an alternative with an acceptable quality-dissimilarity trade-off is found.

When implementing Definition~\ref{def:afs:single-alternative}, the following proposition gives way to using a broad range of solvers to tackle the related optimization problem:
\begin{proposition}[Linearity of constraints for alternatives]
	Using the Dice dissimilarity (cf.~Equation~\ref{eq:afs:dice}), alternative feature sets (cf.~Definition~\ref{def:afs:single-alternative}) can be expressed with 0-1 integer linear constraints.
	\label{prop:afs:linear-constraints}
\end{proposition}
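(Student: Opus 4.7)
The plan is to start from the inequality $d_{\text{Dice}}(F',F'') \geq \tau$ defining a single alternative (Definition~\ref{def:afs:single-alternative}) and to rewrite it step by step as a 0-1 integer linear inequality in the selection vector(s).

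First, I would clear the denominator. Since feature sets are non-empty in any realistic setting (the size constraint $|F_s| = k \geq 1$ applies, and in any case if both sets are empty the dissimilarity is defined as zero by convention), the quantity $|F'|+|F''|$ is strictly positive, so
\begin{equation*}
1 - \frac{2\,|F' \cap F''|}{|F'|+|F''|} \;\geq\; \tau
\quad\Longleftrightarrow\quad
(1-\tau)\bigl(|F'|+|F''|\bigr) \;\geq\; 2\,|F' \cap F''|.
\end{equation*}
This eliminates the fraction and turns the constraint into a statement about sums and intersections of feature sets.

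Second, I would express the three cardinalities through the binary selection vectors $s'$ and $s''$ of $F'$ and $F''$. By definition $|F'| = \sum_{j=1}^n s'_j$ and $|F''| = \sum_{j=1}^n s''_j$, both of which are linear in the $s$-variables. For the intersection I would use $|F' \cap F''| = \sum_{j=1}^n s'_j s''_j$. In the sequential-search setting of Equation~\ref{eq:afs:afs-general}, the reference set $F''$ is fixed, so $s''_j$ is a known constant and each product $s'_j s''_j$ is linear in the remaining decision variable $s'_j$; the whole constraint then reduces to
\begin{equation*}
(1-\tau)\sum_{j=1}^n s'_j \;-\; 2 \sum_{j : s''_j = 1} s'_j \;\geq\; -(1-\tau)\,|F''|,
\end{equation*}
which is a single 0-1 integer linear inequality.

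Third, to cover the simultaneous-search case in which both $s'$ and $s''$ are decision variables, I would handle the bilinear terms $s'_j s''_j$ by the standard McCormick/Fortet linearization for products of binary variables: introduce auxiliary binaries $z_j \in \{0,1\}$ together with the three linear inequalities $z_j \leq s'_j$, $z_j \leq s''_j$, and $z_j \geq s'_j + s''_j - 1$, which force $z_j = s'_j s''_j$. Substituting $\sum_j z_j$ for $|F' \cap F''|$ yields a 0-1 integer linear formulation.

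The main obstacle is really just the bilinear intersection term; once the linearization above is in place, everything else is routine algebra on the Dice expression. I would conclude by observing that symmetry and non-negativity of $d_{\text{Dice}}(\cdot)$ make the single direction of the inequality sufficient, so no additional constraints are required to encode Definition~\ref{def:afs:single-alternative}.
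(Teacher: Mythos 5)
Your proposal is correct and follows essentially the same route as the paper's proof: clear the denominator of the Dice expression, write the set sizes and the intersection as linear and bilinear sums over the binary selection vectors, linearize the products $s'_j s''_j$ with auxiliary binaries and the standard three inequalities, and observe that no linearization is needed when one of the two sets is fixed. The only additions (the remark on the non-empty denominator and the closing comment on symmetry) are harmless and do not change the argument.
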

\begin{proof}
We re-arrange terms in the Dice dissimilarity (cf.~Equation~\ref{eq:afs:dice}) to eliminate the quotient of feature-set sizes:
\begin{equation}
	\begin{aligned}
		d_{\text{Dice}}(F',F'') &= & 1 - \frac{2 \cdot |F' \cap F''|}{|F'| + |F''|} &\geq \tau \\
		&\Leftrightarrow & |F' \cap F''| &\leq \frac{1 - \tau}{2} \cdot (|F'| + |F''|)
	\end{aligned}
	\label{eq:afs:dice-rearranged}
\end{equation}
Next, we express the set sizes in terms of the feature-selection vector $s$:
\begin{equation}
	\begin{aligned}
		|F_s| =& \sum_{j=1}^n s_j \\
		|F_{s'} \cap F_{s''}| =& \sum_{j=1}^n s'_j \cdot s''_j
	\end{aligned}
	\label{eq:afs:feature-set-size}
\end{equation}
Finally, we replace each product $s'_j \cdot s''_j$ with an auxiliary variable~$t_j$, bound by additional constraints, to linearize it~\cite{mosek2022modeling}:
\begin{equation}
	\begin{aligned}
		t_j \leq& s'_j \\
		t_j \leq& s''_j \\
		1 + t_j \geq& s'_j + s''_j \\
		t_j \in& \{0,1\}
	\end{aligned}
	\label{eq:afs:product-linear}
\end{equation}
Combining Equations~\ref{eq:afs:dice-rearranged},~\ref{eq:afs:feature-set-size}, and~\ref{eq:afs:product-linear}, we obtain a set of constraints that only involve linear expressions of binary decision variables.
In particular, there are only sum expressions and multiplications with constants but no products between variables.
If one feature set is known, i.e., either $s'$ or $s''$ is fixed, Equation~\ref{eq:afs:feature-set-size} only multiplies variables with constants and is already linear without Equation~\ref{eq:afs:product-linear}.
\end{proof}
Given a suitable objective function, which we discuss later, linear constraints allow using a broad range of solvers.
As an alternative formulation, one could also encode such constraints into propositional logic (\textsc{SAT})~\cite{ulrich2022selecting}.

If the set sizes $|F'|$ and $|F''|$ are constant, e.g., user-defined, Equation~\ref{eq:afs:dice-rearranged} implies that the threshold~$\tau$ has a linear relationship to the maximum number of overlapping features~$|F' \cap F''|$.
This correspondence eases the interpretation of~$\tau$ and makes us use the Dice dissimilarity in the following.
In contrast, the Jaccard distance exhibits a non-linear relationship between $\tau$ and the overlap size, which follows from re-arranging Equation~\ref{eq:afs:jaccard} in combination with Definition~\ref{def:afs:single-alternative}:
\begin{equation}
	\begin{aligned}
		d_{\text{Jacc}}(F',F'') &= & 1 - \frac{|F' \cap F''|}{|F'| + |F''| - |F' \cap F''|} &\geq \tau \\
		&\Leftrightarrow & |F' \cap F''| &\leq \frac{1 - \tau}{2 - \tau} \cdot (|F'| + |F''|)
	\end{aligned}
	\label{eq:afs:jaccard-rearranged}
\end{equation}
Further, if $|F'| = |F''|$, as in our experiments, the Dice dissimilarity (cf.~Equation~\ref{eq:afs:dice-rearranged}) becomes identical to several other set-dissimilarity measures~\cite{egghe2009new}.
The parameter~$\tau$ then directly expresses which fraction of features in one set needs to differ from the other set and vice versa, which further eases interpretability:
\begin{equation}
	d_{\text{Dice}}(F',F'') \geq \tau \Leftrightarrow |F' \cap F''| \leq (1 - \tau) \cdot |F'| = (1 - \tau) \cdot |F''|
	\label{eq:afs:dice-rearranged-equal-size}
\end{equation}
Thus, if users are uncertain how to choose $\tau$ and $|F'|$ is reasonably small, they can try out all values of $\tau \in \{l / |F'|\}$ with $l \in \{1, \dots, |F'|\}$.
In particular, these $|F'|$~unique values of $\tau$ suffice to produce all distinct solutions that one could obtain with an arbitrary $\tau \in (0,1]$.

\subsubsection{Multiple Alternatives}
\label{sec:afs:approach:constraints:multiple}

If users desire multiple alternative feature sets rather than only one, we can determine these alternatives sequentially or simultaneously.
The number of alternatives~$a \in \mathbb{N}_0$ is a parameter to be set by the user.
The overall number of feature sets is $a + 1$ since we deem one feature set the `original' one.
Table~\ref{tab:afs:seq-sim-comparison} compares the sizes of the optimization problems for these two search problems.

\begin{table}[t]
	\centering
	\caption{Size of the optimization problem, i.e., number of variables and constraints, for $a$~alternatives ($a + 1$~feature sets overall) and $n$ features.}
	\renewcommand*{\arraystretch}{1.3}
	\begin{tabular}{lccc}
		\toprule
		& \multicolumn{2}{c}{Sequential search} & \multirow{2}{*}{Simult. search} \\
		\cmidrule(lr){2-3}
		& $l$-th Alternative & Summed & \\
		\midrule
		Decision variables~$s$ & $n$ & $ (a+1) \cdot n$ & $(a+1) \cdot n$ \\
		Linearization variables~$t$ & $0$ & $0$ & $\frac{a \cdot (a+1) \cdot n}{2}$ \\
		Alternative constraints & $l$ & $\frac{a \cdot (a+1)}{2}$ & $\frac{a \cdot (a+1)}{2}$ \\
		Linearization constraints & $0$ & $0$ & $\frac{3 \cdot a \cdot (a+1) \cdot n}{2}$ \\
		\bottomrule
	\end{tabular}
	\label{tab:afs:seq-sim-comparison}
\end{table}

\paragraph{Sequential-search problem}

In the sequential-search problem, users obtain several alternatives iteratively, with one feature set per iteration.
We constrain this new set to be an alternative to all previously found ones, which are given in the set~$\mathbb{F}$:
\begin{definition}[Sequential alternative]
	A feature set~$F''$ is an alternative to a set of feature sets~$\mathbb{F}$ (and vice versa) if $F''$ is a single alternative (cf.~Definition~\ref{def:afs:single-alternative}) to each $F' \in \mathbb{F}$.
	\label{def:afs:sequential-alternative}
\end{definition}
One could also think of less strict constraints, e.g., requiring only the average dissimilarity to all previously found feature sets to pass a threshold~$\tau$.
However, definitions like the latter may allow some feature sets to overlap heavily or even be identical if other feature sets are very dissimilar.
Thus, we require pairwise dissimilarity in Definition~\ref{def:afs:sequential-alternative}.
Combining Equation~\ref{eq:afs:afs-general} with Definition~\ref{def:afs:sequential-alternative}, we obtain the following optimization problem for each iteration of the search:
\begin{equation}
	\begin{aligned}
		\max_s &\quad Q(s,X,y) \\
		\text{subject to:} &\quad \forall F' \in \mathbb{F}:~d(F_s,F') \geq \tau
	\end{aligned}
	\label{eq:afs:afs-sequential}
\end{equation}
The full textual problem definition corresponding to Equation~\ref{eq:afs:afs-sequential} is the following:
\begin{definition}[Sequential-search problem for one alternative feature set]
	Given
	\begin{itemize}[noitemsep]
		\item a dataset~$X \in \mathbb{R}^{m \times n}$ with prediction target~$y \in Y^m$,
		\item a set~$\mathbb{F}$ of existing feature sets for~$X$,
		\item a symmetric set-dissimilarity measure~$d(\cdot) \in [0,1]$ with $d(\cdot) = 1 \rightarrow$ no set overlap,
		\item and a dissimilarity threshold~$\tau \in [0,1]$,
	\end{itemize}
	sequential search for one alternative feature set is the problem of making feature-selection decisions~$s \in \{0,1\}^n$ that maximize a given notion of feature-set quality~$Q(s,X,y)$ while making the corresponding feature set~$F_s$ a sequential alternative to~$\mathbb{F}$ (cf.~Definition~\ref{def:afs:sequential-alternative}).
	\label{def:afs:alternative-feature-selection-sequential}
\end{definition}
The objective function remains the same as for a single alternative ($|\mathbb{F}| = 1$), i.e., we only optimize the quality of one feature set at once.
In particular, with $\mathbb{F} = \emptyset$ in the first iteration, we optimize for the `original' feature set, which is the same as in conventional feature selection without constraints for alternatives.
Thus, the number of variables in the optimization problem is independent of the number of alternatives~$a$.
Instead, we solve the optimization problem repeatedly; each alternative only adds one constraint to the problem.
As we always compare only one variable feature set to existing, constant feature sets, we also do not need to introduce auxiliary variables as in Equation~\ref{eq:afs:product-linear}.
Thus, we expect the runtime of exact, e.g., solver-based, sequential search to scale well with the number of alternatives.
Further runtime gains may arise if the solver keeps a state between iterations and can warm-start.

However, as the search space becomes narrower over iterations, feature-set quality can deteriorate with each further alternative.
In particular, multiple alternatives from the same sequential search might differ significantly in their quality.
As a remedy, users can decide after each iteration if the feature-set quality is already unacceptably low or if another alternative should be found.
In particular, users do not need to define the number of alternatives~$a$ a priori.

\paragraph{Simultaneous-search problem}

In the simultaneous-search problem, users obtain multiple alternatives at once, so they need to decide on the number of alternatives~$a$ beforehand.
We use pairwise dissimilarity constraints for alternatives again:
\begin{definition}[Simultaneous alternatives]
	A set of feature sets~$\mathbb{F}$ contains simultaneous alternatives if each feature set~$F' \in \mathbb{F}$ is a single alternative (cf.~Definition~\ref{def:afs:single-alternative}) to each other feature set~$F'' \in \mathbb{F}$ with $F' \neq F''$.
	\label{def:afs:simultaneous-alternative}
\end{definition}
Combining Equation~\ref{eq:afs:afs-general} with Definition~\ref{def:afs:simultaneous-alternative}, we obtain the following optimization problem for $a+1$ feature sets:
\begin{equation}
	\begin{aligned}
		\max_{s^{(0)}, \dots, s^{(a)}} &\quad \operatorname*{agg}_{l \in \{0, \dots, a\}} Q(s^{(l)},X,y) \\
		\text{subject to:} &\quad \forall l_1, l_2 \in \{0, \dots, a\},~l_1 \neq l_2:~d(F_{s^{(l_1)}},F_{s^{(l_2)}}) \geq \tau
	\end{aligned}
	\label{eq:afs:afs-simultaneous}
\end{equation}
The full textual problem definition corresponding to Equation~\ref{eq:afs:afs-simultaneous} is the following:
\begin{definition}[Simultaneous-search problem for alternative feature sets]
	Given
	\begin{itemize}[noitemsep]
		\item a dataset~$X \in \mathbb{R}^{m \times n}$ with prediction target~$y \in Y^m$,
		\item the number of alternatives~$a \in \mathbb{N}_0$,
		\item an aggregation operator $\text{agg}(\cdot): \mathbb{R}^{a+1} \to \mathbb{R}$ for feature-set qualities,
		\item a symmetric set-dissimilarity measure~$d(\cdot) \in [0,1]$ with $d(\cdot) = 1 \rightarrow$ no set overlap,
		\item and a dissimilarity threshold~$\tau \in [0,1]$,
	\end{itemize}
	simultaneous search for alternative feature sets is the problem of making feature-selection decisions~$s^{(l)} \in \{0,1\}^n$ for $l \in \{0, \dots, a\}$ that maximize a given notion of feature-set quality~$Q(s,X,y)$ aggregated over the alternatives with $\operatorname*{agg}_{l \in \{0, \dots, a\}} Q(s^{(l)},X,y)$ while making the corresponding feature sets $\mathbb{F} = \{F_{s^{(0)}},$ $\dots, F_{s^{(a)}}\}$ simultaneous alternatives (cf.~Definition~\ref{def:afs:simultaneous-alternative}).
	\label{def:afs:alternative-feature-selection-simultaneous}
\end{definition}
In contrast to the sequential case (cf.~Definition~\ref{def:afs:alternative-feature-selection-sequential}), the problem requires $a+1$ instead one decision vector~$s$, and a modified objective function.
The operator~$\text{agg}(\cdot)$ defines how to aggregate the feature-set qualities of the alternatives.
In our experiments, we consider the sum as well as the minimum to instantiate~$\text{agg}(\cdot)$, which we refer to as \emph{sum-aggregation} and \emph{min-aggregation}.
The latter explicitly fosters balanced feature-set qualities.
Appendix~\ref{sec:afs:appendix:simultaneous-objective-aggregation} discusses these two aggregation operators and additional ideas for balancing qualities in detail.

Runtime-wise, we expect exact simultaneous search to scale worse with the number of alternatives than exact sequential search, as it tackles one large optimization problem instead of multiple smaller ones.
In particular, the number of decision variables increases linearly with the number of alternatives~$a$.
Also, for each feature and each pair of alternatives, we need to introduce an auxiliary variable if we want to obtain linear constraints (cf.~Equation~\ref{eq:afs:product-linear} and Table~\ref{tab:afs:seq-sim-comparison}).

In contrast to the greedy definition of sequential search, simultaneous search optimizes alternatives globally.
Thus, the simultaneous procedure should yield the same or higher average feature-set quality for the same number of alternatives.
Also, the quality can be more evenly distributed over the alternatives, as opposed to the dropping quality over the course of the sequential procedure.
However, increasing the number of alternatives still has a negative effect on the average feature-set quality.
Further, as opposed to the sequential procedure, there are no intermediate steps where users could interrupt the search.

\subsection{Objective Functions -- Finding Alternatives}
\label{sec:afs:approach:objectives}

In this section, we discuss how to find alternative feature sets.
In particular, we describe how to solve the optimization problem from Section~\ref{sec:afs:approach:problem} for the different categories of feature-set quality measures from Section~\ref{sec:afs:fundamentals:quality}.
We distinguish between white-box optimization (cf.~Section~\ref{sec:afs:approach:objectives:white-box}), black-box optimization (cf.~Section~\ref{sec:afs:approach:objectives:black-box}), and embedding alternatives (cf.~Section~\ref{sec:afs:approach:objectives:embedding}).

\subsubsection{White-Box Optimization}
\label{sec:afs:approach:objectives:white-box}

If the feature-set quality function~$Q(s,X,y)$ is sufficiently simple, one can tackle alternative feature selection with a suitable solver for white-box optimization problems.
We already showed that our notion of alternative feature sets results in 0-1 integer linear constraints (cf.~Proposition~\ref{prop:afs:linear-constraints}).
We now discuss several feature-selection methods with objectives that admit formulating a 0-1 integer linear problem.
Appendix~\ref{sec:afs:appendix:multivariate-filter-objectives}~describes feature-selection methods we did not include in our experiments.

\paragraph{Univariate filter feature selection}

For univariate filter feature selection, the objective function is linear by default.
In particular, these methods decompose the quality of a feature set into the qualities of the individual features:
\begin{equation}
	\max_s \quad Q_{\text{uni}}(s,X,y) = \sum_{j=1}^{n} q(X_{\cdot{}j},y) \cdot s_j = \sum_{j=1}^{n} q_j \cdot s_j
	\label{eq:afs:univariate-filter}
\end{equation}
Here, $q(\cdot)$ typically is a bivariate dependency measure, e.g., mutual information~\cite{kraskov2004estimating} or the absolute value of Pearson correlation, to quantify the relationship between one feature and the prediction target.

For this objective, Appendix~\ref{sec:afs:appendix:univariate-complete-optimization-problem} specifies the complete optimization problem, including the constraints for alternatives.
Appendix~\ref{sec:afs:appendix:univariate-pre-selection} describes how to potentially speed up optimization by leveraging the monotonicity of the objective.
Section~\ref{sec:afs:approach:univariate-heuristics} proposes heuristic search methods for this objective.

Instead of an integer problem, one could formulate a weighted partial maximum satisfiability (\textsc{MaxSAT}) problem~\cite{bacchus2021maximum, li2021maxsat}, i.e., a weighted \textsc{Max One} problem~\cite{khanna1997complete}.
In particular, Equation~\ref{eq:afs:univariate-filter} is a sum of weighted binary variables, and the constraints for alternatives can be turned into SAT formulas with a cardinality encoding~\cite{sinz2005towards} for the sum expressions.

\paragraph{Post-hoc feature importance}

Technically, one can also insert values of post-hoc feature-importance scores into Equation~\ref{eq:afs:univariate-filter}.
For example, one can pre-compute permutation importance~\cite{breiman2001random} or SAGE scores~\cite{covert2020understanding} for each feature and use them as univariate feature qualities~$q(X_{\cdot{}j},y)$.
However, such post-hoc importance scores typically evaluate the quality of each feature in the presence of other features. 
For example, a feature may only be important in subsets where another feature is present, due to feature interaction, but unimportant otherwise, and a post-hoc importance method like SHAP~\cite{lundberg2017unified} may reflect both these aspects.
In contrast, Equation~\ref{eq:afs:univariate-filter} implicitly assumes feature independence and cannot adapt importance scores depending on whether other features are selected.
Thus, treating pre-computed post-hoc importance scores as univariate feature qualities in the optimization objective can serve as a heuristic but may not faithfully represent the feature qualities in a particular feature subset~\cite{fryer2021shapley}.

\paragraph{FCBF}

The Fast Correlation-Based Filter (FCBF)~\cite{yu2003feature} bases on the notion of predominance:
Each selected feature's correlation with the prediction target must exceed a user-defined threshold as well as the correlation of each other selected feature with the given one.
While the original FCBF uses a heuristic search to find predominant features, we propose a formulation as a constrained optimization problem to enable a white-box optimization for alternatives:
\begin{equation}
	\begin{aligned}
		\max_s &\quad Q_{\text{FCBF}}(s,X,y) = \sum_{j=1}^{n} q(X_{\cdot{}j},y) \cdot s_j \\
		\text{subject to:} &\quad \forall j_1, j_2 \in \{1, \dots, n\},~j_1 \neq j_2,~(*): s_{j_1} + s_{j_2} \leq 1 \\
		\text{with } (*) \text{:} &\quad q(X_{\cdot{}j_1},y) \leq q(X_{\cdot{}j_2}, X_{\cdot{}j_1}) \\
	\end{aligned}
	\label{eq:afs:fcbf}
\end{equation}
We drop the original FCBF's threshold on feature-target correlation and maximize the latter instead, as in the univariate-filter case.
This change could produce large feature sets that contain many low-quality features.
As a countermeasure, one can constrain the feature-set sizes, as we do in our experiments.
Additionally, one could also filter out the features with low target correlation before optimization.
Further, we keep FCBF's constraints on feature-feature correlation.
In particular, we prevent the simultaneous selection of two features if the correlation between them is at least as high as one of the features' correlation to the target.
Since the condition~$q(X_{\cdot{}j_1},y) \leq q(X_{\cdot{}j_2}, X_{\cdot{}j_1})$ in Equation~\ref{eq:afs:fcbf} does not depend on the decision variables~$s$, one can check whether it holds before formulating the optimization problem and add the corresponding linear constraint $s_{j_1} + s_{j_2} \leq 1$ only for feature pairs where it is needed.

\paragraph{mRMR}

Minimal Redundancy Maximal Relevance (mRMR)~\cite{peng2005feature} combines two criteria, i.e., feature relevance and feature redundancy.
Relevance corresponds to the dependency between features and prediction target, which should be maximized, as for univariate filters.
Redundancy, in turn, corresponds to the dependency between features, which should be minimized.
Both terms are averaged over the selected features.
Using a bivariate dependency measure~$q(\cdot)$, the objective is maximizing the following difference between relevance and redundancy:
\begin{equation}
	\begin{aligned}
		\max_s \quad Q_{\text{mRMR}}(s,X,y) &= \frac{\sum_{j=1}^{n} q(X_{\cdot{}j},y) \cdot s_j}{\sum_{j=1}^{n} s_j} \\
		&- \frac{\sum_{j_1=1}^{n} \sum_{j_2=1}^{n} q(X_{\cdot{}j_1}, X_{\cdot{}j_2}) \cdot s_{j_1} \cdot s_{j_2}}{(\sum_{j=1}^{n} s_j)^2}
	\end{aligned}
	\label{eq:afs:mrmr}
\end{equation}
If one knows the feature-set size $\sum_{j=1}^{n} s_j$ to be a constant~$k$, the denominators of both fractions are constant, so the objective leads to a quadratic-programming problem~\cite{nguyen2014effective, rodriguez2010quadratic}.
If one additionally replaces each product term $s_{j_1} \cdot s_{j_2}$ according to Equation~\ref{eq:afs:product-linear}, the problem becomes linear.
However, there is a more efficient linearization~\cite{nguyen2009optimizing, nguyen2010towards}, which we use in our experiments:
\begin{equation}
	\begin{aligned}
		\max_s &\quad & Q_{\text{mRMR}}(s,X,y) &= \frac{\sum_{j=1}^{n} q(X_{\cdot{}j},y) \cdot s_j}{k} - \frac{\sum_{j=1}^{n} z_j}{k \cdot (k-1)} \\
		\text{subject to:} &\quad \forall j_1: & A_{j_1} &:= \sum_{j_2 \neq j_1} q(X_{\cdot{}j_1}, X_{\cdot{}j_2}) \cdot s_{j_2} \\
		&\quad \forall j: & z_j &\geq M \cdot (s_j - 1) + A_j \\
		&\quad \forall j: & z_j &\in \mathbb{R}_{\geq 0} \\
		\text{with indices:} &\quad & j, j_1, j_2 &\in \{1, \dots, n\}
	\end{aligned}
	\label{eq:afs:mrmr-linear}
\end{equation}
Here, $A_{j_1}$ is the sum of all redundancy terms related to the feature with index~$j_1$, i.e., the summed dependency value between this feature and all other selected features.
Thus, one can use one real-valued auxiliary variable $z_j$ for each feature instead of one new binary variable for each pair of features.
$A_j$ does not need a separate variable but can be directly inserted in the subsequent constraint, so we wrote `$:=$' instead of `$=$'.
Since redundancy should be minimized, $z_j$ assumes the value of $A_j$ with equality if the feature with index~$j$ is selected~($s_j=1$) and is zero otherwise ($s_j=0$).
To this end, $M$ is a large positive value that deactivates the constraint $z_j \geq A_j$ if $s_j=0$.

Since Equation~\ref{eq:afs:mrmr-linear} assumes the feature-set size~$k \in \mathbb{N}$ to be user-defined before optimization, it requires fewer auxiliary variables and constraints than the more general formulation in~\cite{nguyen2009optimizing, nguyen2010towards}.
Additionally, in accordance with~\cite{nguyen2014effective}, we assign a value of zero to the self-redundancy terms $q(X_{\cdot{}j},X_{\cdot{}j})$, effectively excluding them from the objective function.
Thus, the redundancy term uses $k \cdot (k-1)$ instead of $k^2$ for averaging.

\subsubsection{Black-Box Optimization}
\label{sec:afs:approach:objectives:black-box}

If feature-set quality does not have an expression suitable for white-box optimization, one has to treat it as a black-box function when searching for alternatives.
This situation applies to wrapper feature-selection methods, which use prediction models to assess feature-set quality.
One can optimize such black-box functions with search heuristics that systematically iterate over candidate feature sets.
However, search heuristics often assume an unconstrained search space and may propose candidate feature sets that are not alternative enough.
We see four ways to address this issue:

\paragraph{Enumerating feature sets}

Instead of using a search heuristic, one may enumerate all feature sets that are alternative enough.
E.g., one can iterate over all feature sets and sort out those violating the constraints or use a solver to enumerate all valid alternatives directly.
Both approaches are usually very inefficient, as there can be a vast number of alternatives.

\paragraph{Sampling feature sets}

Instead of considering all possible alternatives, one can also sample a limited number.
E.g., one could sample from all feature sets but remove samples that are not alternative enough.
However, if the number of valid alternatives is small, this approach might need many samples.
One could also sample with the help of a solver.
However, uniform sampling from a constrained space is a computationally hard problem, possibly harder than determining if a valid solution exists or not~\cite{ermon2012uniform}.

\paragraph{Multi-objective optimization}

If one phrases alternative feature selection as a multi-objective problem (cf.~Section~\ref{sec:afs:approach:problem}), there are no hard constraints anymore, and one could apply a standard multi-objective black-box search procedure.
However, as explained in Section~\ref{sec:afs:approach:problem}, we decided to pursue a single-objective formulation with constraints.

\paragraph{Adapting search}

One can adapt an existing search heuristic to consider the constraints for alternatives.
One idea is to prevent the search from producing feature sets that violate the constraints or at least make the latter less likely, e.g., with a penalty in the objective function.
Another idea is to `repair' feature sets in the search that violate constraints, e.g., replacing them with the most similar valid feature sets.
Such solver-assisted search approaches are common in search methods for software feature models~\cite{guo2018preserve, henard2015combining, white2010automated}, and our following method for wrapper feature selection falls into this category as well.
Finally, one could also apply solver-based repair to sampled feature sets.

\begin{algorithm}[t]
	\DontPrintSemicolon
	\KwIn{Dataset $X \in \mathbb{R}^{m \times n}$, \newline
		Prediction target $y \in Y^m$, \newline
		Quality function $Q(S,X,y)$ for sets of feature sets, \newline
		Set~$C$ of constraints for alternatives, \newline
		Maximum number of iterations $\mathit{max\_iters} \in \mathbb{N}$}
	\KwOut{Set of feature-selection decision vectors $S = \{s^{(0)}, \dots, s^{(a)}\}$}
	\BlankLine
	$S^{\text{opt}} \leftarrow \text{solve}(C)$ \tcp*{Initial alternatives} \label{al:afs:greedy-wrapper:line:init}
	$\mathit{iters} \leftarrow 1$ \tcp*{Number of iterations = solver calls}
	\lIf(\tcp*[f]{No valid alternatives exist}){$S^{\text{opt}} = \emptyset$}{\Return{$\emptyset$}} \label{al:afs:greedy-wrapper:line:no-valid}
	$j_1 \leftarrow 1$ \tcp*{Indices of features to be swapped} \label{al:afs:greedy-wrapper:line:swap-init-start}
	$j_2 \leftarrow j_1 + 1$\; \label{al:afs:greedy-wrapper:line:swap-init-end}
	\While{$\mathit{iters} < \mathit{max\_iters}$ \textbf{and} $j_1 < n$}{ \label{al:afs:greedy-wrapper:line:stop}
		$S \leftarrow $ solve(Equation~\ref{eq:afs:greedy-wrapper-problem-linear}) \tcp*{Try swap} \label{al:afs:greedy-wrapper:line:swap}
		$\mathit{iters} \leftarrow \mathit{iters} + 1$\;
		\If(\tcp*[f]{Swap if improved}){$S \neq \emptyset$ \textbf{and} $Q(S,X,y) > Q(S^{\text{opt}},X,y)$}{ \label{al:afs:greedy-wrapper:line:improved-condition}
			$S^{\text{opt}} \leftarrow S$\; \label{al:afs:greedy-wrapper:line:improved-start}
			$j_1 \leftarrow 1$ \tcp*{Reset swap-feature indices}
			$j_2 \leftarrow j_1 + 1$\; \label{al:afs:greedy-wrapper:line:improved-end}
		}
		\ElseIf(\tcp*[f]{Try next swap; advance one index}){$j_2 < n$}{ \label{al:afs:greedy-wrapper:line:next-start}
			$j_2 \leftarrow j_2 + 1$\;
		}
		\Else(\tcp*[f]{Try next swap; advance both indices}){
			$j_1 \leftarrow j_1 + 1$\;
			$j_2 \leftarrow j_1 + 1$\; \label{al:afs:greedy-wrapper:line:next-end}
		}
	}
	\Return{$S^{\text{opt}}$}
	\caption{\emph{Greedy Wrapper} for alternative feature selection.}
	\label{al:afs:greedy-wrapper}
\end{algorithm}

\paragraph{Greedy Wrapper}

For wrapper feature selection in our experiments, we propose a novel hill-climbing procedure, displayed in Algorithm~\ref{al:afs:greedy-wrapper}.
Unlike standard hill climbing for feature selection~\cite{kohavi1997wrappers}, our procedure observes constraints.
First, the algorithm uses a solver to find one solution that is alternative enough for the set of constraints~$C$ (Line~\ref{al:afs:greedy-wrapper:line:init}) and stores it as the currently best solution~$S^{\text{opt}}$.
Thus, the algorithm has a valid starting point and can always return a solution unless there are no valid solutions at all (Line~\ref{al:afs:greedy-wrapper:line:no-valid}).
Note that the solution is not only one feature-selection decision vector~$s$ but a set~$S$ of them, to enable simultaneous search.
For sequential search, $|S| = 1$ and $a=0$.
Also, we adapt the notion of feature-set quality~$Q(S,X,y)$ in this algorithm to support a set of feature sets, encompassing the aggregation operator~$\text{agg}(\cdot)$ for simultaneous search (cf.~Definition~\ref{def:afs:alternative-feature-selection-simultaneous}).

Next, the algorithm tries `swapping' two features, i.e., selecting them if they were deselected or deselecting them if they were selected (Line~\ref{al:afs:greedy-wrapper:line:swap}).
The corresponding swap indices~$j_1$ and $j_2$ start at~$1$ and~$2$, respectively (Lines~\ref{al:afs:greedy-wrapper:line:swap-init-start}--\ref{al:afs:greedy-wrapper:line:swap-init-end}).
For simultaneous search, we swap the affected features in each alternative.
As the swap may violate constraints, the algorithm calls the solver to find the solution~$S$ that is closest to the currently best one $S^{\text{opt}}$ while satisfying the swap constraints and the constraints for alternatives~$C$.
To this end, we measure the similarity between feature-selection decisions with the Hamming distance~\cite{choi2010survey}, i.e., how many values of decision variables differ between~$S$ and~$S^{\text{opt}}$.
Overall, we define the maximization problem for Line~\ref{al:afs:greedy-wrapper:line:swap} of Algorithm~\ref{al:afs:greedy-wrapper} as follows:

\begin{equation}
	\begin{aligned}
		\max_{s^{(0)}, \dots, s^{(a)}} &\quad & \text{sim}(S, S^{\text{opt}}) &= \sum_{l=0}^{a} \sum_{j=1}^{n} \left( s^{(l)}_j \leftrightarrow s^{(\text{opt, } l)}_{j} \right) \\
		\text{subject to:} &\quad & C & \\
		&\quad \forall l \in \{0, \dots, a\}: & s^{(l)}_{j_1} &\leftrightarrow \neg s^{(\text{opt, } l)}_{j_1} \\
		&\quad \forall l \in \{0, \dots, a\}: & s^{(l)}_{j_2} &\leftrightarrow \neg s^{(\text{opt, } l)}_{j_2} \\
	\end{aligned}
	\label{eq:afs:greedy-wrapper-problem}
\end{equation}
The values of $s^{(\text{opt, } l)}$, $j_1$, and $j_2$ in this problem are fixed based on Algorithm~\ref{al:afs:greedy-wrapper}, while $s^{(l)}$ remains variable.
Thus, we obtain a 0-1 integer linear program:

\begin{equation}
	\begin{aligned}
		\max_{s^{(0)}, \dots, s^{(a)}} &&\quad \text{sim}(S, S^{\text{opt}}) &\quad = \sum_{l=0}^{a} \Big( \sum\limits_{\substack{j \in \{1, \dots, n\} \\ s^{(\text{opt, } l)}_{j} = 1}} s^{(l)}_j + \sum\limits_{\substack{j \in \{1, \dots, n\} \\ s^{(\text{opt, } l)}_{j} = 0}} \left( 1- s^{(l)}_j \right) \Big) \\
		\text{subject to:} &&\quad &\quad C \\
		&&\quad \forall l \in \{0, \dots, a\}: &\quad s^{(l)}_{j_1} = 1 - s^{(\text{opt, } l)}_{j_1} \\
		&&\quad \forall l \in \{0, \dots, a\}: &\quad s^{(l)}_{j_2} = 1 - s^{(\text{opt, } l)}_{j_2} \\
	\end{aligned}
	\label{eq:afs:greedy-wrapper-problem-linear}
\end{equation}

If a solution~$S$ for Equation~\ref{eq:afs:greedy-wrapper-problem-linear} exists and its quality~$Q(S,X,y)$ improves upon the currently best solution~$S^{\text{opt}}$, the algorithm proceeds with the new solution, attempting again to swap the first and second features (Lines~\ref{al:afs:greedy-wrapper:line:improved-start}--\ref{al:afs:greedy-wrapper:line:improved-end}).
Otherwise, it tries to swap another pair of features (Lines~\ref{al:afs:greedy-wrapper:line:next-start}--\ref{al:afs:greedy-wrapper:line:next-end}).
Specifically, we assess only one solution per swap instead of exhaustively enumerating and evaluating all valid solutions involving the swap.

The algorithm terminates if it reaches a local optimum, i.e., no swap leads to an improvement, or a fixed number of iterations~$\mathit{max\_iters}$ (Line~\ref{al:afs:greedy-wrapper:line:stop}).
We define the iteration count as the number of solver calls, i.e., attempts to generate valid alternatives.
This number also bounds the number of prediction models trained.
However, we only train a model for valid solutions (Line~\ref{al:afs:greedy-wrapper:line:improved-condition}), and not all solver invocations may yield one.

\subsubsection{Embedding Alternatives}
\label{sec:afs:approach:objectives:embedding}

If feature selection is embedded into a prediction model, there is no general approach for finding alternative feature sets.
Instead, one would need to embed the search for alternatives into model training as well.
Thus, we leave the formulation of specific approaches open for future work.
E.g., one could adapt the training of decision trees to not split on a feature if the resulting feature set of the tree was too similar to a given feature set.
As another example, there are various formal encodings of prediction models, e.g., as \textsc{SAT} formulas~\cite{narodytska2018learning, schidler2021sat, yu2021learning}, where `training' already uses a solver.
In such representations, one may directly add constraints for alternatives.

\subsection{Time Complexity}
\label{sec:afs:approach:complexity}

In this section, we analyze the time complexity of alternative feature selection.
In particular, we study the scalability regarding the number of features~$n \in \mathbb{N}$, also considering the feature-set size~$k \in \mathbb{N}$ and the number of alternatives~$a \in \mathbb{N}_0$.
Section~\ref{sec:afs:approach:complexity:exhaustive} discusses exhaustive search, which works for arbitrary feature-selection methods, while Section~\ref{sec:afs:approach:complexity:univariate} examines the optimization problem with univariate feature qualities (cf.~Equation~\ref{eq:afs:univariate-filter}).
Section~\ref{sec:afs:approach:complexity:summary} summarizes key results.

\subsubsection{Exhaustive Search for Arbitrary Feature-Selection Methods}
\label{sec:afs:approach:complexity:exhaustive}

An exhaustive search over the entire search space is the arguably simplest though inefficient approach to finding alternative feature sets.
This approach provides an upper bound for the time complexity of a runtime-optimal search algorithm.
In this section, we assume unit costs for elementary arithmetic operations like addition, multiplication, and comparison of two numbers.

\paragraph{Conventional feature selection}

In general, the search space of feature selection grows exponentially with~$n$, even without alternatives.
In particular, there are $2^n - 1$ possibilities to form a single non-empty feature set of arbitrary size.
For a fixed feature-set size~$k$, there are $\binom{n}{k} = \frac{n!}{k! \cdot (n-k)!} \leq n^k$ solution candidates.
In an exhaustive search, we iterate over these feature sets:
\begin{proposition}[Complexity of exhaustive conventional feature selection]
	Exhaustive search for one feature set of size~$k \in \mathbb{N}$ from $n$~features has a time complexity of~$O(n^k)$ without the cost of evaluating the objective.
	\label{prop:afs:complexity-exhaustive-conventional}
\end{proposition}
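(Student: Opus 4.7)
The plan is to directly count the candidate feature sets and bound the enumeration cost. The number of subsets of $\tilde{F}$ of cardinality exactly $k$ is $\binom{n}{k}$, and the standard bound
\[
\binom{n}{k} = \frac{n(n-1)\cdots(n-k+1)}{k!} \leq \frac{n^k}{k!} \leq n^k
\]
follows by noting that each of the $k$ factors in the numerator is at most $n$. Hence the search space has size $O(n^k)$.

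Next, I would argue that an exhaustive procedure can traverse these $\binom{n}{k}$ candidates so that the per-candidate bookkeeping is at most polynomial in $k$ and therefore is absorbed into the $O(n^k)$ term (treating $k$ as fixed with respect to the asymptotics in $n$). A concrete scheme is to iterate over strictly increasing index tuples $(j_1, \dots, j_k)$ with $1 \leq j_1 < \dots < j_k \leq n$ in lexicographic order; advancing the iterator to the next tuple takes $O(k)$ elementary operations under the unit-cost assumption stated in the section. Because the proposition explicitly excludes the cost of evaluating $Q(s, X, y)$, no further work per candidate needs to be accounted for.

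Combining the two observations, the total number of elementary operations is $O(k \cdot \binom{n}{k}) = O(k \cdot n^k) = O(n^k)$, giving the claimed bound. There is no real obstacle here: the only subtlety is being explicit that $k$ is treated as a parameter independent of $n$ in the asymptotic statement, so that factors depending only on $k$ do not contribute to the growth rate in $n$.
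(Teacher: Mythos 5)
Your proof is correct and follows essentially the same route as the paper, which simply counts the $\binom{n}{k} \leq n^k$ size-$k$ candidate sets and iterates over them (the paper does not even spell out the per-candidate iteration cost). Your added detail on lexicographic enumeration and the $O(k)$ advancement cost, absorbed into the bound for fixed $k$, is a harmless refinement of the same argument.
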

Evaluating the objective means computing the quality of each solution candidate so that we can determine the best feature set in the end.
The cost of this step depends on the feature-selection method but should usually be polynomial in~$n$.
Even better, since feature-set quality typically only depends on selected features rather than unselected ones, this cost may be polynomial in~$k \ll n$.

If we assume $k \ll n,~k \in O(1)$, i.e., $k$ being a small constant, independent from~$n$, then the complexity in Proposition~\ref{prop:afs:complexity-exhaustive-conventional} is polynomial rather than exponential in~$n$.
This assumption makes sense for feature selection, where one typically wants to obtain a small feature set from a high-dimensional dataset.
However, the exponent~$k$ may still render an exhaustive search practically infeasible.
In terms of parameterized complexity, the problem resides in class~$\mathcal{XP}$ since the complexity term has the form $O(f(k) \cdot n^{g(k)})$~\cite{downey1997parameterized}, here with parameter~$k$ and functions $f(k) = 1$, $g(k) = k$.

\paragraph{Sequential search}

Like conventional feature selection, sequential search for alternatives (cf.~Definition~\ref{def:afs:sequential-alternative}) optimizes feature sets one at a time.
However, not all size-$k$ feature sets are valid anymore.
In particular, the constraints for alternatives put an extra cost on each solution candidate.
Constraint checking involves iterating over all existing feature sets and features to compute the dissimilarity between sets (cf.~Equation~\ref{eq:afs:afs-sequential-complete}).
This procedure entails a cost of~$O(a \cdot n)$ for each new alternative and~$O((a+1)^2 \cdot n)$ for the whole sequential search with $a$~alternatives.
Combining this cost with Proposition~\ref{prop:afs:complexity-exhaustive-conventional}, we obtain the following proposition:
\begin{proposition}[Complexity of exhaustive sequential search]
	Exhaustive sequential search (cf.~Definition~\ref{def:afs:alternative-feature-selection-sequential}) for $a \in \mathbb{N}_0$~alternative feature sets of size~$k \in \mathbb{N}$ from $n$~features has a time complexity of~$O((a+1)^2 \cdot n^{k+1})$ without the cost of evaluating the objective.
	\label{prop:afs:complexity-exhaustive-sequential}
\end{proposition}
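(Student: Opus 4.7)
My plan is to build directly on Proposition~\ref{prop:afs:complexity-exhaustive-conventional}, extending it by accounting for the added cost of constraint checking that distinguishes sequential search from conventional feature selection. The key observation is that the sequential problem~(\ref{eq:afs:afs-sequential}) is solved by invoking a conventional exhaustive search in each of the $a$ iterations, with the only new ingredient being the dissimilarity constraints against the previously found feature sets.

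First, I would bound the cost of checking the constraint $d(F_s, F') \geq \tau$ for a single pair of feature sets. Under the assumption of unit-cost arithmetic, the Dice dissimilarity (cf.~Equation~\ref{eq:afs:dice}) can be evaluated in $O(n)$ time: computing $|F_s \cap F'|$, $|F_s|$, and $|F'|$ each requires at most a linear scan over the $n$ feature-indicator entries, and the remaining arithmetic is $O(1)$. Second, I would count how often this check is performed. In iteration $i \in \{1, \dots, a\}$ of the sequential search, the set~$\mathbb{F}$ already contains $i$ previously selected feature sets (the initial one plus $i-1$ alternatives), so for each of the $\binom{n}{k} \leq n^k$ candidate size-$k$ feature sets, the candidate must be compared against all sets in~$\mathbb{F}$, costing $O(i \cdot n)$ per candidate and hence $O(i \cdot n^{k+1})$ for the iteration.

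Third, I would sum these per-iteration costs over $i = 1, \dots, a$:
\begin{equation*}
\sum_{i=1}^{a} O(i \cdot n^{k+1}) \;=\; O\!\left(n^{k+1} \cdot \sum_{i=1}^{a} i\right) \;=\; O\!\left(a^2 \cdot n^{k+1}\right),
\end{equation*}
which matches the stated bound. The enumeration of the $n^k$ candidates itself carries at most an $O(n^k)$ overhead per iteration, which is dominated by the constraint-checking cost and therefore absorbed. Since the proposition explicitly excludes the cost of evaluating the objective $Q(s,X,y)$, no further term needs to be added.

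I do not expect any real obstacle here; the argument is essentially a careful accounting exercise. The only subtle point is bookkeeping the precise size of~$\mathbb{F}$ across iterations (including whether the `original' feature set is counted), but since this only affects constant factors, it is absorbed into the $O(\cdot)$ notation. A minor stylistic choice is whether to express the candidate count as $\binom{n}{k}$ or the looser upper bound $n^k$; using the latter keeps the statement consistent with Proposition~\ref{prop:afs:complexity-exhaustive-conventional}.
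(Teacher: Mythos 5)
Your proposal is correct and follows essentially the same route as the paper: the paper likewise bounds the per-candidate constraint check by the number of existing feature sets times $O(n)$, multiplies by the $O(n^k)$ candidates per iteration, and accumulates over the $a$ iterations to get $O(a^2 \cdot n^{k+1})$ on top of Proposition~\ref{prop:afs:complexity-exhaustive-conventional}. Your explicit summation $\sum_{i=1}^{a} i$ is just a slightly more detailed bookkeeping of the same argument.
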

Thus, the runtime remains polynomial in~$n$ if $k$ is a small constant $k \in O(1)$, which places the problem in the parameterized complexity class~$\mathcal{XP}$.
Due to the fixed~$k$, only choosing $a \leq \binom{n}{k} \in O(n^k)$ admits valid alternatives and therefore does~$a$ not affect polynomiality.

\paragraph{Simultaneous search}

The simultaneous-search problem (cf.~Definition~\ref{def:afs:simultaneous-alternative}) enlarges the search space since it optimizes $a+1$ feature sets at once.
Thus, an exhaustive search over size-$k$ feature sets iterates over~$O((n^k)^{a+1}) = O(n^{k \cdot (a+1)})$ solution candidates.
Including the cost of constraint checking, we arrive at the following proposition:
\begin{proposition}[Complexity of exhaustive simultaneous search]
	Exhaustive simultaneous search (cf.~Definition~\ref{def:afs:alternative-feature-selection-simultaneous}) for $a \in \mathbb{N}_0$~alternative feature sets of size~$k \in \mathbb{N}$ from $n$~features has a time complexity of~$O((a+1)^2 \cdot n^{k \cdot (a+1) + 1})$ without the cost of evaluating the objective.
	\label{prop:afs:complexity-exhaustive-simultaneuos}
\end{proposition}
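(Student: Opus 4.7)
The plan is to mirror the argument for Proposition~\ref{prop:afs:complexity-exhaustive-sequential} but with two adjustments: enlarge the candidate space to reflect the joint optimization over $a+1$ feature sets, and re-count the pairwise constraint checks. I will keep the same cost model (unit cost for elementary arithmetic operations) and ignore the cost of evaluating $Q(\cdot)$, as the statement does.

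First, I would count solution candidates. A candidate for the simultaneous problem is a tuple $(F_{s^{(0)}}, \dots, F_{s^{(a)}})$ of $a+1$ feature sets, each of size $k$. Using $\binom{n}{k} \leq n^k$ from the proof template of Proposition~\ref{prop:afs:complexity-exhaustive-conventional}, the number of such tuples is bounded by $(n^k)^{a+1} = n^{k \cdot (a+1)}$. An exhaustive search enumerates all of them.

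Second, I would bound the per-candidate cost of checking the constraints in Equation~\ref{eq:afs:afs-simultaneous}. Definition~\ref{def:afs:simultaneous-alternative} requires pairwise dissimilarity between all distinct pairs from $\{s^{(0)}, \dots, s^{(a)}\}$, i.e., $\binom{a+1}{2} = \frac{a(a+1)}{2} \in O(a^2)$ pairs. Each individual check evaluates $d(F_{s^{(i_1)}}, F_{s^{(i_2)}}) \geq \tau$; using the Dice dissimilarity and its reformulation via Equation~\ref{eq:afs:feature-set-size} in Proposition~\ref{prop:afs:linear-constraints}, computing $|F_{s^{(i_1)}} \cap F_{s^{(i_2)}}|$ and the two set sizes amounts to summing over the $n$ selection-variable positions, hence $O(n)$. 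The total constraint-checking cost per candidate is therefore $O(a^2 \cdot n)$.

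Multiplying the candidate count by the per-candidate cost gives $O(n^{k \cdot (a+1)} \cdot a^2 \cdot n) = O(a^2 \cdot n^{k \cdot (a+1) + 1})$, as claimed. There is no real obstacle here; the only point requiring minor care is to ensure consistency with Proposition~\ref{prop:afs:complexity-exhaustive-sequential} by measuring dissimilarity cost in $n$ rather than in $k$ (a selection vector has length $n$), and to note that the $a^2$ factor absorbs the $\frac{1}{2}$ from $\binom{a+1}{2}$ inside the $O(\cdot)$.
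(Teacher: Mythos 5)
Your proposal is correct and matches the paper's own derivation: the paper likewise bounds the candidate space by $O((n^k)^{a+1}) = O(n^{k \cdot (a+1)})$ tuples and charges $O(a^2 \cdot n)$ per candidate for the pairwise Dice-dissimilarity checks over the length-$n$ selection vectors, yielding $O(a^2 \cdot n^{k \cdot (a+1) + 1})$. No gaps; the same cost model and decomposition are used.
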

The scalability with~$n$ is worse than for exhaustive sequential search since the number of alternatives appears in the exponent now, except for a special case discussed in Appendix~\ref{sec:afs:appendix:complexity:exhaustive-simultaneous-special-case}.
Further, Proposition~\ref{prop:afs:complexity-exhaustive-simultaneuos} assumes that the constraints do not use linearization variables (cf.~Equations~\ref{eq:afs:product-linear} and~\ref{eq:afs:afs-simultaneous-complete}), which would enlarge the search space even further.
Finally, the complexity remains polynomial in~$n$ if $a$ and~$k$ are small and independent from~$n$, i.e., $a \cdot k \in O(1)$:
\begin{proposition}[Parameterized complexity of simultaneous-search problem]
	The simultaneous-search problem (cf.~Definition~\ref{def:afs:alternative-feature-selection-simultaneous}) for $a \in \mathbb{N}$~alternative feature sets of size~$k \in \mathbb{N}$ from $n$~features resides in the parameterized complexity class $\mathcal{XP}$ for the parameter~$a \cdot k$.
	\label{prop:afs:complexity-simultaneuos-xp}
\end{proposition}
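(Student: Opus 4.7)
The plan is to derive this proposition directly from Proposition~\ref{prop:afs:complexity-exhaustive-simultaneuos}, which already gives a time bound of $O(a^2 \cdot n^{k \cdot (a+1) + 1})$ for exhaustive simultaneous search, and then show that this bound fits the template $O(f(a \cdot k) \cdot n^{g(a \cdot k)})$ required for membership in $\mathcal{XP}$ with combined parameter $a \cdot k$.

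First, I would recall the definition of $\mathcal{XP}$: a parameterized problem lies in $\mathcal{XP}$ for a parameter~$p$ if it admits an algorithm with runtime $O(f(p) \cdot n^{g(p)})$ for some computable functions $f$ and $g$~\cite{downey1997parameterized}. Since exhaustive simultaneous search is an algorithm that solves the simultaneous-search problem (cf.~Equation~\ref{eq:afs:afs-simultaneous}), any bound on its runtime that fits this template establishes membership.

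Next, I would massage the bound from Proposition~\ref{prop:afs:complexity-exhaustive-simultaneuos} to be expressible purely in terms of $n$ and $a \cdot k$. Using $a, k \in \mathbb{N}$, i.e., both parameters are at least~$1$, I obtain $a \leq a \cdot k$ and $k \leq a \cdot k$. This yields $a^2 \leq (a \cdot k)^2$ for the leading factor, and for the exponent of~$n$, I compute $k \cdot (a+1) + 1 = a \cdot k + k + 1 \leq 2 \cdot (a \cdot k) + 1$. Combining these bounds gives
\begin{equation*}
    O\bigl(a^2 \cdot n^{k \cdot (a+1) + 1}\bigr) \subseteq O\bigl((a \cdot k)^2 \cdot n^{2 \cdot (a \cdot k) + 1}\bigr),
\end{equation*}
which is of the form $O(f(a \cdot k) \cdot n^{g(a \cdot k)})$ with $f(x) = x^2$ and $g(x) = 2x + 1$, proving $\mathcal{XP}$ membership for the parameter~$a \cdot k$.

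One subtlety is that Proposition~\ref{prop:afs:complexity-exhaustive-simultaneuos} explicitly excludes the cost of evaluating the objective function. I would address this by noting that the proposition's statement concerns the problem itself rather than a specific objective, and that the $\mathcal{XP}$ classification carries over as long as evaluating $Q(s, X, y)$ on a candidate of size~$k$ takes time polynomial in~$n$ (and possibly~$k$), which is the case for every feature-set quality measure discussed in Section~\ref{sec:afs:fundamentals:quality} and Section~\ref{sec:afs:approach:objectives:white-box}. The main obstacle, if any, is thus not the arithmetic but articulating that this mild assumption on the objective is what is needed; all the combinatorial work has already been done in the preceding proposition.
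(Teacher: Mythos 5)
Your proposal is correct and follows essentially the same route as the paper, which states the proposition as a direct consequence of the $O(a^2 \cdot n^{k \cdot (a+1) + 1})$ bound for exhaustive simultaneous search fitting the $\mathcal{XP}$ template $O(f(p) \cdot n^{g(p)})$ with $p = a \cdot k$. Your explicit bounds $a^2 \leq (a \cdot k)^2$ and $k \cdot (a+1) + 1 \leq 2 \cdot (a \cdot k) + 1$, as well as the remark that objective evaluation is polynomial in~$n$, match the paper's (implicit) argument.
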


\subsubsection{Univariate Feature Qualities}
\label{sec:afs:approach:complexity:univariate}

\paragraph{Motivation}

While the assumption $a \cdot k \in O(1)$ ensures polynomial runtime regarding~$n$ for arbitrary feature-selection methods, the optimization problem can still be hard without this assumption.
In the following, we derive complexity results for \emph{univariate feature qualities} (cf.~Equation~\ref{eq:afs:univariate-filter} and Appendix~\ref{sec:afs:appendix:univariate-complete-optimization-problem}).
This feature-selection method arguably has the simplest objective function, where the quality of a feature set is equal to the sum of the individual qualities of its constituent features.
This simplicity eases the transformation from and to well-known $\mathcal{NP}$-hard problems.
Appendix~\ref{sec:afs:appendix:complexity:related-work} discusses related work on these problems in detail.

In the following complexity analyses, we assume that the feature qualities~$q(X_{\cdot{}j},y)$ are given.
In particular, one can pre-compute these qualities before searching alternatives and treat them as constants in the optimization problem.
The complexity of this computation depends on the particular feature-quality measure and the number of data objects~$m$.
However, the number of features~$n$ should only affect the complexity linearly due to the univariate setting.

\paragraph{Min-aggregation with complete partitioning}

We start with three assumptions, which we will drop later:
First, we use a dissimilarity threshold of~$\tau = 1$, i.e., zero overlap of feature sets.
Second, all features must be part of one set.
Third, we analyze the simultaneous-search problem (cf.~Definition~\ref{def:afs:alternative-feature-selection-simultaneous}) with min-aggregation (cf.~Equation~\ref{eq:afs:afs-simultaneous-min-objective}).
We call the combination of the first two assumptions, which implies $n = (a+1) \cdot k$ if all sets have size~$k$, a \emph{complete partitioning}.
This scenario differs from $a \cdot k \in O(1)$, which yielded polynomial runtime regarding~$n$ in Proposition~\ref{prop:afs:complexity-simultaneuos-xp}.

A key factor for the hardness of partitioning is the number of solutions:
There are $\stirling{n}{a}$~ways to partition a set of $n$~elements into $a$~non-empty subsets, a Stirling number of the second kind~\cite{graham1994concrete}, which roughly scale like $a^n / a!$~\cite{moser1958stirling}, i.e., exponential in~$n$ for a fixed~$a$.
Even if the subset sizes are fixed, the scalability regarding~$n$ remains bad since it bases on a multinomial coefficient.

Our complete-partitioning scenario is a variant of the \textsc{Multi-Way Number Partitioning} problem:
Partition a multiset of $n$~integers into a fixed number of $a$~subsets such that the sums of all subsets are as equal as possible~\cite{korf2010objective}.
One problem formulation, called \textsc{Multiprocessor Scheduling} in~\cite{garey2003computers}, minimizes the maximum subset sum:
The goal is to assign tasks with different lengths to a fixed number of processors such that the maximum processor runtime is minimal.
Multiplying task lengths with~$-1$, one can turn the minimax problem of \textsc{Multiprocessor Scheduling} into the maximin formulation of the simultaneous-search problem with min-aggregation:
The tasks become features, the negative task lengths become univariate feature qualities, and the processors become feature sets. 
Since \textsc{Multiprocessor Scheduling} is $\mathcal{NP}$-complete, even for just two partitions~\cite{garey2003computers}, our problem is $\mathcal{NP}$-complete as well:
\begin{proposition}[Complexity of simultaneous-search problem with min-aggregation, complete partitioning, and unconstrained feature-set size]
	Assuming univariate feature qualities (cf.~Equation~\ref{eq:afs:univariate-filter}), a dissimilarity threshold~$\tau = 1$, unconstrained feature-set sizes, and all $n$~features have to be selected, the simultaneous-search problem (cf.~Definition~\ref{def:afs:alternative-feature-selection-simultaneous}) for alternative feature sets with min-aggregation (cf.~Equation~\ref{eq:afs:afs-simultaneous-min-objective}) is $\mathcal{NP}$-complete.
	\label{prop:afs:complexity-partitioning-min-unconstrained-k}
\end{proposition}
Since the assumptions in Proposition~\ref{prop:afs:complexity-partitioning-min-unconstrained-k} denote a special case of alternative feature selection, we directly obtain the following, more general proposition:
\begin{proposition}[Complexity of simultaneous-search problem with min-aggregation]
	The simultaneous-search problem (cf.~Definition~\ref{def:afs:alternative-feature-selection-simultaneous}) for alternative feature sets with min-aggregation (cf.~Equation~\ref{eq:afs:afs-simultaneous-min-objective}) is $\mathcal{NP}$-hard.
	\label{prop:afs:complexity-simultaneous-np}
\end{proposition}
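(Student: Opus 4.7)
The plan is to derive this as a direct corollary of Proposition~\ref{prop:afs:complexity-partitioning-min-unconstrained-k}, which already established $\mathcal{NP}$-completeness for a restricted variant of the problem. Since $\mathcal{NP}$-hardness of a subcase implies $\mathcal{NP}$-hardness of the general case, the main work reduces to arguing that the restricted setup (complete partitioning with $\tau = 1$, univariate qualities, and all $n$ features selected) can be written as an instance of the general simultaneous-search problem with min-aggregation (cf.~Equations~\ref{eq:afs:afs-simultaneous} and~\ref{eq:afs:afs-simultaneous-min-objective}).

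First, I would make the instance-embedding explicit. Given any instance of the restricted problem with feature qualities $q_1, \dots, q_n$, $a+1$ target sets, and a required total coverage of features, I would construct an instance of the general problem by (i) using the same $n$ features and the same univariate qualities in the objective $Q_{\text{uni}}$, (ii) fixing $\tau = 1$ in the pairwise dissimilarity constraints to enforce disjointness of the $a+1$ selected sets, and (iii) adding the size constraints $|F_{s^{(i)}}| = k$ for each~$i$ with $k$ chosen so that $n = (a+1) \cdot k$. The paper already notes that such per-set size constraints are admissible, simple additions to the general problem. Under $\tau = 1$ and these size constraints, every feasible solution partitions all $n$ features into $a+1$ blocks of size~$k$, so feasibility and objective values coincide with the restricted instance.

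Second, I would invoke the polynomial-time nature of this embedding: constructing the instance only requires copying the qualities and writing down $O(a^2)$ dissimilarity constraints and $a+1$ cardinality constraints, which is polynomial in the input size. Combined with Proposition~\ref{prop:afs:complexity-partitioning-min-unconstrained-k}, any polynomial-time algorithm for the general min-aggregation problem would yield one for the $\mathcal{NP}$-complete restricted problem, which is the standard hardness-transfer argument.

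The main conceptual obstacle, such as it is, lies in justifying that the restricted assumptions are genuinely expressible inside the general formulation rather than a separate variant. I would therefore pay particular attention to Equation~\ref{eq:afs:afs-simultaneous}: min-aggregation is chosen by setting $\operatorname{agg} = \min$, $\tau = 1$ is within the allowed range $\mathbb{R}_{\geq 0}$, and the per-set cardinality constraint is explicitly supported by the paper's problem formulation. One subtle point worth flagging is that I do not need $\mathcal{NP}$-completeness for the general problem (which would additionally require membership in $\mathcal{NP}$), only hardness; this keeps the argument short and avoids discussion of witness encodings for the more general problem.
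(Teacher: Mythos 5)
Your overall strategy---obtaining the statement as a corollary of the $\mathcal{NP}$-complete restricted partitioning case---is exactly the paper's route: the paper derives this proposition in one line from Proposition~\ref{prop:afs:complexity-partitioning-min-unconstrained-k} by observing that its assumptions describe a special case of the general problem. However, your concrete embedding does not match the proposition you cite. Proposition~\ref{prop:afs:complexity-partitioning-min-unconstrained-k} assumes \emph{unconstrained} feature-set sizes (its reduction is from \textsc{Multiprocessor Scheduling}, where the number of tasks per processor is arbitrary), whereas your construction adds the constraints $|F_{s^{(i)}}| = k$ with $n = (a+1) \cdot k$. First, such a $k$ need not exist, since $n$ need not be divisible by $a+1$. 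Second, even when it does, equal-cardinality constraints shrink the feasible region and can change the optimum: with qualities $(10,1,1,1)$ and $a=1$, the unconstrained complete partitioning $\{10\}$ versus $\{1,1,1\}$ has min-aggregated value $3$, while every partition into two sets of size $2$ has value $2$. Hence the claim that ``feasibility and objective values coincide with the restricted instance'' fails, and the hardness transfer from Proposition~\ref{prop:afs:complexity-partitioning-min-unconstrained-k} through this particular embedding breaks down.

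The repair is easy and preserves your structure. Either (i) keep the equal-cardinality embedding but invoke Proposition~\ref{prop:afs:complexity-partitioning-min-constrained-k} instead, which establishes $\mathcal{NP}$-completeness of complete partitioning \emph{with} the size-$k$ constraint (proved via \textsc{K-Partitioning}, independently of the present statement, so there is no circularity despite its later position in the text); or (ii) stay with Proposition~\ref{prop:afs:complexity-partitioning-min-unconstrained-k} and express the ``all features selected'' requirement without fixing per-set sizes, e.g., by a total-coverage constraint combined with $\tau=1$, which is the reading the paper implicitly uses when it calls that setting a special case. Your instinct to make the embedding explicit---checking that $\tau=1$, min-aggregation, and added cardinality constraints are all admissible instantiations of Equation~\ref{eq:afs:afs-simultaneous}, and that only hardness rather than $\mathcal{NP}$-membership is needed---is a more careful treatment than the paper's one-line specialization; it just needs to be wired to the prior proposition whose hypotheses it actually reproduces.
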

While Proposition~\ref{prop:afs:complexity-partitioning-min-unconstrained-k} allowed arbitrary sets sizes, there are also existing partitioning problems for constrained~$k$, e.g., called \textsc{Balanced Number Partitioning} or \textsc{K-Partitioning}.
\textsc{K-Partitioning} with a minimax objective is $\mathcal{NP}$-hard~\cite{babel1998thek} and
can be transformed into our maximin objective as above:
\begin{proposition}[Complexity of simultaneous-search problem with min-aggregation, complete partitioning, and constrained feature-set size]
	Assuming univariate feature qualities (cf.~Equation~\ref{eq:afs:univariate-filter}), a dissimilarity threshold~$\tau = 1$, desired feature-set size~$k \in \mathbb{N}$, and all $n$~features have to be selected, the simulta\-neous-search problem (cf.~Definition~\ref{def:afs:alternative-feature-selection-simultaneous}) for alternative feature sets with min-aggregation (cf.~Equation~\ref{eq:afs:afs-simultaneous-min-objective}) is $\mathcal{NP}$-complete.
	\label{prop:afs:complexity-partitioning-min-constrained-k}
\end{proposition}

\paragraph{Min-aggregation with incomplete partitioning}

We now allow that some features may not be part of any feature set while we keep the assumption of zero feature-set overlap.
The problem of finding such an \emph{incomplete partitioning} still is $\mathcal{NP}$-complete in general (cf.~Appendix~\ref{sec:afs:appendix:complexity:proofs} for the proof):
\begin{proposition}[Complexity of simultaneous-search problem with min-aggregation, incomplete partitioning, and constrained feature-set size]
	Assuming univariate feature qualities (cf.~Equation~\ref{eq:afs:univariate-filter}), a dissimilarity threshold~$\tau = 1$, desired feature-set size~$k \in \mathbb{N}$, and \emph{not} all $n$~features have to be selected, the simultaneous-search problem (cf.~Definition~\ref{def:afs:alternative-feature-selection-simultaneous}) for alternative feature sets with min-aggregation (cf.~Equation~\ref{eq:afs:afs-simultaneous-min-objective}) is $\mathcal{NP}$-complete.
	\label{prop:afs:complexity-incomplete-partitioning-min-constrained-k}
\end{proposition}

\paragraph{Min-aggregation with overlapping feature sets}

The problem with $\tau < 1$, i.e., set overlap, also is $\mathcal{NP}$-hard in general (cf.~Appendix~\ref{sec:afs:appendix:complexity:proofs} for the proof):
\begin{proposition}[Complexity of simultaneous-search problem with min-aggregation, $\tau < 1$, and constrained feature-set size]
	Assuming univariate feature qualities (cf.~Equation~\ref{eq:afs:univariate-filter}), a dissimilarity threshold~$\tau < 1$, and desired feature-set size~$k \in \mathbb{N}$, the simultaneous-search problem (cf.~Definition~\ref{def:afs:alternative-feature-selection-simultaneous}) for alternative feature sets with min-aggregation (cf.~Equation~\ref{eq:afs:afs-simultaneous-min-objective}) is $\mathcal{NP}$-hard.
	\label{prop:afs:complexity-no-partitioning-min-constrained-k}
\end{proposition}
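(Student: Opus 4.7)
The plan is to reduce from the already-$\mathcal{NP}$-hard special case with $\tau' = 1$ established in Proposition~\ref{prop:afs:complexity-incomplete-partitioning-min-constrained-k} to our problem with an arbitrary $\tau \in (0,1)$. The intuition is that the slack of $\lfloor (1-\tau) k \rfloor$ overlapping features permitted by a non-maximal threshold can be ``consumed'' by mandatory dummy features that are forced into every alternative, so that the only remaining degree of freedom is to distribute the genuine features in a pairwise-disjoint fashion, which is exactly the $\tau' = 1$ problem.

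Given an instance $I_1$ with $n'$ features of univariate qualities $q_1, \dots, q_{n'}$, feature-set size $k'$, and $a+1$ alternatives, I would construct an instance $I_2$ for the target threshold $\tau$ as follows: set $\ell = \lfloor (1-\tau)\,k'/\tau \rfloor$, add $\ell$ fresh dummy features each with quality $M = 1 + (k'+1) \cdot \max_j q_j$, and let $n = n' + \ell$ and $k = k' + \ell$, keeping $a$ and $\tau$ unchanged. A short calculation confirms that this choice of $\ell$ satisfies $\ell \leq (1-\tau)\,k < \ell + 1$, so the overlap cap $\lfloor (1-\tau)\,k \rfloor$ in $I_2$ equals exactly $\ell$.

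The correctness argument then splits into two claims. First, any optimal solution to $I_2$ places all $\ell$ dummies in every alternative: omitting a dummy from some alternative and replacing it with any non-dummy drops that alternative's quality by at least $M - \max_j q_j > k' \cdot \max_j q_j$, pushing the min-aggregated objective below $\ell M$, while the all-dummies-everywhere configuration (feasible whenever $I_1$ is feasible) achieves a minimum of at least $\ell M$. Second, once all $\ell$ dummies are shared across alternatives, the pairwise overlap budget is exhausted by them, so the remaining $k'$ non-dummy features per alternative must be pairwise disjoint. The residual subproblem of distributing non-dummy features across $a+1$ alternatives to maximize the minimum sum of qualities is precisely $I_1$, up to the additive constant $\ell M$ in each alternative's score. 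Consequently, $I_2$-optima and $I_1$-optima are in one-to-one correspondence.

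The main technical obstacle I expect is the bookkeeping around integrality and ``just barely'' saturation: verifying that the bound $\ell \leq (1-\tau)\,k < \ell + 1$ really pins the overlap cap to $\ell$ (rather than $\ell \pm 1$) so that non-dummies are forced strictly disjoint, and verifying that the chosen $M$ is large enough to rule out all improving swaps under min-aggregation while remaining polynomial in the input. Since $\ell \leq k'/\tau$ and $M$ is polynomial in the input, the whole construction is polynomial-time, and combining it with the $\mathcal{NP}$-hardness of Proposition~\ref{prop:afs:complexity-incomplete-partitioning-min-constrained-k} yields the claim.
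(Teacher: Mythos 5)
Your construction is correct and is essentially the paper's own argument: the paper likewise pads a $\tau=1$ instance with dominant dummy features that every alternative is forced to contain, thereby exhausting the permitted overlap $\lfloor(1-\tau)\cdot k\rfloor$ and forcing the genuine features to be pairwise disjoint (it does this first with a single dummy and $\tau'=(k'-1)/k'$, reducing from the complete-partitioning case of Proposition~\ref{prop:afs:complexity-partitioning-min-constrained-k}, and then sketches the multi-dummy extension to other values of~$\tau$). Your variant differs only in reducing from Proposition~\ref{prop:afs:complexity-incomplete-partitioning-min-constrained-k} and in carrying out the floor arithmetic for an arbitrary target $\tau$ in a single construction, which does not change the substance of the proof.
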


\paragraph{Sum-aggregation}

In contrast to the previous $\mathcal{NP}$-hardness results for min-aggregation, sum-aggregation (cf.~Equation~\ref{eq:afs:afs-simultaneous-sum-objective}) with $\tau=1$ admits polynomial-time algorithms (cf.~Appendix~\ref{sec:afs:appendix:complexity:proofs} for the proof):
\begin{proposition}[Complexity of search problems with sum-aggregation and $\tau=1$]
	Assuming univariate feature qualities (cf.~Equation~\ref{eq:afs:univariate-filter}) and a dissimilarity threshold~$\tau = 1$, the problems for (1) sequential search (cf.~Definition~\ref{def:afs:alternative-feature-selection-sequential}) and (2) simultaneous search with sum-aggregation (cf.~Definition~\ref{def:afs:alternative-feature-selection-simultaneous} and Equation~\ref{eq:afs:afs-simultaneous-sum-objective}) have a time complexity of $O(n \cdot \log n)$.
	\label{prop:afs:complexity-partitioning-sum}
\end{proposition}
This feasibility result applies to an arbitrary number of alternatives~$a$ and arbitrary feature-set sizes.
The key reason for polynomial runtime is that sum-aggregation does not require balancing the feature sets' qualities.
Thus, $\tau=1$ allows many solutions with the same summed objective value.
While at least one of these solutions also optimizes the objective with min-aggregation, most do not.
Hence, it is not a contradiction that optimizing with min-aggregation is considerably harder.

\subsubsection{Summary}
\label{sec:afs:approach:complexity:summary}

We showed that the simultaneous-search problem for alternative feature sets is $\mathcal{NP}$-hard in general (cf.~Proposition~\ref{prop:afs:complexity-simultaneous-np}).
We also placed it in the parameterized complexity class $\mathcal{XP}$ (cf.~Proposition~\ref{prop:afs:complexity-simultaneuos-xp}), having~$a$ and~$k$ as the parameters that drive the hardness of the problem.
For univariate feature qualities and min-aggregation, we obtained more specific $\mathcal{NP}$-hardness results for (1) complete partitioning, i.e., $\tau = 1$ and $(a+1) \cdot k = n$ (cf.~Proposition~\ref{prop:afs:complexity-partitioning-min-constrained-k}), (2) incomplete partitioning, i.e., $(a+1) \cdot k < n$ (cf.~Proposition~\ref{prop:afs:complexity-incomplete-partitioning-min-constrained-k}) and (3) feature set overlap, i.e., $\tau < 1$ (cf.~Proposition~\ref{prop:afs:complexity-no-partitioning-min-constrained-k}).
In contrast, we also inferred polynomial runtime for univariate feature qualities, sum-aggregation, and $\tau = 1$ (cf.~Proposition~\ref{prop:afs:complexity-partitioning-sum}).

\subsection{Heuristic Search for Univariate Feature Qualities}
\label{sec:afs:approach:univariate-heuristics}

In this section, we propose heuristic search methods for univariate feature qualities (cf.~Equation~\ref{eq:afs:univariate-filter} and Section~\ref{sec:afs:appendix:univariate-complete-optimization-problem}) and the Dice dissimilarity (cf.~Equation~\ref{eq:afs:dice}) as~$d(\cdot)$.
These heuristics complement the solver-based search that we discussed in Section~\ref{sec:afs:approach:objectives:white-box}.
The proposed heuristics may be faster than exact optimization at the expense of lower feature-set quality.
In particular, we describe \emph{Greedy Replacement} (cf.~Section~\ref{sec:afs:approach:univariate-heuristics:greedy-replacement}), which is a sequential search method, and \emph{Greedy Balancing} (cf.~Section~\ref{sec:afs:approach:univariate-heuristics:greedy-balancing}), which is a simultaneous search method.
Additionally, Appendix~\ref{sec:afs:appendix:greedy-depth} introduces \emph{Greedy Depth Search}.
All three heuristics leverage that the univariate objective sums up the individual qualities~$q_j$ of selected features and does not consider interactions between features.

\subsubsection{Greedy Replacement}
\label{sec:afs:approach:univariate-heuristics:greedy-replacement}

\emph{Greedy Replacement} is our first heuristic for alternative feature selection with univariate feature qualities.
This heuristic conducts a sequential search.

\paragraph{Algorithm}

\begin{algorithm}[t]
	\DontPrintSemicolon
	\KwIn{Univariate feature qualities~$q \in \mathbb{R}^n$, \newline
		Feature-set size~$k \in \mathbb{N}$, \newline
		Number of alternatives~$a \in \mathbb{N}_0$, \newline
		Dissimilarity threshold~$\tau \in (0, 1]$}
	\KwOut{List of feature-selection decision vectors~$s^{(\cdot)}$}
	\BlankLine
	$\mathit{indices} \leftarrow$ sort\_indices($q$, order=descending) \tcp*{Order by qualities} \label{al:afs:greedy-replacement:line:sorting}
	$s \leftarrow \{0\}^n$ \tcp*{Initial selection for all alternatives} \label{al:afs:greedy-replacement:line:common-features-start}
	$\mathit{feature\_position} \leftarrow 1$ \tcp*{Index of index of current feature}
	\While{$\mathit{feature\_position} \leq \lfloor (1 - \tau) \cdot k \rfloor$}{
		$j \leftarrow \mathit{indices}[\mathit{feature\_position}]$ \tcp*{Index feature by quality}
		$s_j \leftarrow 1$ \;
		$\mathit{feature\_position} \leftarrow \mathit{feature\_position} + 1$\;
	} \label{al:afs:greedy-replacement:line:common-features-end}
	$l \leftarrow 0$\ \tcp*{Number of current alternative} \label{al:afs:greedy-replacement:line:disjoint-features-start}
	\While{$l \leq a$ \textbf{and} $l \leq \frac{n - k}{\lceil \tau \cdot k \rceil}$}{ \label{al:afs:greedy-replacement:line:stop}
		$s^{(l)} \leftarrow s$ \tcp*{Select top $\lfloor (1 - \tau) \cdot k \rfloor$ features} \label{al:afs:greedy-replacement:line:copy-common-selection}
		\For(\tcp*[f]{Select remaining $\lceil \tau \cdot k \rceil$ features}){$\_ \leftarrow 1$ \KwTo $\lceil \tau \cdot k \rceil$}{ \label{al:afs:greedy-replacement:line:one-disjoint-feature-start}
			$j \leftarrow \mathit{indices}[\mathit{feature\_position}]$\;
			$s^{(l)}_j \leftarrow 1$\;
			$\mathit{feature\_position} \leftarrow \mathit{feature\_position} + 1$\; \label{al:afs:greedy-replacement:line:one-disjoint-feature-end}
		}
		$l \leftarrow l + 1$\;
	} \label{al:afs:greedy-replacement:line:disjoint-features-end}
	\Return{$s^{(0)}, \dots, s^{(l)}$}
	\caption{\emph{Greedy Replacement} for alternative feature selection.}
	\label{al:afs:greedy-replacement}
\end{algorithm}

Algorithm~\ref{al:afs:greedy-replacement} outlines \emph{Greedy Replacement}.
It retains a fixed subset of features in each alternative while iteratively replacing the remaining ones.
We start by sorting the features decreasingly based on their qualities~$q_j$ (Line~\ref{al:afs:greedy-replacement:line:sorting}).
For a fixed feature-set size~$k$, a dissimilarity threshold~$\tau$, and using the Dice dissimilarity (cf.~Equation~\ref{eq:afs:dice}), one subset with $\lfloor (1 - \tau) \cdot k \rfloor$~features can be contained in all alternatives without violating the dissimilarity threshold (cf.~Equation~\ref{eq:afs:dice-rearranged-equal-size}).
Thus, our algorithm indeed selects the $\lfloor (1 - \tau) \cdot k \rfloor$~features with highest quality in each alternative~$s^{(\cdot)}$ (Lines~\ref{al:afs:greedy-replacement:line:common-features-start}--\ref{al:afs:greedy-replacement:line:common-features-end}).
We fill the remaining spots in the sets by iterating over the alternatives and remaining features (Lines~\ref{al:afs:greedy-replacement:line:disjoint-features-start}--\ref{al:afs:greedy-replacement:line:disjoint-features-end}).
For each alternative, we select the $\lceil \tau \cdot k \rceil$~highest-quality features not used in any prior alternative, thereby satisfying the dissimilarity threshold.
We continue this procedure until we reach the desired number of alternatives~$a$ or until there are not enough unused features to form further alternatives (Line~\ref{al:afs:greedy-replacement:line:stop}).
\begin{example}[Algorithm of \emph{Greedy Replacement}]
	With $n=10$ features, feature-set size~$k=5$, and $\tau=0.4$, each feature set must differ by $\lceil \tau \cdot k \rceil = 2$ features from the other feature sets.
	The original feature set~$s^{(0)}$ consists of the top $k=5$ features regarding quality~$q_j$.
	The first alternative~$s^{(1)}$ consists of the top $\lfloor (1 - \tau) \cdot k \rfloor = 3$ features plus the sixth- and seventh-best feature.
	The second alternative~$s^{(2)}$ consists of the top three features plus the eighth- and ninth-best one.
	The algorithm cannot continue beyond $l=2$ since there are not enough unused features to form further alternatives in the same manner.
	\label{ex:afs:greedy-replacement:algorithm}
\end{example}
In general, the $l$-th alternative consists of the top $\lfloor (1 - \tau) \cdot k \rfloor$ features plus the features $k + (l-1) \cdot \lceil \tau \cdot k \rceil + 1$ to $k + l \cdot \lceil \tau \cdot k \rceil$ in descending quality order.

\paragraph{Time complexity}

Sorting $n$~feature qualities (Line~\ref{al:afs:greedy-replacement:line:sorting}) has a time complexity of $O(n \cdot \log n)$.
Next, the algorithm iterates over the features and processes each feature at most once.
In particular, after selecting a feature in an alternative, $\mathit{feature\_position}$ increases by~1.
The maximum value of this variable depends on~$a$ and~$k$ (Line~\ref{al:afs:greedy-replacement:line:stop}) but cannot exceed the total number of features~$n$.
For each $\mathit{feature\_position}$, the algorithm conducts a constant number of update operations (Lines~\ref{al:afs:greedy-replacement:line:one-disjoint-feature-start}--\ref{al:afs:greedy-replacement:line:one-disjoint-feature-end}).
Assuming each array access takes $O(1)$, the total update cost is $O(n)$.
Further, each alternative $s^{(l)}$ gets initialized as the selection~$s$ of the top $\lfloor (1 - \tau) \cdot k \rfloor$ features (Line~\ref{al:afs:greedy-replacement:line:copy-common-selection}), which the algorithm determines once before the main loop (Lines~\ref{al:afs:greedy-replacement:line:common-features-start}--\ref{al:afs:greedy-replacement:line:common-features-end}).
Initializing $a$~arrays costs $O(a \cdot n)$.
Since the algorithm can only yield $a < n$ alternatives, the overall time complexity is~$O(n^2)$, i.e., polynomial in~$n$.

\paragraph{Quality}

While not optimizing exactly, \emph{Greedy Replacement} still offers an approximation guarantee relative to exact search methods:
\begin{proposition}[Approximation quality of \emph{Greedy Replacement}]
	Assume non-negative univariate feature qualities of $n$~features (cf.~Equation~\ref{eq:afs:univariate-filter}), $a \in \mathbb{N}_0$~alternatives, the Dice dissimilarity (cf.~Equation~\ref{eq:afs:dice}) as~$d(\cdot)$, a dissimilarity threshold~$\tau \in (0,1]$, desired feature-set size~$k \in \mathbb{N}$, and $k + a \cdot \lceil \tau \cdot k \rceil \leq n$.
	Under these conditions, \emph{Greedy Replacement} reaches at least a fraction of $\frac{\lfloor (1 - \tau) \cdot k \rfloor}{k}$ of the optimal objective values of the optimization problems for (1) sequential search (cf.~Definition~\ref{def:afs:alternative-feature-selection-sequential}), (2) simultaneous search with sum-aggregation (cf.~Definition~\ref{def:afs:alternative-feature-selection-simultaneous} and Equation~\ref{eq:afs:afs-simultaneous-sum-objective}), and (3) simultaneous search with min-aggregation (cf.~Definition~\ref{def:afs:alternative-feature-selection-simultaneous} and Equation~\ref{eq:afs:afs-simultaneous-min-objective}).
	\label{prop:afs:approximation-greedy-replacement}
\end{proposition}

\begin{proof}
	For univariate feature qualities, the quality of a feature set is the sum of the qualities of the contained features.
	\emph{Greedy Replacement} includes the $\lfloor (1 - \tau) \cdot k \rfloor$ highest-quality features in each alternative of size~$k$, while the remaining $\lceil \tau \cdot k \rceil$ features may have an arbitrary quality.
	In comparison, the optimal original, i.e., unconstrained, feature set of size~$k$ contains the top $k$ features, which are the union of the top $\lfloor (1 - \tau) \cdot k \rfloor$ features and the next-best $\lceil \tau \cdot k \rceil$ features.
	Due to quality sorting, each of the next-best $\lceil \tau \cdot k \rceil$ features has at most the quality of each of the top $\lfloor (1 - \tau) \cdot k \rfloor$ features, i.e., contributes the same or less to the summed quality of the feature set.
	Hence, assuming non-negative qualities, each alternative yielded by \emph{Greedy Replacement} has at least a quality of $\lfloor (1 - \tau) \cdot k \rfloor / k$ relative to the optimal original feature set of size~$k$ since the $\lfloor (1 - \tau) \cdot k \rfloor$ highest-quality features are part of both feature sets.
	Next, the optimal original feature set of size~$k$ upper-bounds the quality of any feature set of size~$k$.
	Consequently, alternative feature sets found by exact sequential or simultaneous search can also not be better.
	Thus, the quality bound of the heuristic solution relative to the exact solution also applies to the minimum and sum of qualities over multiple alternative feature sets.
\end{proof}
In particular, \emph{Greedy Replacement} yields a constant-factor approximation for the three optimization problems mentioned in Proposition~\ref{prop:afs:approximation-greedy-replacement}.
The condition $k + a \cdot \lceil \tau \cdot k \rceil \leq n$ describes scenarios where \emph{Greedy Replacement} can yield all desired alternatives, i.e., does not run out of unused features.
As the heuristic has polynomial runtime, alternative feature selection lies in the complexity class $\mathcal{APX}$~\cite{khanna1998syntactic} under the specified conditions:
\begin{proposition}[Approximation complexity of alternative feature selection]
	Assume non-negative univariate feature qualities of $n$~features (cf.~Equation~\ref{eq:afs:univariate-filter}), $a \in \mathbb{N}_0$~alternatives, the Dice dissimilarity (cf.~Equation~\ref{eq:afs:dice}) as~$d(\cdot)$, a dissimilarity threshold~$\tau \in [0,1]$, desired feature-set size~$k \in \mathbb{N}$, and $k + a \cdot \lceil \tau \cdot k \rceil \leq n$.
	Under these conditions, the optimization problems for (1) sequential search (cf.~Definition~\ref{def:afs:alternative-feature-selection-sequential}), (2) simultaneous search with sum-aggregation (cf.~Definition~\ref{def:afs:alternative-feature-selection-simultaneous} and Equation~\ref{eq:afs:afs-simultaneous-sum-objective}), and (3) simultaneous search with min-aggregation (cf.~Definition~\ref{def:afs:alternative-feature-selection-simultaneous} and Equation~\ref{eq:afs:afs-simultaneous-min-objective}) reside in the complexity class~$\mathcal{APX}$.
	\label{prop:afs:approximation-apx}
\end{proposition}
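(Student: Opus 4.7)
The plan is to derive this result as a direct corollary of Proposition~\ref{prop:afs:approximation-greedy-replacement} together with the runtime analysis of \emph{Greedy Replacement Search} discussed just before that proposition. Membership in $\mathcal{APX}$ requires exhibiting a polynomial-time algorithm that achieves a constant-factor approximation of the optimum, so I would package \emph{Greedy Replacement Search} as the witness algorithm and argue both properties separately.

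First, I would invoke Proposition~\ref{prop:afs:approximation-greedy-replacement} verbatim: under the stated assumptions (non-negative univariate qualities, the sizing condition $k + a \cdot \lceil \tau \cdot k \rceil \leq n$ which guarantees that all $a+1$ desired sets can actually be formed), \emph{Greedy Replacement Search} produces, for each of the three optimization problems in question, a feasible solution whose objective value is at least $\frac{\lfloor (1-\tau) \cdot k \rfloor}{k}$ times the optimum. Second, I would cite the complexity analysis in Section~\ref{sec:afs:approach:univariate-heuristics:greedy-replacement}, which bounds the runtime of \emph{Greedy Replacement Search} by $O(n^2)$ after a single $O(n \log n)$ sort of the univariate qualities; this is clearly polynomial in the input size. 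Combining these two facts yields, by definition of $\mathcal{APX}$~\cite{khanna1998syntactic}, membership of each of the three problems in $\mathcal{APX}$.

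The one subtlety I expect to address explicitly is that the approximation ratio $\frac{\lfloor (1-\tau) \cdot k \rfloor}{k}$ must qualify as a \emph{constant} in the sense required by $\mathcal{APX}$, i.e., independent of the instance size~$n$. Since $\tau$ and $k$ are user-specified parameters of the problem and not functions of~$n$, the ratio depends only on these fixed parameters; moreover, for any fixed $\tau < 1$ and $k$ satisfying $(1-\tau) \cdot k \geq 1$ (which is implied by the sizing condition together with $a \geq 0$), the ratio is strictly positive. Hence it is a legitimate constant factor in the sense of approximation theory. I would add a brief remark to this effect to preempt confusion, and then conclude the proof. No further machinery is needed, and I do not anticipate a genuine obstacle; the hardest conceptual point is merely articulating the parameter-versus-input distinction for the approximation factor cleanly.
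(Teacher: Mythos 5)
Your proposal matches the paper's own justification: the paper derives Proposition~\ref{prop:afs:approximation-apx} precisely by combining the constant-factor guarantee of Proposition~\ref{prop:afs:approximation-greedy-replacement} with the polynomial runtime of \emph{Greedy Replacement Search} ($O(n^2)$ after an $O(n \cdot \log n)$ sort), so your argument is correct and essentially identical. One small caveat in your added remark: the sizing condition $k + a \cdot \lceil \tau \cdot k \rceil \leq n$ together with $a \geq 0$ does \emph{not} imply $(1-\tau) \cdot k \geq 1$ (e.g., $\tau = 1$ gives $\lfloor (1-\tau) \cdot k \rfloor = 0$), so strict positivity of the ratio requires $\tau \leq (k-1)/k$ --- a subtlety the paper itself also leaves implicit.
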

For~$\tau = 1$, \emph{Greedy Replacement} even yields the same objective values as exact sequential search and exact simultaneous search with sum-aggregation since it becomes identical to a procedure we outlined in our complexity analysis earlier (cf.~Proposition~\ref{prop:afs:complexity-partitioning-sum}).
In contrast, for arbitrary~$\tau$, the following example shows that the heuristic can be worse than exact sequential search for as few as $a=2$ alternatives:
\begin{example}[Quality of \emph{Greedy Replacement} vs. exact search]
	Consider $n=6$~features with univariate feature qualities $q = (9,8,7,3,2,1)$, feature-set size $k=2$, number of alternatives~$a=2$, the Dice dissimilarity (cf.~Equation~\ref{eq:afs:dice}) as~$d(\cdot)$, and dissimilarity threshold~$\tau = 0.5$, which permits an overlap of one feature between sets here.
	Exact sequential search and exact simultaneous search, for min- and sum-aggregation, yield the selection $s^{(0)} = (1,1,0,0,0,0)$, $s^{(1)} = (1,0,1,0,0,0)$, and $s^{(2)} = (0,1,1,0,0,0)$, with a summed quality of $\,17+16+15=48$.
	\emph{Greedy Replacement} yields the selection $s^{(0)} = (1,1,0,0,0,0)$, $s^{(1)} = (1,0,1,0,0,0)$, and $s^{(2)} = (1,0,0,1,0,0)$, with a summed quality of $\,17+16+12=45$.
	\label{ex:afs:greedy-replacement:worse-than-exact}
\end{example}
While the first two feature sets are identical between exact and heuristic search, the quality of $s^{(2)}$ is lower for the heuristic (12 vs.~15).
In particular, by always selecting all the top $\lfloor (1 - \tau) \cdot k \rfloor$~features, the heuristic misses out on feature sets only involving some or none of them.

For min-aggregation in simultaneous search, $a=1$ alternative already suffices for the heuristic being potentially worse than exact search:
\begin{example}[Quality of \emph{Greedy Replacement} vs. min-aggregation]
	Consider $n=6$~features with univariate feature qualities $q = (9,8,7,3,2,1)$, feature-set size~$k=3$, number of alternatives~$a=1$, the Dice dissimilarity (cf.~Equation~\ref{eq:afs:dice}) as~$d(\cdot)$, and dissimilarity threshold~$\tau = 0.5$, which permits an overlap of one feature between sets here.
	Exact simultaneous search with min-aggregation yields the selection $s^{(0)} = (1,1,0,0,1,0)$ and $s^{(1)} = (1,0,1,1,0,0)$, with a quality of $\,\min \{19,19\} = 19$.
	\emph{Greedy Replacement} and exact sequential search yield the selection $s^{(0)} = (1,1,1,0,0,0)$ and $s^{(1)} = (1,0,0,1,1,0)$, with a quality of $\,\min \{24,14\} = 14$.
	Exact simultaneous search with sum-aggregation may yield either of these two solutions or the selection $s^{(0)} = (1,1,0,1,0,0)$ and $s^{(1)} = (1,0,1,0,1,0)$, all with the same sum-aggregated quality of~38 but different min-aggregated quality.
	\label{ex:afs:greedy-replacement:worse-than-min-agg}
\end{example}
In particular, \emph{Greedy Replacement} does not balance feature-set qualities since it is a sequential search method.
We alleviate this issue with the heuristic~\emph{Greedy Balancing} (cf.~Section~\ref{sec:afs:approach:univariate-heuristics:greedy-balancing}).

\paragraph{Limitations}

Proposition~\ref{prop:afs:approximation-greedy-replacement} and Examples~\ref{ex:afs:greedy-replacement:worse-than-exact},~\ref{ex:afs:greedy-replacement:worse-than-min-agg} already showed the potential quality loss of \emph{Greedy Replacement} compared to an exact search for alternatives.
Further, the heuristic only works as long as some features have not been part of any feature set yet, i.e., $k + a \cdot \lceil \tau \cdot k \rceil \leq n$.
Once the heuristic runs out of unused features, one would need to switch the search method.
Thus, to obtain a high number of alternatives~$a$ with the heuristic, the following conditions are beneficial:
The number of features~$n$ should be high, the feature-set size~$k$ should be low, and the dissimilarity threshold~$\tau$ should be low.
These conditions align well with typical feature-selection scenarios where~$k \ll n$.

Another drawback is that \emph{Greedy Replacement} assumes a very simple notion of feature-set quality.
If the latter becomes more complex than a sum of univariate qualities, quality-based feature ordering (Line~\ref{al:afs:greedy-replacement:line:sorting}) may be impossible or suboptimal.
Further, \emph{Greedy Replacement} cannot accommodate additional constraints on feature sets, e.g., based on domain knowledge.
Finally, the heuristic assumes the same size~$k$ for all feature sets.

\subsubsection{Greedy Balancing}
\label{sec:afs:approach:univariate-heuristics:greedy-balancing}

\emph{Greedy Balancing} adapts \emph{Greedy Replacement} to obtain more balanced feature-set qualities by employing a simultaneous search method.
In particular, it tries to distribute the individual feature qualities evenly to alternatives.

\begin{algorithm}[tp]
	\DontPrintSemicolon
	\KwIn{Univariate feature qualities~$q \in \mathbb{R}^n$, \newline
		Feature-set size~$k \in \mathbb{N}$, \newline
		Number of alternatives~$a \in \mathbb{N}_0$, \newline
		Dissimilarity threshold~$\tau \in [0, 1]$}
	\KwOut{List of feature-selection decision vectors~$s^{(0)}, \dots, s^{(a)}$}
	\BlankLine
	\lIf{$\lceil \tau \cdot k \rceil \cdot a + k > n$}{ \label{al:afs:greedy-balancing:line:stop-early}
		\Return{$\emptyset$}
	}
	$\mathit{indices} \leftarrow$ sort\_indices($q$, order=descending) \tcp*{Order by qualities} \label{al:afs:greedy-balancing:line:sorting} \label{al:afs:greedy-balancing:line:common-features-start} 
	\For(\tcp*[f]{Initial selection for all alternatives}){$l \leftarrow 0$ \KwTo $a$}{ \label{al:afs:greedy-balancing:line:array-initialization-start}
		$s^{(l)} \leftarrow \{0\}^n$ \; \label{al:afs:greedy-balancing:line:array-initialization-end}
	}
	$\mathit{feature\_position} \leftarrow 1$ \tcp*{Index of index of current feature}
	\While(\tcp*[f]{Select top features}){$\mathit{feature\_position} \leq \lfloor (1 - \tau) \cdot k \rfloor$}{ \label{al:afs:greedy-balancing:line:common-features-loop}
		$j \leftarrow \mathit{indices}[\mathit{feature\_position}]$ \tcp*{Index feature by quality}
		\For(\tcp*[f]{Same features in all alternatives}){$l \leftarrow 0$ \KwTo $a$}{ \label{al:afs:greedy-balancing:line:common-features-alternatives-start}
			$s^{(l)}_j \leftarrow 1$ \;
		}
		$\mathit{feature\_position} \leftarrow \mathit{feature\_position} + 1$\;
	} \label{al:afs:greedy-balancing:line:common-features-end}
	\For{$l \leftarrow 0$ \KwTo $a$}{ \label{al:afs:greedy-balancing:line:disjoint-features-start}
		$Q^{(l)} \leftarrow 0$\ \tcp*{Relative quality of each alternative}
	}
	\While(\tcp*[f]{Fill all positions}){$\mathit{feature\_position} \leq \lceil \tau \cdot k \rceil \cdot a + k$}{ \label{al:afs:greedy-balancing:line:stop} \label{al:afs:greedy-balancing:line:disjoint-features-loop}
		$Q_\text{min} \leftarrow \infty$ \tcp*{Find alternative with lowest quality}
		$l_\text{min} \leftarrow -1$ \;
		\For{$l \leftarrow 0$ \KwTo $a$}{ \label{al:afs:greedy-balancing:line:disjoint-features-alternatives-start}
			\If(\tcp*[f]{Check cardinality}){$Q^{(l)} < Q_\text{min}$ \textbf{and} $\sum_{j=1}^{n} s^{(l)}_j < k$}{ \label{al:afs:greedy-balancing:line:cardinality-check}
				$Q_\text{min} \leftarrow Q^{(l)}$ \;
				$l_\text{min} \leftarrow l$ \;
			}
		}
		$j \leftarrow \mathit{indices}[\mathit{feature\_position}]$ \tcp*{Index feature by quality}
		$s^{(l_\text{min})}_j \leftarrow 1$ \tcp*{Add to lowest-quality, non-full alternative}
		$Q^{(l_\text{min})} \leftarrow Q^{(l_\text{min})} + q_j$ \tcp*{Update quality of that alternative}
		$\mathit{feature\_position} \leftarrow \mathit{feature\_position} + 1$\;
	} \label{al:afs:greedy-balancing:line:disjoint-features-end}
	\Return{$s^{(0)}, \dots, s^{(a)}$}
	\caption{\emph{Greedy Balancing} for alternative feature selection.}
	\label{al:afs:greedy-balancing}
\end{algorithm}

\paragraph{Algorithm}

Algorithm~\ref{al:afs:greedy-balancing} outlines \emph{Greedy Balancing}.
First, we check whether the algorithm should terminate early, i.e., whether the number of features~$n$ is not high enough to satisfy the desired user parameters~$k$, $a$, and~$\tau$ (Line~\ref{al:afs:greedy-balancing:line:stop-early}).
Next, we select the first $\lfloor (1 - \tau) \cdot k \rfloor$ features in each alternative like in \emph{Greedy Replacement} (cf.~Algorithm~\ref{al:afs:greedy-replacement}), i.e., we pick the features with the highest quality~$q_j$ (Lines~\ref{al:afs:greedy-balancing:line:common-features-start}--\ref{al:afs:greedy-balancing:line:common-features-end}).

For the remaining spots in the alternatives, we use a Longest Processing Time (LPT) heuristic (Lines~\ref{al:afs:greedy-balancing:line:disjoint-features-start}--\ref{al:afs:greedy-balancing:line:disjoint-features-end}).
Such heuristics are common for \textsc{Multiprocessor Scheduling} and \textsc{Balanced Number Partitioning} problems~\cite{babel1998thek, chen20023partitioning, lawrinenko2018reduction} (cf.~Section~\ref{sec:afs:appendix:complexity:related-work}).
In particular, we continue iterating over features by decreasing quality.
We assign each feature to the alternative that currently has the lowest summed quality~$Q^{(l)}$ and whose size~$k$ has not been reached yet.
We continue this procedure until all alternatives have reached size~$k$ (Line~\ref{al:afs:greedy-balancing:line:stop}).
\begin{example}[Algorithm of \emph{Greedy Balancing}]
	Consider $n=6$~features with univariate feature qualities $q = (9,8,7,3,2,1)$, feature-set size~$k=4$, number of alternatives~$a=1$, and dissimilarity threshold~$\tau = 0.5$.
	The features with qualities~$9$ and $8$ become part of both feature sets, $s^{(0)}$ and $s^{(1)}$, since $\lfloor (1 - \tau) \cdot k \rfloor = 2$ (Lines~\ref{al:afs:greedy-balancing:line:common-features-start}--\ref{al:afs:greedy-balancing:line:common-features-end}).
	At this point, both alternatives have the same relative quality $Q^{(0)} = Q^{(1)} = 0$, which ignores the quality of the shared features.
	Now the LPT heuristic becomes active (Lines~\ref{al:afs:greedy-balancing:line:disjoint-features-start}--\ref{al:afs:greedy-balancing:line:disjoint-features-end}).
	The feature with quality~$7$ is added to $s^{(0)}$, which causes $Q^{(0)} > Q^{(1)}$ (i.e., $7 > 0$).
	Thus, the feature with quality~3 is added to $s^{(1)}$.
	As $Q^{(0)} > Q^{(1)}$ (i.e., $7 > 3$) still holds, the feature with quality~2 becomes part of $s^{(1)}$ as well.
	Because $s^{(1)}$ has reached size~$k = 4$, the feature with quality~1 is added to $s^{(0)}$, even if the latter still has a higher relative quality (i.e., $7 > 5$).
	Now both alternatives have reached their desired size and $n = 6 = \lceil 0.5 \cdot 4 \rceil \cdot 1 + 4 = \lceil \tau \cdot k \rceil \cdot a + k$ (Line~\ref{al:afs:greedy-balancing:line:stop}).
	Thus, the algorithm terminates.
	The solution consists of $s^{(0)} = (1,1,1,0,0,1)$ and $s^{(1)} = (1,1,0,1,1,0)$.
	\label{ex:afs:greedy-balancing:algorithm}
\end{example}

\paragraph{Time complexity}

Like \emph{Greedy Replacement}, \emph{Greedy Balancing} has an upfront cost of $O(n \cdot \log n)$ for sorting feature qualities (Line~\ref{al:afs:greedy-balancing:line:sorting}) and then iterates over $O(n)$ $\mathit{feature\_position}$s (Lines~\ref{al:afs:greedy-balancing:line:common-features-loop} and~\ref{al:afs:greedy-balancing:line:disjoint-features-loop}).
For each $\mathit{feature\_position}$, the algorithm iterates over $a$~alternatives (Lines~\ref{al:afs:greedy-balancing:line:common-features-alternatives-start} and~\ref{al:afs:greedy-balancing:line:disjoint-features-alternatives-start}) and conducts a constant number of operations each, which yields total update costs of $O(a \cdot n)$.
This figure assumes cardinality checks (Line~\ref{al:afs:greedy-balancing:line:cardinality-check}) can be done in $O(1)$, e.g., by storing the current feature-set sizes.
There is also a total cost of~$O(a \cdot n)$ for array initialization (Lines~\ref{al:afs:greedy-balancing:line:array-initialization-start}--\ref{al:afs:greedy-balancing:line:array-initialization-end}).
Since $a < n$ (Line~\ref{al:afs:greedy-balancing:line:stop-early}), the overall time complexity of \emph{Greedy Balancing} is $O(n^2)$, as for \emph{Greedy Replacement}.

\paragraph{Quality}

\emph{Greedy Balancing} selects the same features as \emph{Greedy Replacement} and only changes their assignment to the feature sets.
Thus, the summed feature-set quality remains the same, while the minimum feature-set quality may be higher due to balancing.
Hence, the quality guarantee of \emph{Greedy Replacement} (cf.~Proposition~\ref{prop:afs:approximation-greedy-replacement}) holds here as well:
\begin{proposition}[Approximation quality of \emph{Greedy Balancing}]
	Assume non-negative univariate feature qualities of $n$~features (cf.~Equation~\ref{eq:afs:univariate-filter}), $a \in \mathbb{N}_0$~alternatives, the Dice dissimilarity (cf.~Equation~\ref{eq:afs:dice}) as~$d(\cdot)$, a dissimilarity threshold~$\tau \in [0,1]$, desired feature-set size~$k \in \mathbb{N}$, and $k + a \cdot \lceil \tau \cdot k \rceil \leq n$.
	Under these conditions, \emph{Greedy Balancing} reaches at least a fraction of $\frac{\lfloor (1 - \tau) \cdot k \rfloor}{k}$ of the optimal objective values of the optimization problems for (1) sequential search (cf.~Definition~\ref{def:afs:alternative-feature-selection-sequential}), (2) simultaneous search with sum-aggregation (cf.~Definition~\ref{def:afs:alternative-feature-selection-simultaneous} and Equation~\ref{eq:afs:afs-simultaneous-sum-objective}), and (3) simultaneous search with min-aggregation (cf.~Definition~\ref{def:afs:alternative-feature-selection-simultaneous} and Equation~\ref{eq:afs:afs-simultaneous-min-objective}).
	\label{prop:afs:approximation-greedy-balancing}
\end{proposition}
For min-aggregation in the objective, \emph{Greedy Balancing} can even be better than exact sequential search, as Example~\ref{ex:afs:greedy-replacement:worse-than-min-agg} shows, where the heuristic search would yield the same solution as exact simultaneous search with min-aggregation.
However, the heuristic can also be worse than exact sequential search and exact simultaneous search, as Example~\ref{ex:afs:greedy-replacement:worse-than-exact} shows, where \emph{Greedy Balancing} would yield the same solution as \emph{Greedy Replacement}.

\paragraph{Limitations}

\emph{Greedy Balancing} shares several limitations with \emph{Greedy Replacement}, e.g., it may be worse than exact search, assumes univariate feature qualities, and does not work if the number of features~$n$ is too low relative to~$k$, $a$, and $\tau$.
In the latter case, \emph{Greedy Balancing} yields no solution due to its simultaneous nature, while \emph{Greedy Replacement} yields at least some alternatives.
However, if running out of features is not an issue, \emph{Greedy Balancing} has the advantage of more balanced feature-set qualities.
Also, one could easily adapt \emph{Greedy Balancing} to yield the largest feasible number of alternatives in case~$a$ alternatives are infeasible.

\section{Related Work}
\label{sec:afs:related-work}

In this section, we review related work from the fields of feature selection (cf.~Section~\ref{sec:afs:related-work:feature-selection}), subgroup discovery (cf.~Section~\ref{sec:afs:related-work:subgroup-discovery}), clustering (cf.~Section~\ref{sec:afs:related-work:clustering}), subspace clustering and subspace search (cf.~Section~\ref{sec:afs:related-work:subspace}), explainable artificial intelligence (cf.~Section~\ref{sec:afs:related-work:xai}),
and Rashomon sets (cf.~Section~\ref{sec:afs-related-work:rashomon-sets}).
To the best of our knowledge, searching for optimal alternative feature sets in the sense of this paper is novel.
However, there is literature on optimal alternatives outside the field of feature selection.
Also, there are works on finding multiple, diverse feature sets.

\subsection{Feature Selection}
\label{sec:afs:related-work:feature-selection}

\paragraph{Conventional feature selection}

Most feature-selection methods only yield one solution~\cite{borboudakis2021extending}, though some exceptions exist.
Nevertheless, none of the following approaches searches for optimal alternatives in our sense.

\cite{siddiqi2020genetic}~proposes a genetic algorithm that iteratively updates a population of multiple feature sets.
To foster diversity, the algorithm's fitness criterion does not only consider feature-set quality but also a penalty on feature-set overlap in the population.
However, users cannot control the admissible overlap, i.e., there is no parameter comparable to~$\tau$.
In contrast, the genetic algorithm's parameter for the population size corresponds to the number of alternatives.

\cite{emmanouilidis1999selecting}~employs multi-objective genetic algorithms to obtain prediction models with different complexity and diverse feature sets.
However, the two objectives are prediction performance and feature-set size, while diversity only influences the genetic selection step under particular circumstances.

\cite{mueller2021feature}~clusters features and forms alternatives by picking one feature from each cluster.
However, they do this to reduce the number of features for subsequent model selection and model evaluation, not as a guided search for alternatives.

\paragraph{Ensemble feature selection}

Ensemble feature selection~\cite{saeys2008robust, seijo2017ensemble} combines feature-selection results, e.g., obtained by different feature-selection methods or on different samples of the data.
Fostering diverse feature sets might be a sub-goal to improve prediction performance, but it is usually only an intermediate step.
This focus differs from our goal of finding optimal alternatives.

\cite{woznica2012model}~obtains feature sets or rankings on bootstrap samples of the data.
Next, an aggregation strategy creates one or multiple diverse feature sets.
The authors propose using k-medoid clustering and frequent itemset mining for the latter.
While these approaches allow to control the number of feature sets, there is no parameter for their dissimilarity.
Also, aggregation builds on bootstrap sampling instead of being allowed to form arbitrary alternatives.

\cite{liu2019subspace}~builds an ensemble prediction model from classifiers trained on different feature sets.
To this end, a genetic algorithm iteratively evolves a population of feature sets.
Diversity is one of multiple fitness criteria, with the Hamming distance quantifying the dissimilarity of feature sets.
However, since feature diversity is only one of several objectives, users cannot control it directly.

\cite{shekar2017diverse} uses different univariate feature-quality measures to initialize a population of multiple feature sets.
Next, they iteratively form new feature sets by choosing those features from two existing feature sets that are only in either set.
This procedure does not guarantee how diverse the final feature sets are.

\cite{guru2018alternative}~computes feature relevance separately for each class and then combines the top features.
This procedure can yield alternatives but does not enforce dissimilarity.
Also, the number of alternatives is fixed to the number of classes.

\paragraph{Statistically equivalent feature sets}

Approaches for statistically equivalent feature sets~\cite{borboudakis2021extending, lagani2017feature} use statistical tests to determine features or feature sets that are equivalent for predictions.
E.g., a feature may be independent of the target given another feature.
A search algorithm conducts multiple such tests and outputs equivalent feature sets or a corresponding feature grouping.

Our notion of alternatives differs from equivalent feature sets in several aspects.
In particular, building optimal alternatives from equivalent feature sets is not straightforward.
Depending on how the statistical tests are configured, there can be an arbitrary number of equivalent feature sets without explicit quality-based ordering.
Instead, we always provide a fixed number of alternatives.
Also, our alternatives need not have equivalent quality but should be optimal under constraints.
Further, our dissimilarity threshold allows controlling overlap between feature sets instead of eliminating all redundancies.

\paragraph{Constrained feature selection}

We define alternatives via constraints on feature sets.
There already is work on other kinds of constraints in feature selection, e.g., for feature cost~\cite{paclik2002feature}, feature groups~\cite{yuan2006model}, or domain knowledge~\cite{bach2022empirical, groves2015toward}.
These approaches are orthogonal to our work, as such constraints do not explicitly foster optimal alternatives.
At most, they might implicitly lead to alternative solutions~\cite{bach2022empirical}.
Further, most of the approaches are tied to particular constraint types, while our integer-programming formulation supports such constraints besides the ones for alternatives.
\cite{bach2022empirical} is an exception in that regard since it models feature selection as a Satisfiability Modulo Theories (\textsc{SMT}) optimization problem~\cite{barrett2018satisfiability, nieuwenhuis2006sat}, which also admits our constraints for alternatives.

\subsection{Subgroup Discovery}
\label{sec:afs:related-work:subgroup-discovery}

\cite{leeuwen2012diverse}~presents six strategies to foster diversity in subgroup set discovery, which searches for interesting regions in the data space, i.e., combinations of conditions on feature values, rather than only selecting features.
Three strategies yield a fixed number of alternatives and the other three a variable number.
The strategies become part of beam search, i.e., a heuristic search procedure, while we mainly consider exact optimization.
Also, the criteria for alternatives differ from ours.
The strategy \emph{fixed-size description-based selection} prunes subgroups with the same quality as previously found ones if they differ by at most one feature-value condition.
In contrast, we require dissimilarity independent from the quality, have a flexible dissimilarity threshold, and support simultaneous besides sequential search for alternatives.
Another strategy, \emph{variable-size description-based selection}, limits the total number of subgroups a feature may occur in but does not constrain subgroup overlap per se.
The four remaining strategies in~\cite{leeuwen2012diverse} have no obvious counterpart in our feature-selection scenario.

\subsection{Clustering}
\label{sec:afs:related-work:clustering}

Finding alternative solutions has been addressed extensively in the field of clustering.
\cite{bailey2014alternative} gives a taxonomy and describes algorithms for alternative clustering.
Our problem definition in Sections~\ref{sec:afs:approach:problem} and~\ref{sec:afs:approach:constraints} is, on a high level, inspired by the one in~\cite{bailey2014alternative}:
Find multiple solutions that maximize quality while minimizing similarity.
\cite{bailey2014alternative} also distinguishes between singular/multiple alternatives and sequential/simultaneous search.
They mention constraint-based search for alternatives as one of several solution paradigms.
Further, feature selection can help to find alternative clusterings~\cite{tao2012novel}.
Nevertheless, the problem definition for alternatives in clustering and feature selection is fundamentally different.
First, the notion of dissimilarity differs, as we want to find differently composed feature sets while alternative clustering targets at different assignments of data objects to clusters.
Second, our objective function, i.e., feature-set quality, relates to a supervised prediction scenario while clustering is unsupervised.

Two exemplary approaches for alternative clustering are \emph{COALA}~\cite{bae2006coala} and \emph{MAXIMUS}~\cite{bae2010clustering}.
COALA~\cite{bae2006coala} imposes \emph{cannot-link constraints} on pairs of data objects rather than constraining features:
Data objects from the same cluster in the original clustering should be assigned to different clusters in the alternative clustering.
In each step of its iterative clustering procedure, COALA compares the quality of an action observing the constraints to another one violating them.
Based on a threshold on the quality ratio, either action is taken.
MAXIMUS~\cite{bae2010clustering} employs an integer program to formulate dissimilarity between clusterings.
In particular, it wants to maximize the dissimilarity of the feature-value distributions in clusters between the clusterings.
The output of the integer program leads to constraints for a subsequent clustering procedure.

\subsection{Subspace Clustering and Subspace Search}
\label{sec:afs:related-work:subspace}

Finding multiple useful feature sets plays a role in subspace clustering~\cite{gunnemann2009detection, hu2018subspace, mueller2009relevant} and subspace search~\cite{fouche2021efficient, nguyen20134s, trittenbach2019dimension}.
These approaches strive to improve the results of data-mining algorithms by using subspaces, i.e., feature sets, rather than the full space, i.e., all features.
While some subspace approaches only consider individual subspaces, others explicitly try to remove redundancy between subspaces~\cite{gunnemann2009detection, hu2018subspace, nguyen20134s} or foster subspace diversity~\cite{fouche2021efficient, trittenbach2019dimension}.
In particular, \cite{hu2018subspace} surveys subspace-clustering approaches yielding multiple results and discusses the redundancy aspect.
However, subspace clustering and -search approaches differ from alternative feature selection in at least one of the following aspects:

First, the objective differs, i.e., definitions of subspace quality deviate from feature-set quality in our scenario.
Second, definitions of subspace redundancy may consider dissimilarity between projections of the entire data, i.e., data objects with feature values, into subspaces, while our notion of dissimilarity purely bases on binary feature-selection decisions.
Third, controlling dissimilarity in subspace approaches is often less user-friendly than with our parameter~$\tau$.
E.g., dissimilarity might be a regularization term in the objective rather than a hard constraint, or there might not be an explicit control parameter at all.

\subsection{Explainable Artificial Intelligence (XAI)}
\label{sec:afs:related-work:xai}

In the field of XAI, alternative explanations might provide additional insights into predictions, enable users to develop and test different hypotheses, appeal to different kinds of users, and foster trust in the predictions~\cite{kim2021multi, wang2019designing}.
In contrast, obtaining significantly different explanations for the same prediction might raise doubts about how meaningful the explanations are~\cite{jain2019attention}.
Finding diverse explanations has been studied for various explainers, e.g., for counterfactuals~\cite{dandl2020multi, karimi2020model, mohammadi2021scaling, mothilal2020explaining, russell2019efficient, wachter2017counterfactual}, criticisms~\cite{kim2016examples}, and semifactual explanations~\cite{artelt2022even}.
There are several approaches to foster diversity, e.g., ensembling different kinds of explanations~\cite{silva2019produce}, considering multiple local minima~\cite{wachter2017counterfactual}, using a search algorithm that maintains diversity~\cite{dandl2020multi}, extending the optimization objective~\cite{artelt2022even, kim2016examples, mothilal2020explaining}, or introducing constraints~\cite{karimi2020model, mohammadi2021scaling, russell2019efficient}.
The last option is similar to the way we enforce alternatives.
Of the various mentioned approaches, only~\cite{artelt2022even, mohammadi2021scaling, mothilal2020explaining} introduce a parameter to control the diversity of solutions.
Of these three works, only~\cite{mohammadi2021scaling} offers a user-friendly dissimilarity threshold in~$[0,1]$, while the other two approaches employ a regularization parameter in the objective.

Despite similarities, all the previously mentioned XAI techniques tackle different problems than alternative feature selection.
In particular, they provide local explanations, i.e., target at prediction outcomes for individual data objects and build on feature values.
In contrast, we are interested in the global prediction quality of feature sets.
For example, counterfactual explanations~\cite{guidotti2022counterfactual, stepin2021survey, verma2020counterfactual} alter feature \emph{values} \emph{as little as possible} to produce an alternative prediction \emph{outcome}.
In contrast, alternative feature sets might alter the feature \emph{selection} \emph{significantly} while trying to maintain the original prediction \emph{quality}.

\subsection{Rashomon Sets}
\label{sec:afs-related-work:rashomon-sets}

A Rashomon set is a set of prediction models that reach a certain, e.g., close-to-optimal, prediction performance~\cite{fisher2019all}.
Despite similar performance, these models may still assign different feature importance scores, leading to different explanations~\cite{laberge2023partial}.
Thus, Rashomon sets may yield partial information about alternative feature sets.
However, approaches for Rashomon sets do not explicitly search for alternative feature sets as a whole, i.e., feature sets satisfying a dissimilarity threshold relative to other sets.
Instead, these approaches focus on the range of each feature's importance over prediction models.
Further, our notion of alternatives is not bound to model-based feature importance but encompasses a broader range of feature-selection methods.
Finally, we use importance scores from one instead of multiple models to find importance-based alternatives.

\section{Experimental Design}
\label{sec:afs:experimental-design}

In this section, we introduce our experimental design.
After a brief overview of its goal and components (cf.~Section~\ref{sec:afs:experimental-design:overview}), we describe its components in detail:
evaluation metrics (cf.~Section~\ref{sec:afs:experimental-design:evaluation}), methods (cf.~Section~\ref{sec:afs:experimental-design:approaches}), and datasets (cf.~Section~\ref{sec:afs:experimental-design:datasets}).
Finally, we briefly outline our implementation (cf.~Section~\ref{sec:afs:experimental-design:implementation}).

\subsection{Overview}
\label{sec:afs:experimental-design:overview}

We conduct experiments with 30 binary-classification datasets.
As evaluation metrics, we consider feature-set quality and runtime.
We compare five feature-selection methods, representing different notions of feature-set quality.
Also, we train prediction models with the resulting feature sets and analyze prediction performance.
To find alternatives, we consider simultaneous and sequential search, both with solver-based and heuristic search methods.
We systematically vary the two user parameters for searching alternatives, i.e., the number of alternatives~$a$ and the dissimilarity threshold~$\tau$.

\subsection{Evaluation Metrics}
\label{sec:afs:experimental-design:evaluation}

\paragraph{Feature-set quality}

We evaluate feature-set quality with two metrics.
First, we report the \emph{objective value}~$Q(s,X,y)$ of the feature-selection methods, which guided the search for alternatives.
Second, we train prediction models with the found feature sets.
We report \emph{prediction performance} in terms of the Matthews correlation coefficient (MCC)~\cite{matthews1975comparison}.
This coefficient is insensitive to class imbalance, reaches its maximum of~1 for perfect predictions, and is~0 for random guessing as well as constant predictions.

We conduct a stratified five-fold cross-validation.
In particular, the search for alternatives and model training only use the training data, while we employ the test data for evaluation:
For the test-set objective value, we compute the objective on the test set but with the feature selection from the training set.
For the test-set prediction performance, we predict on the test set but use a prediction model trained with these features on the training set.

\paragraph{Runtime}

We consider two metrics related to runtime.

First, we analyze the \emph{optimization time}.
For white-box feature-selection methods in solver-based search for alternatives, we sum the measured runtime of solver calls.
We exclude the time for computing feature qualities and feature dependencies for the objective since one can compute these values once per dataset and then reuse them in each solver call.
For \emph{Greedy Wrapper} feature selection and the heuristic search methods for alternatives, we measure the runtime of the corresponding search algorithms.
For \emph{Greedy Wrapper}, this search procedure involves multiple solver calls and trainings of a prediction model.

Second, we examine the \emph{optimization status}, which can take four values for the solver-based search.
If the solver finished before the timeout, it either found an \emph{optimal} solution or proved the problem \emph{infeasible}, i.e., no solution exists.
If the solver reached its timeout, it either found a \emph{feasible} solution without proving its optimality or found no valid solution, though one might exist, so the problem is \emph{not solved}.
For the heuristic search methods, we only use \emph{not solved} and \emph{feasible} as statuses, as these search methods are neither guaranteed to find the optimum nor do they prove infeasibility if they terminate early.

\subsection{Methods}
\label{sec:afs:experimental-design:approaches}

We employ multiple methods for making predictions (cf.~Section~\ref{sec:afs:experimental-design:approaches:prediction}), feature selection (cf.~Section~\ref{sec:afs:experimental-design:approaches:feature-selection}), and searching alternatives (cf.~Section~\ref{sec:afs:experimental-design:approaches:alternatives}).

\subsubsection{Prediction}
\label{sec:afs:experimental-design:approaches:prediction}

As prediction models, we use decision trees~\cite{breiman1984classification}, which admit learning complex, non-linear dependencies from the data.
Preliminary experiments with random forests~\cite{breiman2001random} and k-nearest neighbors yielded similar insights.
We leave the trees' hyperparameters at their defaults, except for using information gain instead of Gini impurity as the split criterion, to be consistent with our filter feature-selection methods.
Note that tree models also select features themselves, so they may not use all features from the alternative feature sets.
However, this is not an issue for our study.
We are interested in which performance the models achieve if limited to certain feature sets, not how they use each available feature.

\subsubsection{Feature Selection (Objective Functions)}
\label{sec:afs:experimental-design:approaches:feature-selection}

We search for alternatives under different notions of feature-set quality in the objective function.
We choose five well-known feature-selection methods that are easy to parameterize and cover the different categories from Section~\ref{sec:afs:fundamentals:quality} except \emph{embedded}, as explained in Section~\ref{sec:afs:approach:objectives:embedding}.
However, we use feature importance from an embedded method as post-hoc importance scores.
Four feature-selection methods allow a white-box formulation of the optimization problem, while \emph{Greedy Wrapper} is black-box.
With each feature-selection method, we enforce fixed feature-set sizes of $k \in \{5,10\}$.

\paragraph{Filter feature selection}

We evaluate three filter methods, all using mutual information~\cite{kraskov2004estimating} as the dependency measure~$q(\cdot)$.
This measure can capture arbitrary dependencies rather than, e.g., just linear ones.
\emph{MI} denotes a univariate filter (cf.~Equation~\ref{eq:afs:univariate-filter}), while \emph{FCBF} (cf.~Equation~\ref{eq:afs:fcbf}) and \emph{mRMR} (cf.~Equation~\ref{eq:afs:mrmr-linear}) are multivariate.
We normalize the mutual-information values per dataset and cross-validation fold to improve comparability:
For \emph{FCBF} and \emph{MI}, we scale the individual features' qualities with a constant such that the overall objective value is in $[0, 1]$.
For \emph{mRMR}, we min-max-normalize all mutual-information values to $[0,1]$, so the overall objective is in $[-1,1]$.

\paragraph{Wrapper feature selection}

As a wrapper method, we employ our hill-climbing procedure \emph{Greedy Wrapper} (cf.~Algorithm~\ref{al:afs:greedy-wrapper}) with $\mathit{max\_iters} = 1000$.
To evaluate feature-set quality in the wrapper, we apply a stratified 80:20 holdout split and train decision trees.
$Q(s,X,y)$ corresponds to the prediction performance in terms of MCC on the 20\% validation part.

\paragraph{Post-hoc feature importance}

As a post-hoc importance measure called \emph{Model Gain}, we use importance scores from \emph{scikit-learn's} decision trees.
There, importance expresses a feature's contribution towards optimizing the split criterion of the tree, for which we choose information gain.
These importances are normalized to sum up to~1 by default.
We plug them into Equation~\ref{eq:afs:univariate-filter}, i.e., treat them like univariate filter scores, though they actually originate from trees trained with all features and thus are not univariate.

\subsubsection{Alternatives (Constraints)}
\label{sec:afs:experimental-design:approaches:alternatives}

\paragraph{Overview}

In our evaluation, we categorize search methods for alternatives in two dimensions that are orthogonal to each other:
Solver-based vs. heuristic and sequential vs. simultaneous.
Also, we analyze the impact of the user parameters~$a$ and~$\tau$.

\paragraph{Solver-based search methods}

For the four feature-selection methods with white-box objectives, we use the integer-programming solver \emph{SCIP}~\cite{bestuzheva2021scip} to solve the underlying optimization problems exactly.
Given sufficient solving time, these alternatives are globally optimal.
For \emph{Greedy Wrapper}, the search procedure (Algorithm~\ref{al:afs:greedy-wrapper}) is heuristic (though still solver-based, so we place it in this category) and might not cover the entire search space.
There, the solver only assists in finding valid solutions but does not optimize quality.

For each feature selection method, we analyze \emph{sequential} (cf.~Definition~\ref{def:afs:alternative-feature-selection-sequential}) and \emph{simultaneous} (cf.~Definition~\ref{def:afs:alternative-feature-selection-simultaneous}) solver-based search for alternatives.
For the latter, we employ sum-aggregation (cf.~Equation~\ref{eq:afs:afs-simultaneous-sum-objective}) and min-aggregation (cf.~Equation~\ref{eq:afs:afs-simultaneous-min-objective}) in the objective.
In figures and tables, we use the abbreviations \emph{seq.}, \emph{sim. (sum)}, and \emph{sim. (min)}.

\paragraph{Heuristic search methods}

We also evaluate heuristic search methods, which do not use a solver (cf.~Section~\ref{sec:afs:approach:univariate-heuristics}).
In particular, we employ \emph{Greedy Replacement} (cf.~Algorithm~\ref{al:afs:greedy-replacement}), which is a sequential search method, and \emph{Greedy Balancing} (cf.~Algorithm~\ref{al:afs:greedy-balancing}), which is a simultaneous search method.
In figures and tables, we use the abbreviations \emph{rep.} and \emph{bal.}.
Since these heuristics assume univariate feature qualities (cf.~Equation~\ref{eq:afs:univariate-filter}), we only combine them with the univariate feature-selection methods \emph{MI} and \emph{Model Gain}.

\paragraph{Search parametrization}

We vary the parameters of the search systematically:
We evaluate $a \in \{1, \dots, 10\}$~alternatives for sequential search methods and $a \in \{1, \dots, 5\}$ for simultaneous search methods due to the higher runtime of the latter.
For the dissimilarity threshold~$\tau$, we analyze all possible sizes of the feature-set overlap in the Dice dissimilarity (cf.~Equations~\ref{eq:afs:dice} and~\ref{eq:afs:dice-rearranged-equal-size}).
Thus, for $k=5$, we consider $\tau \in \{0.2, 0.4, 0.6, 0.8, 1.0\}$, corresponding to an overlap of four to zero features.
For $k=10$, we consider $\tau \in \{0.1, 0.2, \dots, 1.0\}$.
We exclude $\tau = 0$, which would allow returning duplicate feature sets.

\paragraph{Timeout}

In solver-based search, we employ a timeout to enable a large-scale evaluation and account for the high variance of solver runtime.
In particular, we grant each solver call 60~s multiplied by the number of feature sets sought.
Thus, solver-based sequential search conducts multiple solver calls with 60~s timeout each, while solver-based simultaneous search conducts one solver call with proportionally more time, e.g., 300~s for five feature sets.
For 84\% of the feature sets in our evaluation, the solver finished before the timeout.

\paragraph{Competitors for search methods}

As discussed in Section~\ref{sec:afs:related-work}, related work pursues different objective functions, operates with different notions of alternatives, and may only target particular feature-selection methods.
All these points prevent a meaningful comparison to our search methods.
E.g., a feature set deemed alternative in related work may violate our constraints for alternatives.
Further, we can still put the feature-set quality into perspective by comparing alternatives to each other.
In particular, the original feature set, i.e., without constraints for alternatives, serves as a natural reference point.

\begin{table}[p]
	\centering
	\caption{
		Datasets from PMLB used in our experiments.
		$m$~denotes the number of data objects and $n$~the number of features.
		In dataset names, we replaced \emph{GAMETES\_Epistasis} with \emph{GE\_} and \emph{GAMETES\_Heterogeneity} with \emph{GH\_} to reduce the table's width.
	}
	\begin{tabular}{lrr}
		\toprule
		Dataset & $m$ & $n$ \\
		\midrule
		backache & 180 & 32 \\
		chess & 3196 & 36 \\
		churn & 5000 & 20 \\
		clean1 & 476 & 168 \\
		clean2 & 6598 & 168 \\
		coil2000 & 9822 & 85 \\
		credit\_a & 690 & 15 \\
		credit\_g & 1000 & 20 \\
		dis & 3772 & 29 \\
		GE\_2\_Way\_20atts\_0.1H\_EDM\_1\_1 & 1600 & 20 \\
		GE\_2\_Way\_20atts\_0.4H\_EDM\_1\_1 & 1600 & 20 \\
		GE\_3\_Way\_20atts\_0.2H\_EDM\_1\_1 & 1600 & 20 \\
		GH\_20atts\_1600\_Het\_0.4\_0.2\_50\_EDM\_2\_001 & 1600 & 20 \\
		GH\_20atts\_1600\_Het\_0.4\_0.2\_75\_EDM\_2\_001 & 1600 & 20 \\
		hepatitis & 155 & 19 \\
		Hill\_Valley\_with\_noise & 1212 & 100 \\
		horse\_colic & 368 & 22 \\
		house\_votes\_84 & 435 & 16 \\
		hypothyroid & 3163 & 25 \\
		ionosphere & 351 & 34 \\
		molecular\_biology\_promoters & 106 & 57 \\
		mushroom & 8124 & 22 \\
		ring & 7400 & 20 \\
		sonar & 208 & 60 \\
		spambase & 4601 & 57 \\
		spect & 267 & 22 \\
		spectf & 349 & 44 \\
		tokyo1 & 959 & 44 \\
		twonorm & 7400 & 20 \\
		wdbc & 569 & 30 \\
		\bottomrule
	\end{tabular}
	\label{tab:afs:datasets}
\end{table}

\subsection{Datasets}
\label{sec:afs:experimental-design:datasets}

We use datasets from the Penn Machine Learning Benchmarks (PMLB)~\cite{olson2017pmlb,romano2021pmlb}.
To harmonize evaluation, we only consider binary-classification datasets, though alternative feature selection also works for regression and multi-class problems.
We exclude datasets with less than 100 data objects since they may entail a high uncertainty when assessing feature-set quality.
Also, we exclude datasets with less than 15 features to leave room for alternatives.
Next, we exclude one dataset with 1000 features, which would dominate the overall runtime.
Finally, we manually exclude datasets that seem to be duplicated or modified versions of other datasets.
Consequently, we obtain 30 datasets with 106 to 9822 data objects and 15 to 168 features (cf.~Table~\ref{tab:afs:datasets}).
The datasets do not contain any missing values.
Categorical features have an ordinal encoding by default.

\subsection{Implementation and Execution}
\label{sec:afs:experimental-design:implementation}

We implemented our experimental pipeline in Python~3.8, using \emph{scikit-learn}~\cite{pedregosa2011scikit-learn} for machine learning and the integer-programming solver \emph{SCIP}~\cite{bestuzheva2021scip} via the package \emph{OR-Tools}~\cite{perron2022or-tools} for solver-based search.
The code is available on \emph{GitHub}\footnote{\url{https://github.com/Jakob-Bach/Alternative-Feature-Selection}} and additionally backed up in the \emph{Software Heritage archive}\footnote{\href{https://archive.softwareheritage.org/swh:1:dir:6b679eb1b901c281b7c7e7fdc9dbdaec2f627c7a;origin=https://github.com/Jakob-Bach/Alternative-Feature-Selection;visit=swh:1:snp:e1e4ba6be250fa9a7de03fa2103ac206abdb8d17;anchor=swh:1:rev:7d027c23820fce067cce8953fdfdac719722949b}{swh:1:dir:6b679eb1b901c281b7c7e7fdc9dbdaec2f627c7a}}.
A requirements file in our repository specifies the versions of all dependencies.
Further, we released the methods for alternative feature selection as the Python package \emph{alfese}\footnote{\url{https://pypi.org/project/alfese/}} on \emph{PyPI} to ease reuse.
Finally, we published all experimental data\footnote{\url{https://doi.org/10.35097/4ttgrpx92p30jwww}}.

Our experimental pipeline parallelizes over datasets, cross-validation folds, and feature-selection methods, while each of these experimental tasks runs single-threaded.
We ran the pipeline on a server with 160~GB RAM and an \emph{AMD EPYC 7551} CPU, having 32~physical cores and a base clock of 2.0~GHz.
With this hardware, the parallelized pipeline run took approximately 249~hours, i.e., about 10.4~days.

\section{Evaluation}
\label{sec:afs:evaluation}

In this section, we evaluate our experiments.
After comparing feature-selection methods without constraints (cf.~Section~\ref{sec:afs:evaluation:feature-selection}), we discuss the parametrization for searching alternatives: the search methods (cf.~Section~\ref{sec:afs:evaluation:search-methods}) and the two user parameters~$a$ and~$\tau$ (cf.~Section~\ref{sec:afs:evaluation:parameters}).
Section~\ref{sec:afs:evaluation:summary} summarizes key findings.

\subsection{Feature-Selection Methods}
\label{sec:afs:evaluation:feature-selection}

\begin{figure}[t]
	\centering
	\begin{subfigure}[t]{0.48\textwidth}
		\centering
		\includegraphics[width=\textwidth, trim=15 35 5 15, clip]{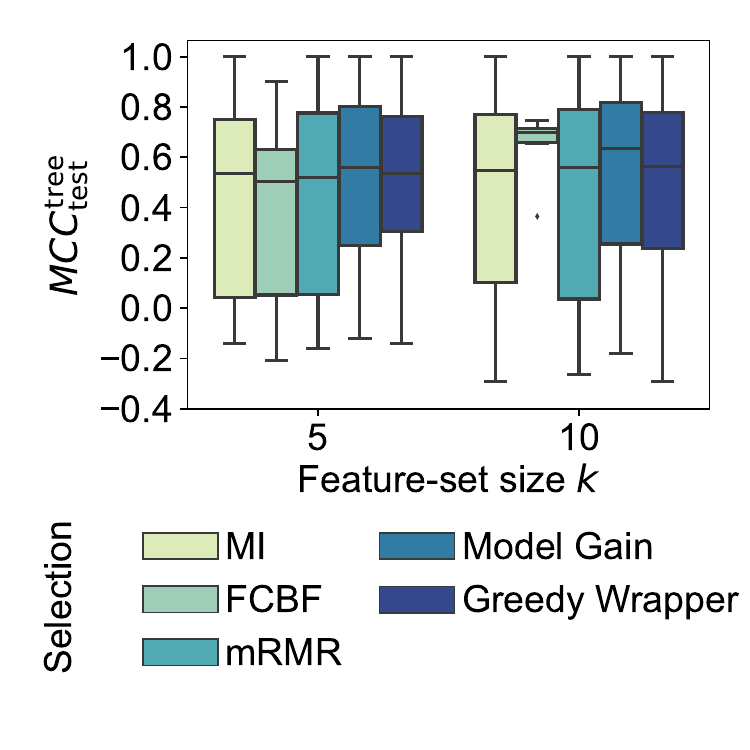}
		\caption{Test-set prediction performance by feature-set size~$k$.}
		\label{fig:afs:impact-fs-method-k-decision-tree-test-mcc}
	\end{subfigure}
	\hfill
	\begin{subfigure}[t]{0.48\textwidth}
		\centering
		\includegraphics[width=\textwidth, trim=15 35 5 15, clip]{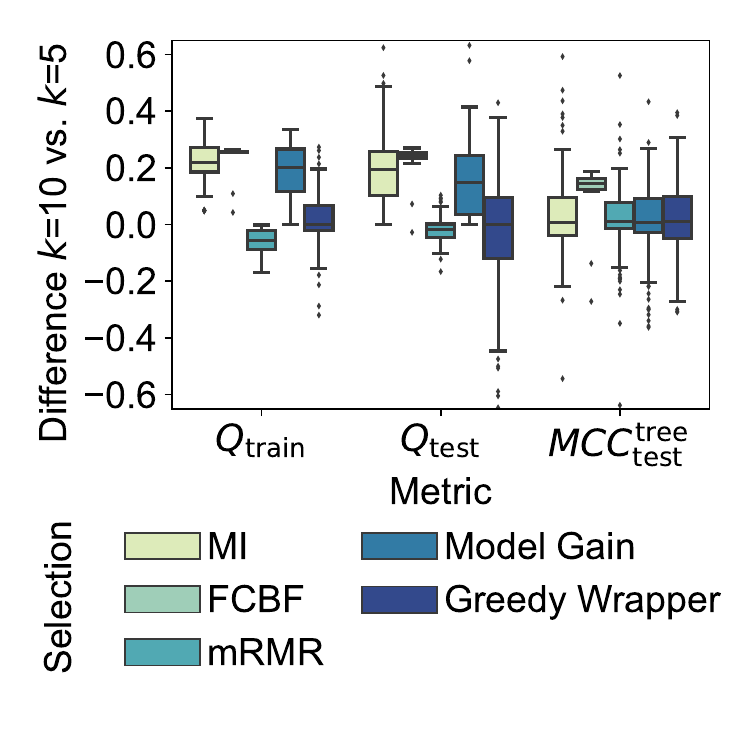}
		\caption{
			Difference in feature-set quality between $k=10$ and $k=5$ by evaluation metric.
			Y-axis is truncated to improve readability.
		}
		\label{fig:afs:impact-fs-method-k-metric-diff}
	\end{subfigure}
	\caption{
		Distribution of feature-set quality over datasets and cross-validation folds, by feature-selection method.
		Results from the original feature sets of solver-based sequential search.
	}
	\label{fig:afs:impact-fs-method-k-quality}
\end{figure}

\paragraph{Prediction performance}

The five feature-selection methods in our experiments employ different objective functions~$Q(s,X,y)$, so comparing objective values between them does not make sense.
However, we can analyze the prediction performance of the obtained feature sets.
Figure~\ref{fig:afs:impact-fs-method-k-decision-tree-test-mcc} displays the distribution of a decision tree's test-set MCC on the original feature sets, i.e., without constraints, for the feature-selection methods.
The mean test-set MCC is 0.53 for \emph{Model Gain} and \emph{Greedy Wrapper}, 0.47 for \emph{MI}, 0.46 for \emph{mRMR}, and 0.43 for \emph{FCBF}.
Thus, the analyses of alternative feature sets in Sections~\ref{sec:afs:evaluation:search-methods} and~\ref{sec:afs:evaluation:parameters} focus on \emph{Model Gain} while still discussing the remaining feature-selection methods.

The univariate, model-free method \emph{MI} keeps up surprisingly well with more sophisticated methods.
It uses the same objective function as \emph{Model Gain} but obtains its feature qualities from a bivariate dependency measure rather than a prediction model.

\emph{Greedy Wrapper} uses prediction performance to assess feature-set quality but employs a search heuristic instead of optimizing globally.
In particular, it performed 661 iterations on average to determine the original feature sets.
However, the number of possible feature sets is higher, e.g., already $\binom{15}{5} = 3003$ for the lowest-dimensional dataset in our evaluation (cf.~Table~\ref{tab:afs:datasets}) and $k=5$.
The still high prediction performance comes at the expense of high runtime (cf.~Table~\ref{tab:afs:impact-search-fs-method-optimization-time}), so we prefer \emph{Model Gain} for later evaluations.

\emph{FCBF}'s results may be taken with a grain of salt:
The original feature set in solver-based sequential search is already infeasible, i.e., no solution satisfied the constraints, in 71\% of the cases for \emph{FCBF} but never for the other feature-selection methods.
Over all solver-based search runs, even 89\% of the feature sets were infeasible for \emph{FCBF} but only 18\% for \emph{Model Gain}.
In particular, the combination of feature-correlation constraints in our formulation of \emph{FCBF} (cf.~Equation~\ref{eq:afs:fcbf}) with a fixed feature-set size~$k$ may make the problem infeasible, especially if~$k$ gets larger.

\paragraph{Influence of feature-set size~$k$}

One could expect larger feature sets to exhibit a higher feature-set quality than smaller ones, but the picture in our experiments is more nuanced.
In particular, quality may not increase proportionally with $k$ or may even decrease.
As Figure~\ref{fig:afs:impact-fs-method-k-metric-diff} shows for the original feature sets of solver-based sequential search, \emph{MI} and \emph{Model Gain} exhibit an increase of the training-set objective value~$Q_\text{train}$ from~$k=5$ to~$k=10$, i.e., the difference depicted in Figure~\ref{fig:afs:impact-fs-method-k-metric-diff} is positive.
As these objectives are monotonic and the feature qualities are non-negative, a decrease in the training-set objective value is impossible.
In contrast, \emph{Greedy Wrapper} with its black-box objective does not necessarily benefit from more features.
The latter insight also applies to \emph{mRMR}, which normalizes its objective with the number of selected features and penalizes feature redundancy.
For \emph{FCBF}, the fraction of feasible feature sets changes considerably from $k=5$ to $k=10$, so the overall quality between these two settings should not be compared.
As Figure~\ref{fig:afs:impact-fs-method-k-metric-diff} also displays, the benefit of larger feature sets is even less clear for prediction performance.
In particular, all feature-selection methods except \emph{FCBF} show a median difference in test-set MCC close to zero when comparing $k=5$ to $k=10$.

\begin{figure}[p]
	\centering
	\begin{subfigure}[t]{\textwidth}
		\centering
		\includegraphics[width=\textwidth, trim=15 25 35 15, clip]{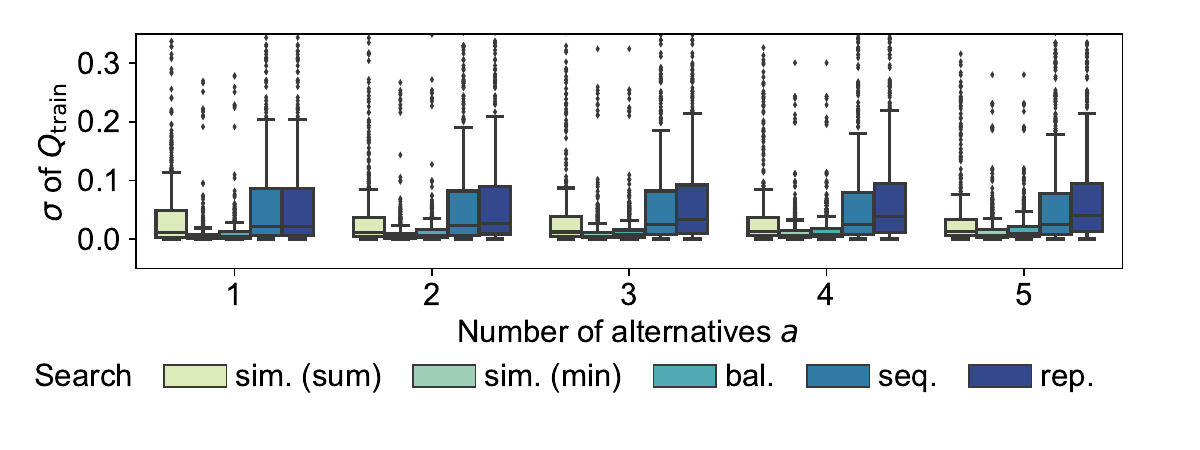}
		\caption{Training-set objective value.}
		\label{fig:afs:impact-search-stddev-train-objective}
	\end{subfigure}
	\\ \vspace{\baselineskip}
	\begin{subfigure}[t]{\textwidth}
		\centering
		\includegraphics[width=\textwidth, trim=15 25 35 15, clip]{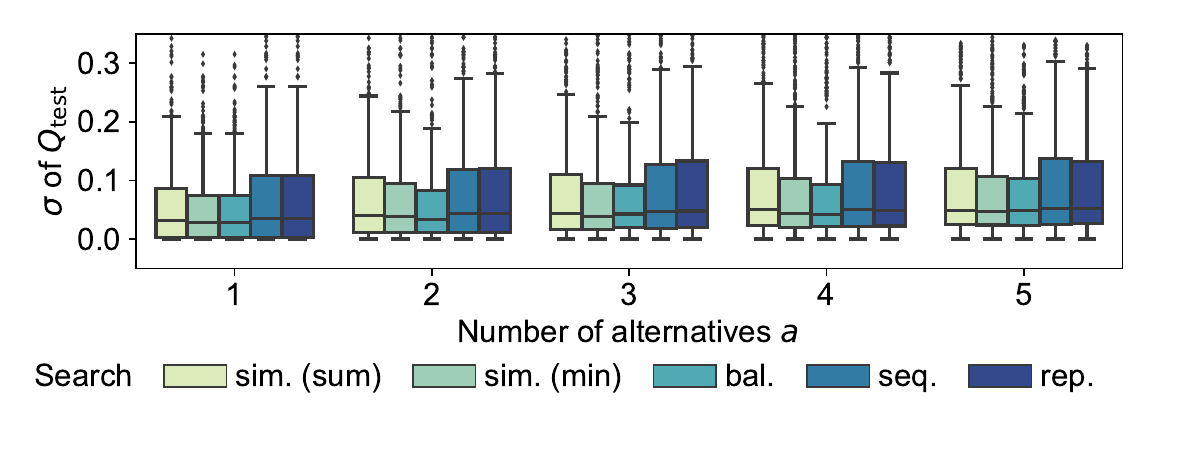}
		\caption{Test-set objective value.}
		\label{fig:afs:impact-search-stddev-test-objective}
	\end{subfigure}
	\\ \vspace{\baselineskip}
	\begin{subfigure}[t]{\textwidth}
		\centering
		\includegraphics[width=\textwidth, trim=15 25 35 15, clip]{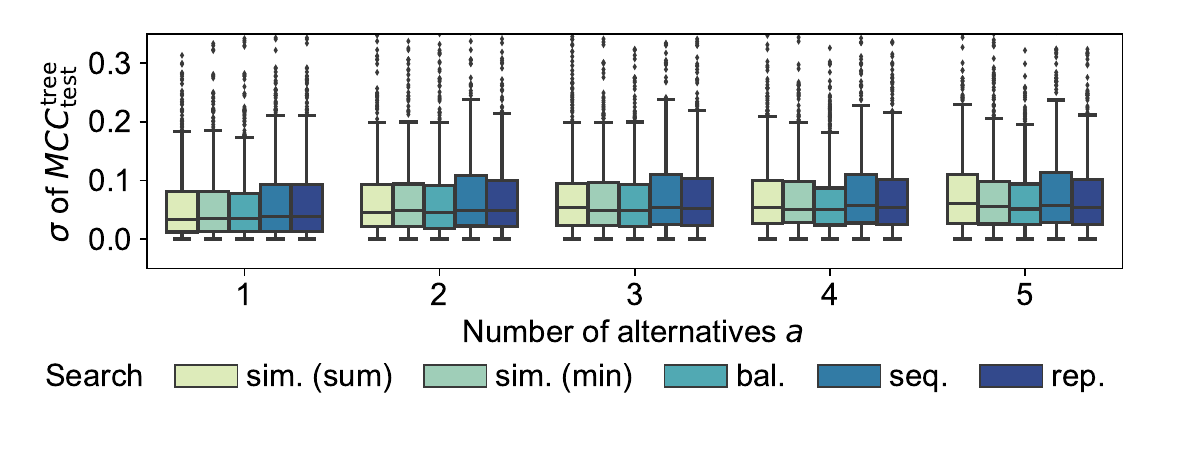}
		\caption{Test-set prediction performance.}
		\label{fig:afs:impact-search-stddev-decision-tree-test-mcc}
	\end{subfigure}
	\caption{
		Distribution of standard deviation of feature-set quality within search runs over datasets, cross-validation-folds, and~$\tau$, by search method for alternatives and number of alternatives~$a$.
		Results with \emph{Model Gain} as the feature-selection method and $k=5$.
		Excludes search settings where at least one combination of search method and~$a$ yielded no valid solution.
		Y-axes are truncated to improve readability.
	}
	\label{fig:afs:impact-search-stddev-quality}
\end{figure}

\subsection{Search Methods for Alternatives}
\label{sec:afs:evaluation:search-methods}

\paragraph{Variance in feature-set quality}

As expected, the search method influences how much the training-set objective value~$Q$ varies between multiple alternatives obtained for the same experimental settings, i.e., within one search run for alternatives.
Figure~\ref{fig:afs:impact-search-stddev-train-objective} visualizes this result for \emph{Model Gain} as the feature-selection method and $k=5$.
The figure shows how the standard deviation of the training-set objective value within individual search runs for alternatives is distributed over other experimental settings, e.g., datasets and cross-validation folds.
In particular, the quality of multiple alternatives varies more if they are found by solver-based sequential search rather than solver-based simultaneous search.
For solver-based simultaneous search, min-aggregation yields considerably more homogeneous feature-set quality than sum-aggregation.
These findings apply to all white-box feature-selection methods but not \emph{Greedy Wrapper}.

The heuristic search method \emph{Greedy Balancing} yields a small variance of training-set objective value within search runs, only slightly higher than for solver-based simultaneous search with min-aggregation.
In contrast, \emph{Greedy Replacement} rather mimics solver-based sequential search, having a substantial variance of quality.
Additionally, the variance of \emph{Greedy Replacement} noticeably grows with the number of alternatives~$a$.

As Figures~\ref{fig:afs:impact-search-stddev-test-objective} and~\ref{fig:afs:impact-search-stddev-decision-tree-test-mcc} show, the variance of feature-set quality differs considerably less between the search methods on the test set, for the objective value as well as prediction performance.
This effect might result from overfitting:
Even if the training-set quality is similar, some alternatives might generalize better than others.
Thus, this variance caused by overfitting could alleviate the effect caused by the choice of search method.

\begin{figure}[p]
	\centering
	\begin{subfigure}[t]{\textwidth}
		\centering
		\includegraphics[width=\textwidth, trim=15 25 35 15, clip]{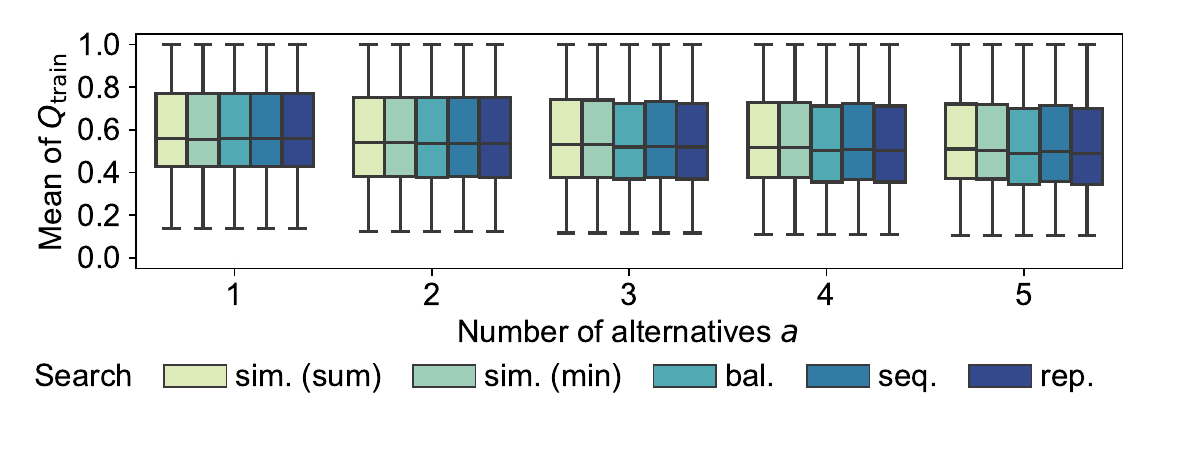}
		\caption{Training-set objective value.}
		\label{fig:afs:impact-search-mean-train-objective}
	\end{subfigure}
	\\ \vspace{\baselineskip}
	\begin{subfigure}[t]{\textwidth}
		\centering
		\includegraphics[width=\textwidth, trim=15 25 35 15, clip]{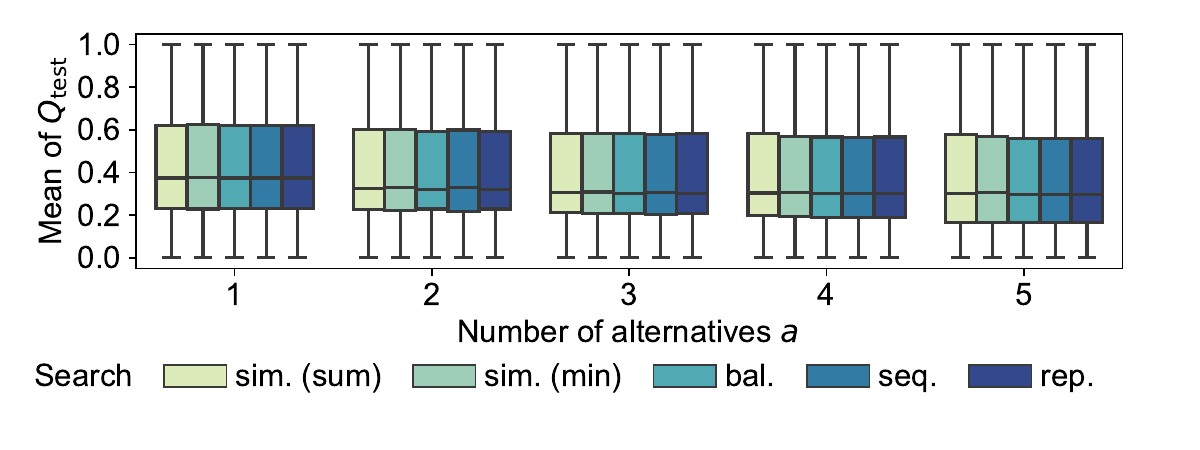}
		\caption{Test-set objective value.}
		\label{fig:afs:impact-search-mean-test-objective}
	\end{subfigure}
	\\ \vspace{\baselineskip}
	\begin{subfigure}[t]{\textwidth}
		\centering
		\includegraphics[width=\textwidth, trim=15 25 35 14, clip]{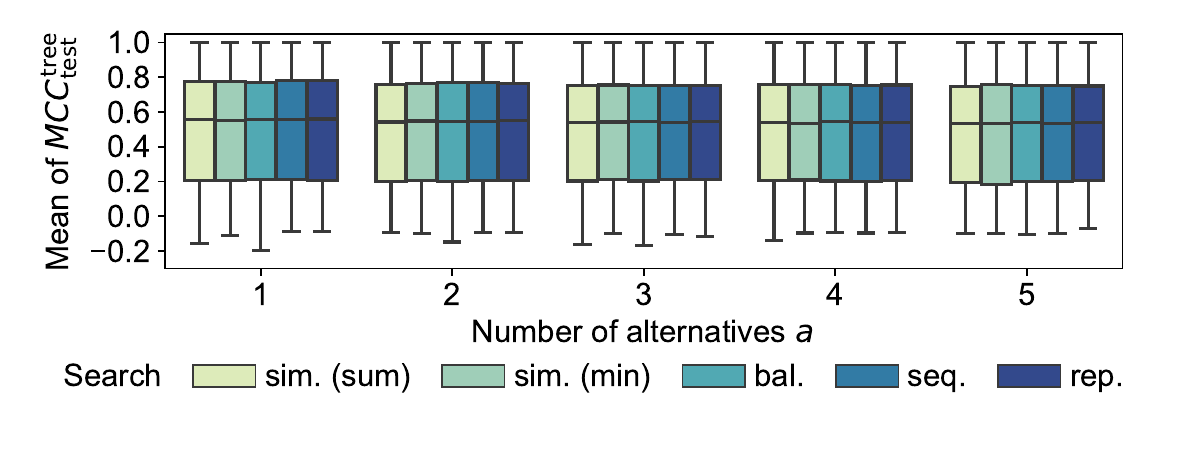}
		\caption{Test-set prediction performance.}
		\label{fig:afs:impact-search-mean-decision-tree-test-mcc}
	\end{subfigure}
	\caption{
		Distribution of mean feature-set quality within search runs over datasets, cross-validation-folds, and~$\tau$, by search method for alternatives and number of alternatives~$a$.
		Results with \emph{Model Gain} as the feature-selection method and $k=5$.
		Excludes search settings where at least one combination of search method and~$a$ yielded no valid solution.
		Y-axes are truncated to improve readability.
	}
	\label{fig:afs:impact-search-mean-quality}
\end{figure}

\paragraph{Average value of feature-set quality}

While obtaining alternatives of homogeneous quality can be one goal of simultaneous search, another selling point would be reaching higher average quality than sequential search.
However, this potential advantage rarely materialized in our experiments.
In particular, Figure~\ref{fig:afs:impact-search-mean-train-objective} compares the distribution of the mean training-set objective value in search runs with \emph{Model Gain} as the feature-selection method and $k=5$.
We observe that all search methods yield very similar overall distributions of average feature-set quality.
Comparing solver-based sequential and simultaneous search on each experimental setting separately and then aggregating also shows a mean quality difference close to zero.
Further, outliers can occur in both directions, i.e., either solver-based search method may yield higher quality.

The mean test-set objective value in Figure~\ref{fig:afs:impact-search-mean-test-objective} and the mean test-set prediction performance in Figure~\ref{fig:afs:impact-search-mean-decision-tree-test-mcc} also exhibit a negligible quality difference between the search methods.
Finally, the other four feature-selection methods do not show a general quality advantage of solver-based simultaneous search either.

\begin{figure}[p]
	\centering
	\begin{subfigure}[t]{0.48\textwidth}
		\centering
		\includegraphics[width=\textwidth, trim=15 30 15 15, clip]{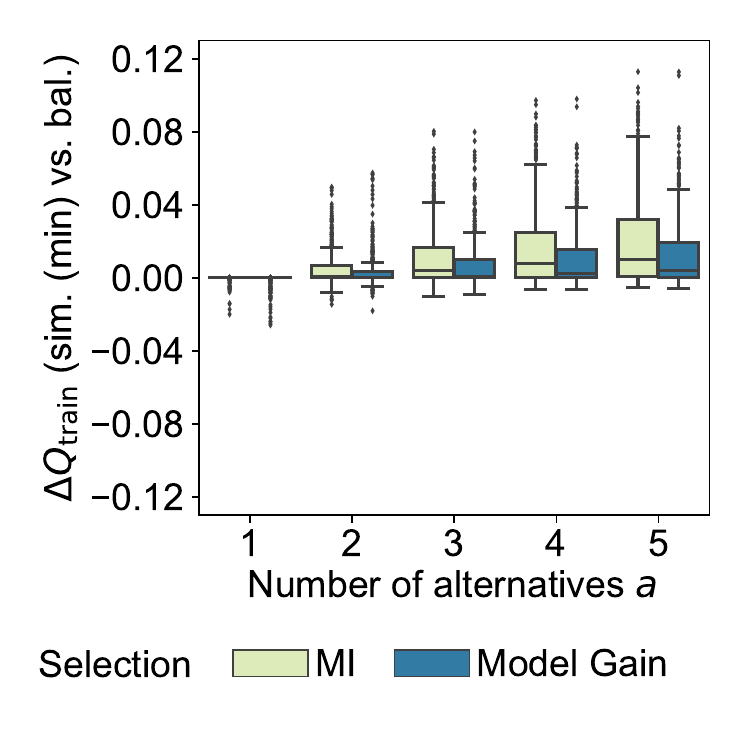}
		\caption{
			Difference between solver-based simultaneous (min-aggregation) search and \emph{Greedy Balancing}, by the number of alternatives~$a$.
		}
		\label{fig:afs:impact-search-heuristics-metric-diff-sim-num-alternatives}
	\end{subfigure}
	\hfill
	\begin{subfigure}[t]{0.48\textwidth}
		\centering
		\includegraphics[width=\textwidth, trim=15 30 15 15, clip]{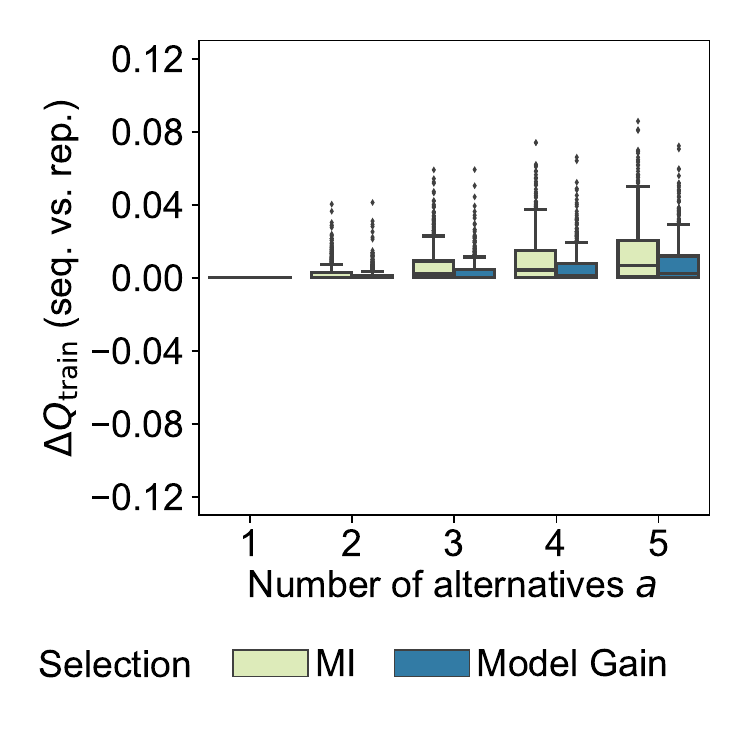}
		\caption{
			Difference between solver-based sequential search and \emph{Greedy Replacement}, by the number of alternatives~$a$.
		}
		\label{fig:afs:impact-search-heuristics-metric-diff-seq-num-alternatives}
	\end{subfigure}
	\\ \vspace{\baselineskip}
	\begin{subfigure}[t]{0.48\textwidth}
		\centering
		\includegraphics[width=\textwidth, trim=15 30 15 15, clip]{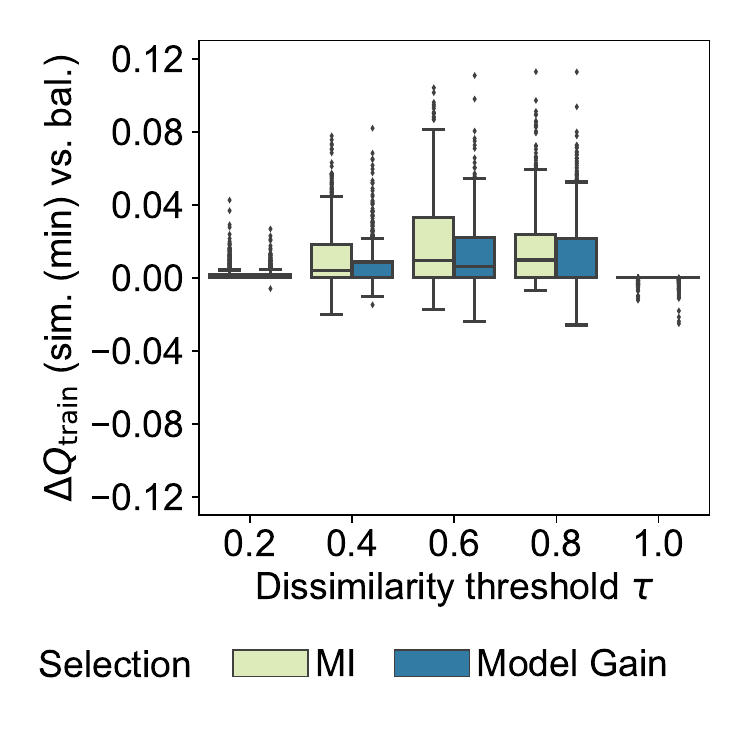}
		\caption{
			Difference between solver-based simultaneous (min-aggregation) search and \emph{Greedy Balancing}, by the dissimilarity threshold~$\tau$.
		}
		\label{fig:afs:impact-search-heuristics-metric-diff-sim-tau}
	\end{subfigure}
	\hfill
	\begin{subfigure}[t]{0.48\textwidth}
		\centering
		\includegraphics[width=\textwidth, trim=15 30 15 15, clip]{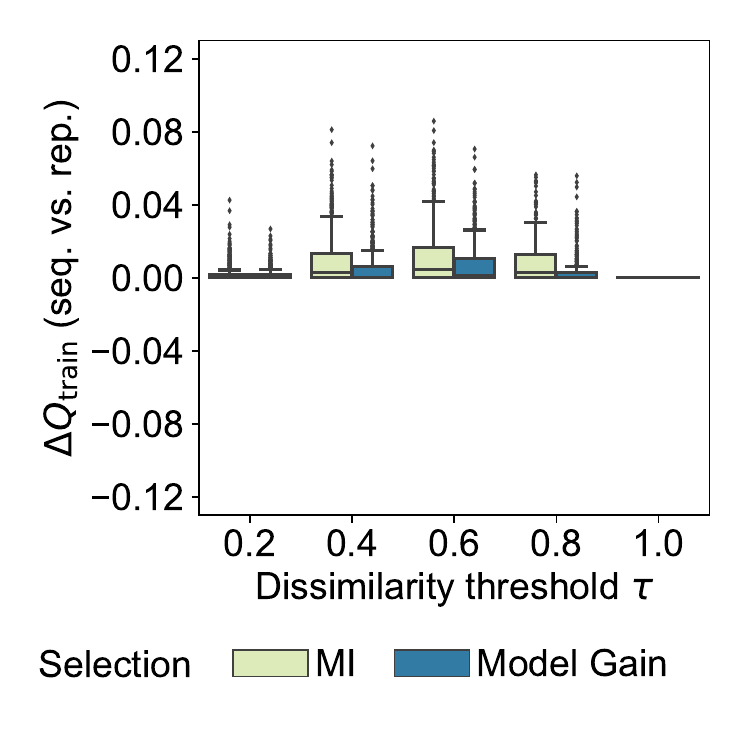}
		\caption{
			Difference between solver-based sequential search and \emph{Greedy Replacement}, by the dissimilarity threshold~$\tau$.
		}
		\label{fig:afs:impact-search-heuristics-metric-diff-seq-tau}
	\end{subfigure}
	\caption{
		Distribution of difference in mean training feature-set quality within a search run between solver-based and heuristic search methods over remaining experimental settings, by feature-selection method.
		Results with $k=5$.
		Excludes search settings where at least one combination of search method and~$a$ yielded no valid solution.
	}
	\label{fig:afs:impact-search-heuristics-metric-diff}
\end{figure}

\paragraph{Quality difference of heuristics}

We now look closer at the quality difference between solver-based and heuristic search.
Figure~\ref{fig:afs:impact-search-heuristics-metric-diff} compares the mean training feature-set quality within search runs for each experimental setting separately.
In particular, we compare solver-based simultaneous search with min-aggregation to \emph{Greedy Balancing} (cf.~Figures~\ref{fig:afs:impact-search-heuristics-metric-diff-sim-num-alternatives} and~\ref{fig:afs:impact-search-heuristics-metric-diff-sim-tau}) and solver-based sequential search to \emph{Greedy Replacement} (cf.~Figures~\ref{fig:afs:impact-search-heuristics-metric-diff-seq-num-alternatives} and~\ref{fig:afs:impact-search-heuristics-metric-diff-seq-tau}).
Positive values in Figure~\ref{fig:afs:impact-search-heuristics-metric-diff} express that solver-based search yields higher quality; negative values favor heuristic search.
The latter can occur if solver-based search yields suboptimal solutions due to timeouts, as we analyze later (cf.~Table~\ref{tab:afs:impact-search-fs-method-optimization-status}).

Figures~\ref{fig:afs:impact-search-heuristics-metric-diff-sim-num-alternatives} and~\ref{fig:afs:impact-search-heuristics-metric-diff-seq-num-alternatives} compare solver-based and heuristic search over the number of alternatives~$a$.
For $a=1$, solver-based and heuristic search yield the same mean training feature-set quality except when timeouts occur.
The more alternatives are desired, the more advantageous a solver-based search is quality-wise.
As in prior analyses, the picture is less clear on the test set, which shows a smaller quality difference here.
Further, the difference in mean training feature-set quality between \emph{Greedy Balancing} and solver-based simultaneous search (cf.~Figure~\ref{fig:afs:impact-search-heuristics-metric-diff-sim-num-alternatives}) grows faster with~$a$ than between \emph{Greedy Replacement} and solver-based sequential search (cf.~Figure~\ref{fig:afs:impact-search-heuristics-metric-diff-seq-num-alternatives}).
As an explanation, consider that simultaneous search may generally develop a quality advantage over sequential search for more alternatives.
\emph{Greedy Balancing} selects the same features as \emph{Greedy Replacement}, i.e., a sequential search heuristic, and only distributes them differently into feature sets, yielding the same mean feature-set quality.

Figures~\ref{fig:afs:impact-search-heuristics-metric-diff-sim-tau} and~\ref{fig:afs:impact-search-heuristics-metric-diff-seq-tau} compare solver-based and heuristic search over the dissimilarity threshold~$\tau$.
Unlike for~$a$, the difference in mean training feature-set quality between solver-based and heuristic search does not increase over the whole range of~$\tau$ but shows an increase followed by a decrease.
In particular, $\tau=1$ allows the two heuristic search methods to reach the same mean training feature-set quality as the solver-based methods.
This observation corresponds to our theoretical result that optimizing the summed quality of alternatives with~$\tau=1$ admits polynomial-time algorithms (cf.~Proposition~\ref{prop:afs:complexity-partitioning-sum}).

\begin{table}[t]
	\centering
	\caption{
		Frequency of optimization statuses (cf.~Section~\ref{sec:afs:experimental-design:evaluation}) over datasets, cross-validation folds, $a$, and $\tau$, by feature-selection method and search method for alternatives.
		Results with $k=5$, $a \in \{1,2,3,4,5\}$, and excluding \emph{Greedy Wrapper}, which does not use the solver for optimizing.
		Each row adds up to 100\%.
	}
	\begin{tabular}{llrrrr}
		\toprule
		\multirow{2}{*}{Feature sel.} & \multirow{2}{*}{Search} & \multicolumn{4}{c}{Optimization status} \\
		\cmidrule(lr){3-6}
		& & Infeasible & Not solved & Feasible & Optimal \\
		\midrule
		FCBF & seq. & 74.51\% & 0.00\% & 0.00\% & 25.49\% \\
		FCBF & sim. (min) & 73.07\% & 0.00\% & 1.60\% & 25.33\% \\
		FCBF & sim. (sum) & 73.07\% & 0.00\% & 2.19\% & 24.75\% \\
		MI & bal. & 0.00\% & 9.20\% & 90.80\% & 0.00\% \\
		MI & rep. & 0.00\% & 9.20\% & 90.80\% & 0.00\% \\
		MI & seq. & 4.93\% & 0.00\% & 0.00\% & 95.07\% \\
		MI & sim. (min) & 4.67\% & 0.00\% & 9.33\% & 86.00\% \\
		MI & sim. (sum) & 4.67\% & 0.00\% & 3.04\% & 92.29\% \\
		Model Gain & bal. & 0.00\% & 9.20\% & 90.80\% & 0.00\% \\
		Model Gain & rep. & 0.00\% & 9.20\% & 90.80\% & 0.00\% \\
		Model Gain & seq. & 4.93\% & 0.00\% & 0.00\% & 95.07\% \\
		Model Gain & sim. (min) & 4.67\% & 0.00\% & 5.28\% & 90.05\% \\
		Model Gain & sim. (sum) & 4.67\% & 0.00\% & 1.87\% & 93.47\% \\
		mRMR & seq. & 4.88\% & 0.00\% & 9.55\% & 85.57\% \\
		mRMR & sim. (min) & 4.67\% & 0.00\% & 48.64\% & 46.69\% \\
		mRMR & sim. (sum) & 4.67\% & 0.00\% & 67.04\% & 28.29\% \\
		\bottomrule
	\end{tabular}
	\label{tab:afs:impact-search-fs-method-optimization-status}
\end{table}

\begin{table}[t]
	\centering
	\caption{
		Frequency of optimization statuses (cf.~Section~\ref{sec:afs:experimental-design:evaluation}) over datasets, cross-validation folds, feature-selection methods, and~$\tau$, by number of alternatives~$a$.
		Results from solver-based simultaneous search with sum-aggregation, $k=5$, and excluding \emph{Greedy Wrapper}.
		Each row adds up to 100\%.
	}
	\begin{tabular}{rrrr}
		\toprule
		\multirow{2}{*}{$a$} & \multicolumn{3}{c}{Optimization status} \\
		\cmidrule(lr){2-4}
		& Infeasible & Feasible & Optimal \\
		\midrule
		1 & 16.10\% & 7.60\% & 76.30\% \\
		2 & 17.50\% & 13.27\% & 69.23\% \\
		3 & 20.00\% & 20.20\% & 59.80\% \\
		4 & 27.00\% & 21.43\% & 51.57\% \\
		5 & 28.23\% & 30.17\% & 41.60\% \\
		\bottomrule
	\end{tabular}
	\label{tab:afs:impact-num-alternatives-optimization-status}
\end{table}

\paragraph{Optimization status}

Suboptimal search results are one reason why solver-based simultaneous search does not consistently beat solver-based sequential search quality-wise.
For \emph{Greedy Wrapper}, the search is heuristic per se and does not cover the entire search space.
For all feature-selection methods, the solver can time out.
Table~\ref{tab:afs:impact-search-fs-method-optimization-status} shows that solver-based simultaneous search has a higher likelihood of timeouts than solver-based sequential search, likely due to the larger size of the optimization problem (cf.~Table~\ref{tab:afs:seq-sim-comparison}).
In particular, for up to five alternatives and $k=5$, all solver-based sequential searches for \emph{FCBF}, \emph{MI}, and \emph{Model Gain} finished within the timeout, i.e., yielded the optimal feature set or ascertained infeasibility, while \emph{mRMR} had about 10\% timeouts.
In contrast, for solver-based simultaneous search with sum-aggregation, all feature-selection methods experience timeouts:
1--3\% of the searches for \emph{FCBF}, \emph{MI}, and \emph{Model Gain}, and 67\% of the searches for \emph{mRMR} found a feasible solution but could not prove optimality.
Such timeout-affected simultaneous solutions can be worse than optimal sequential solutions.

\emph{mRMR} is especially prone to suboptimal solutions, likely because it has a more complex objective (cf.~Equation~\ref{eq:afs:mrmr-linear}) than~\emph{MI} and \emph{Model Gain}.
\emph{FCBF} often results in infeasible optimization problems since its constraints, which prevent the selection of redundant features (cf.~Equation~\ref{eq:afs:fcbf}), might prevent finding any valid feature set of size~$k$.
Min-aggregation instead of sum-aggregation in solver-based simultaneous search exhibits more timeouts for \emph{MI} and \emph{Model Gain} but less for \emph{FCBF} and \emph{mRMR}.
Still, solver-based sequential search incurs fewer timeouts for all of these four feature-selection methods.

Also, note that the fraction of timeouts in solver-based simultaneous search strongly depends on the number of alternatives~$a$, as Table~\ref{tab:afs:impact-num-alternatives-optimization-status} displays:
For $k=5$ and sum-aggregation, roughly 8\% of the white-box searches timed out for~$a=1$, but 20\% for~$a=3$ and 30\% for~$a=5$.
While we grant solver-based simultaneous searches proportionally more time for multiple alternatives (cf.~Section~\ref{sec:afs:experimental-design:approaches:alternatives}), the observed increase in timeouts suggests that runtime increases super-proportionally with~$a$, as we analyze later.

For the heuristic search methods, Table~\ref{tab:afs:impact-search-fs-method-optimization-status} shows that \emph{Greedy Replacement} more often did not find a valid alternative (9.20\%) than solver-based sequential search (4.93\%).
A similar phenomenon occurred for \emph{Greedy Balancing} (9.20\%) compared to solver-based simultaneous search (4.67\%).
Such a result can be expected since both heuristic search methods stop early as soon as each feature is part of at least one alternative.

\begin{table}[t]
	\centering
	\caption{
		Mean optimization time over datasets, cross-validation folds, $a$, and $\tau$, by feature-selection method and search method for alternatives.
		Results with $k=5$ and $a \in \{1,2,3,4,5\}$.
	}
	\begin{tabular}{lrrrrr}
		\toprule
		\multirow{2}{*}{Feature selection} & \multicolumn{5}{c}{Optimization time} \\
		\cmidrule(lr){2-6}
		& Bal. & Rep. & Seq. & Sim. (min) & Sim. (sum) \\
		\midrule
		FCBF & --- & --- & 0.22~s & 11.41~s & 12.62~s \\
		Greedy Wrapper & --- & --- & 52.62~s & 68.39~s & 70.36~s \\
		MI & 0.00~s & 0.00~s & 0.03~s & 47.39~s & 24.56~s \\
		Model Gain & 0.00~s & 0.00~s & 0.03~s & 30.38~s & 19.08~s \\
		mRMR & --- & --- & 33.59~s & 156.00~s & 189.25~s \\
		\bottomrule
	\end{tabular}
	\label{tab:afs:impact-search-fs-method-optimization-time}
\end{table}

\begin{table}[t]
	\centering
	\caption{
		Mean optimization time over datasets, cross-validation folds, and $\tau$, by number of alternatives and feature-selection method.
		Results from solver-based simultaneous search with sum-aggregation and $k=5$.
	}
	\begin{tabular}{lrrrrr}
		\toprule
		\multirow{2}{*}{$a$} & \multicolumn{5}{c}{Optimization time} \\
		\cmidrule(lr){2-6}
		& FCBF & Greedy Wrapper & MI & Model Gain & mRMR \\
		\midrule
		1 & 0.45~s & 28.44~s & 0.03~s & 0.02~s & 44.68~s \\
		2 & 0.86~s & 41.76~s & 0.09~s & 0.08~s & 117.62~s \\
		3 & 2.97~s & 62.70~s & 0.30~s & 0.27~s & 208.14~s \\
		4 & 13.32~s & 96.65~s & 3.68~s & 3.47~s & 258.19~s \\
		5 & 45.52~s & 122.26~s & 118.72~s & 91.58~s & 317.63~s \\
		\bottomrule
	\end{tabular}
	\label{tab:afs:impact-num-alternatives-fs-method-optimization-time}
\end{table}

\paragraph{Optimization time}

As Table~\ref{tab:afs:impact-search-fs-method-optimization-time} shows, solver-based sequential search is faster on average than solver-based simultaneous search for all five feature-selection methods.
In particular, the difference is up to three orders of magnitude for the four white-box feature-selection methods.
Further, \emph{FCBF}, \emph{MI}, and \emph{Model Gain} experience a dramatic increase in optimization time with the number of alternatives~$a$ in solver-based simultaneous search, as Table~\ref{tab:afs:impact-num-alternatives-fs-method-optimization-time} displays.
In contrast, the runtime increase is considerably less for solver-based sequential search, which shows an approximately linear trend with the number of alternatives.

Table~\ref{tab:afs:impact-search-fs-method-optimization-time} also shows that the optimization time of the heuristic search methods for \emph{MI} and \emph{Model Gain} is negligible.
In particular, \emph{Greedy Replacement} and \emph{Greedy Balancing} never took longer than 1~ms per search run for alternatives.
These results highlight the runtime advantage of the heuristics, particularly of \emph{Greedy Balancing} for simultaneous search.

An interesting question for practitioners is how the runtime relates to~$n$, the number of features in the dataset.
One could expect a positive correlation since the problem instance increases with~$n$.
Roughly speaking, this trend appears in our experimental data indeed.
However, the observed trend is rather noisy, particularly for solver-based simultaneous search, and some higher-dimensional datasets even show lower average runtimes than lower-dimensional ones.
This result indicates that other factors than~$n$ influence runtime as well, e.g., other experimental settings or the solver's internal search heuristics.

Based on all results described in this section, we focus on solver-based sequential search in the next section.
In particular, it was significantly faster than solver-based simultaneous search while yielding similar feature-set quality.

\subsection{User Parameters \texorpdfstring{$a$ And $\tau$}{}}
\label{sec:afs:evaluation:parameters}

\begin{figure}[p]
	\centering
	\begin{subfigure}[t]{0.48\textwidth}
		\centering
		\includegraphics[width=\textwidth, trim=15 17 10 15, clip]{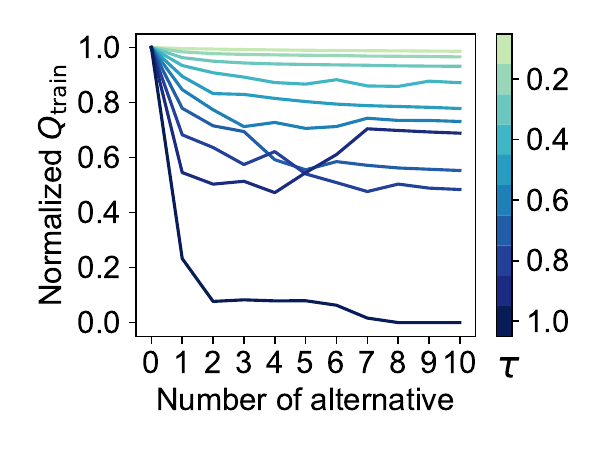}
		\caption{
			Training-set objective value.
			Infeasible feature sets excluded.
		}
		\label{fig:afs:impact-num-alternatives-tau-train-objective-max}
	\end{subfigure}
	\hfill
	\begin{subfigure}[t]{0.48\textwidth}
		\centering
		\includegraphics[width=\textwidth, trim=15 17 10 15, clip]{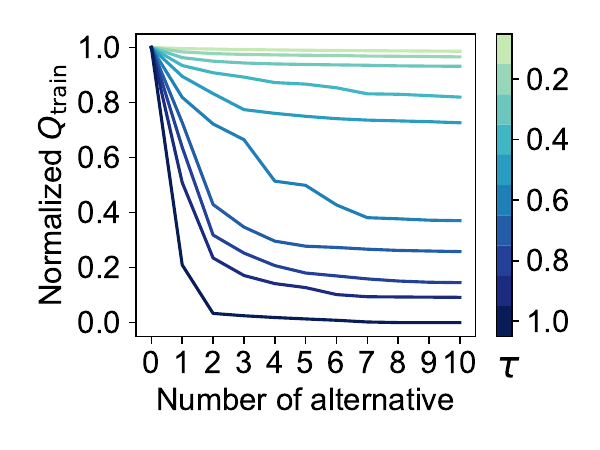}
		\caption{
			Training-set objective value.
			Infeasible feature sets assigned a quality of~0.
		}
		\label{fig:afs:impact-num-alternatives-tau-train-objective-max-fillna}
	\end{subfigure}
	\\ \vspace{\baselineskip}
	\begin{subfigure}[t]{0.48\textwidth}
		\centering
		\includegraphics[width=\textwidth, trim=15 17 10 15, clip]{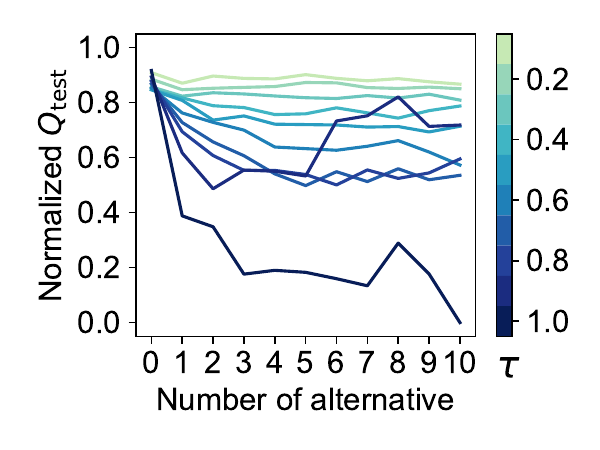}
		\caption{
			Test-set objective value.
			Infeasible feature sets excluded.
		}
		\label{fig:afs:impact-num-alternatives-tau-test-objective-max}
	\end{subfigure}
	\hfill
	\begin{subfigure}[t]{0.48\textwidth}
		\centering
		\includegraphics[width=\textwidth, trim=15 17 10 15, clip]{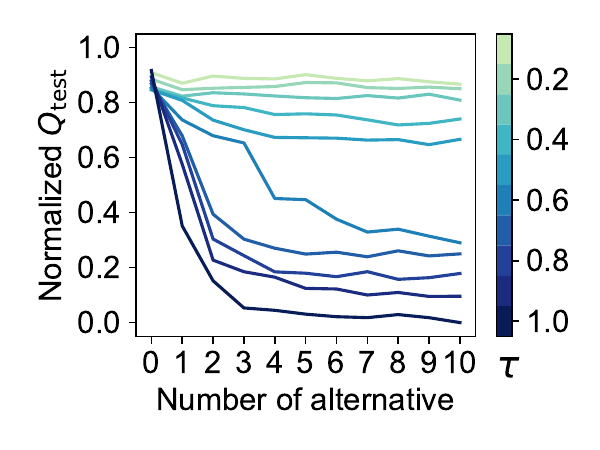}
		\caption{
			Test-set objective value.
			Infeasible feature sets assigned a quality of~0.
		}
		\label{fig:afs:impact-num-alternatives-tau-test-objective-max-fillna}
	\end{subfigure}
	\\ \vspace{\baselineskip}
	\begin{subfigure}[t]{0.48\textwidth}
		\centering
		\includegraphics[width=\textwidth, trim=15 17 10 15, clip]{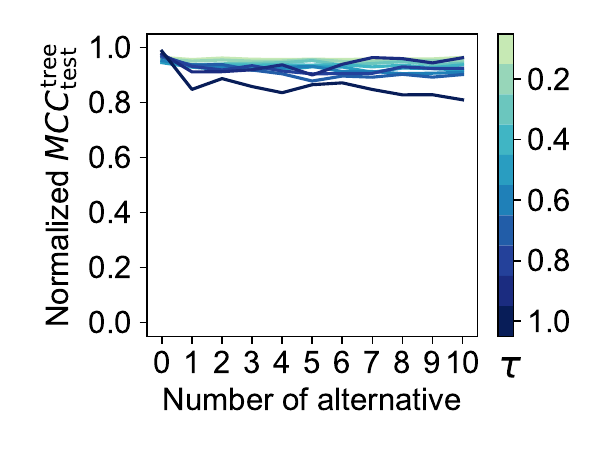}
		\caption{
			Test-set prediction performance.
			Infeasible feature sets excluded.
		}
		\label{fig:afs:impact-num-alternatives-tau-decision-tree-test-mcc-max}
	\end{subfigure}
	\hfill
	\begin{subfigure}[t]{0.48\textwidth}
		\centering
		\includegraphics[width=\textwidth, trim=15 17 10 15, clip]{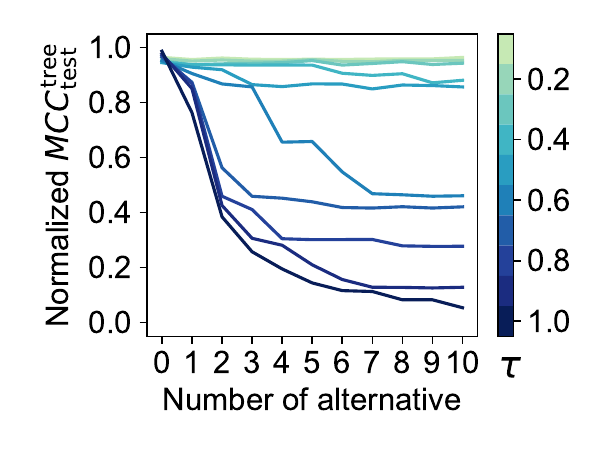}
		\caption{
			Test-set prediction performance.
			Infeasible feature sets assigned a quality of~0.
		}
		\label{fig:afs:impact-num-alternatives-tau-decision-tree-test-mcc-max-fillna}
	\end{subfigure}
	\caption{
		Mean feature-set quality over datasets and cross-validation folds, max-normalized per search run for alternatives, by the number of alternative and dissimilarity threshold~$\tau$.
		Results from solver-based sequential search with \emph{Model Gain} as the feature-selection method and $k=10$.
	}
	\label{fig:afs:impact-num-alternatives-tau-quality}
\end{figure}

\paragraph{Feature-set quality}

Higher values of the two user parameters introduce more (for~$a$) or stronger (for~$\tau$) constraints into the optimization problem of alternative feature selection.
Thus, one would expect a corresponding decrease in feature-set quality.
Figure~\ref{fig:afs:impact-num-alternatives-tau-quality} illustrates this trend for \emph{Model Gain} as the feature-selection method and $k=10$.
Since the maximum feature-set quality varies among datasets, we max-normalize quality in this figure.
In particular, we set the highest feature-set quality in each search run for alternatives to~1 and scale the other feature-set qualities accordingly.
For prediction performance in terms of MCC, we shift its range from~$[-1,1]$ to~$[0,1]$ before normalization.

Figure~\ref{fig:afs:impact-num-alternatives-tau-quality} shows that multiple alternatives may have a similar quality.
Further, the training-set objective value (cf.~Figure~\ref{fig:afs:impact-num-alternatives-tau-train-objective-max}) decreases most from the original feature set, i.e., the zeroth alternative, to the first alternative, but less beyond.
Also, the decrease strongly depends on the dissimilarity threshold~$\tau$.
For a low dissimilarity threshold like $\tau=0.1$, the training-set objective value barely drops over the number of alternatives.
Additionally, note that Figure~\ref{fig:afs:impact-num-alternatives-tau-quality} averages the normalized feature-set quality over multiple datasets.
In our experiments, datasets with more features tend to experience a smaller decrease in quality over~$a$ and~$\tau$.
As higher-dimensional datasets offer more options for alternatives, this observation makes sense.
However, this effect is not guaranteed since datasets with many features could also contain many useless features instead of interesting alternatives.

The overall decrease in quality is slightly less pronounced for the test-set objective value (Figure~\ref{fig:afs:impact-num-alternatives-tau-test-objective-max}) than on the training set (Figure~\ref{fig:afs:impact-num-alternatives-tau-train-objective-max}) since overfitting might occur.
In particular, the original feature set can even have lower test-set quality than the subsequent alternatives.
The trend becomes even less clear for prediction performance, which varies little over~$a$ and~$\tau$ in our experiments (cf.~Figure~\ref{fig:afs:impact-num-alternatives-tau-decision-tree-test-mcc-max}).
In general, the optimization objective~$Q$ may only partially indicate actual prediction performance since the former may use a simplified feature-set quality criterion.
Indeed, the correlation between optimization objective~$Q$ and prediction MCC is only weak to moderate in our experiments.

\begin{figure}[t]
	\centering
	\begin{subfigure}[t]{0.48\textwidth}
		\centering
		\includegraphics[width=\textwidth, trim=15 15 10 15, clip]{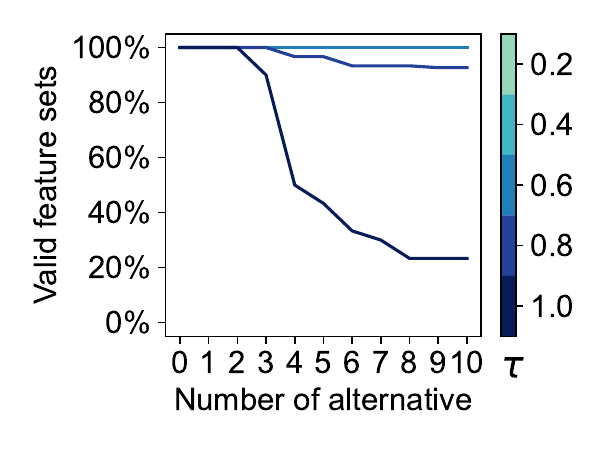}
		\caption{Feature-set size~$k=5$.}
		\label{fig:afs:impact-num-alternatives-tau-optimization-status-k-5}
	\end{subfigure}
	\hfill
	\begin{subfigure}[t]{0.48\textwidth}
		\centering
		\includegraphics[width=\textwidth, trim=15 15 10 15, clip]{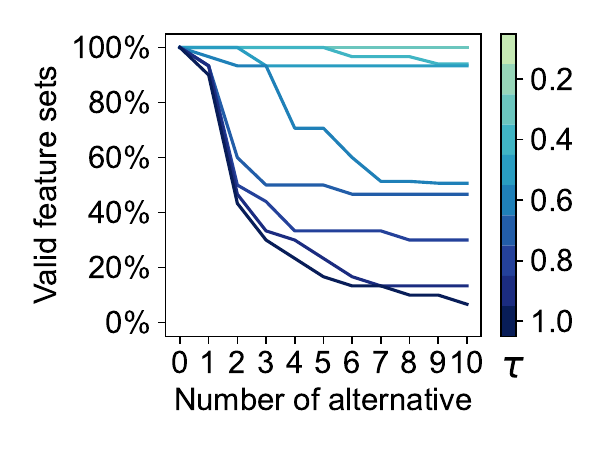}
		\caption{Feature-set size~$k=10$.}
		\label{fig:afs:impact-num-alternatives-tau-optimization-status-k-10}
	\end{subfigure}
	\caption{
		Frequency of optimization runs yielding a valid feature set over datasets and cross-validation folds, by the number of alternative and dissimilarity threshold~$\tau$.
		Results from solver-based sequential search with \emph{Model Gain} as the feature-selection method.
	}
	\label{fig:afs:impact-num-alternatives-tau-optimization-status}
\end{figure}

\paragraph{Optimization status}

The previous observations refer to the quality of the found feature sets.
However, the more alternatives one desires and the more they should differ, the likelier an infeasible optimization problem is.
Figure~\ref{fig:afs:impact-num-alternatives-tau-optimization-status} visualizes the fraction of valid feature sets over the number of alternatives and dissimilarity threshold~$\tau$, showing the expected trend.
Additionally, Figures~\ref{fig:afs:impact-num-alternatives-tau-train-objective-max-fillna},~\ref{fig:afs:impact-num-alternatives-tau-test-objective-max-fillna}, and~\ref{fig:afs:impact-num-alternatives-tau-decision-tree-test-mcc-max-fillna} display the same data as Figures~\ref{fig:afs:impact-num-alternatives-tau-train-objective-max},~\ref{fig:afs:impact-num-alternatives-tau-test-objective-max}, and~\ref{fig:afs:impact-num-alternatives-tau-decision-tree-test-mcc-max} but with the quality of infeasible feature sets set to zero instead of excluding these feature sets from evaluation.
Consequently, the decrease in feature-set quality over~$a$ and~$\tau$ is noticeably stronger.
In contrast, if only considering valid feature sets, the mean quality in our experiments can increase over the number of alternatives, e.g., as visible in Figures~\ref{fig:afs:impact-num-alternatives-tau-train-objective-max} and~\ref{fig:afs:impact-num-alternatives-tau-test-objective-max} for $\tau=0.9$.
This counterintuitive phenomenon can occur because some datasets run out of valid feature sets sooner than others, so the average quality may be determined for different sets of datasets at each number of alternatives.

\begin{figure}[t]
	\centering
	\begin{subfigure}[t]{0.48\textwidth}
		\centering
		\includegraphics[width=\textwidth, trim=15 17 10 15, clip]{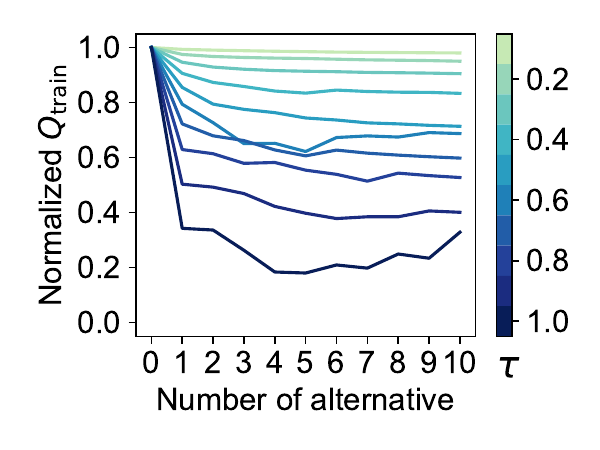}
		\caption{
			\emph{MI} for feature selection.
		}
		\label{fig:afs:impact-num-alternatives-tau-train-objective-max-mi}
	\end{subfigure}
	\hfill
	\begin{subfigure}[t]{0.48\textwidth}
		\centering
		\includegraphics[width=\textwidth, trim=15 17 10 15, clip]{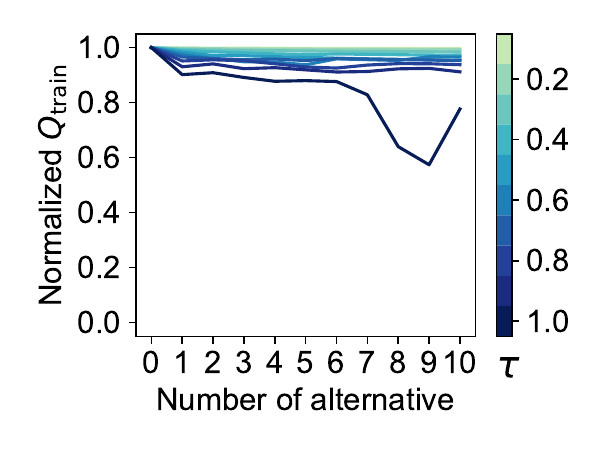}
		\caption{
			\emph{mRMR} for feature selection.
		}
		\label{fig:afs:impact-num-alternatives-tau-train-objective-max-mrmr}
	\end{subfigure}
	\\ \vspace{\baselineskip}
	\begin{subfigure}[t]{0.48\textwidth}
		\centering
		\includegraphics[width=\textwidth, trim=15 17 10 15, clip]{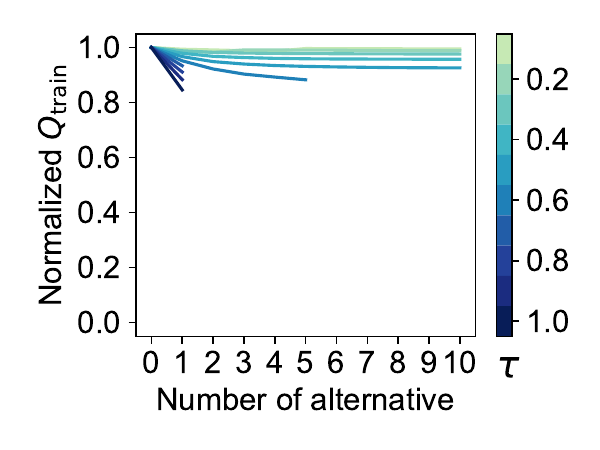}
		\caption{
			\emph{FCBF} for feature selection.
		}
		\label{fig:afs:impact-num-alternatives-tau-train-objective-max-fcbf}
	\end{subfigure}
	\hfill
	\begin{subfigure}[t]{0.48\textwidth}
		\centering
		\includegraphics[width=\textwidth, trim=15 17 10 15, clip]{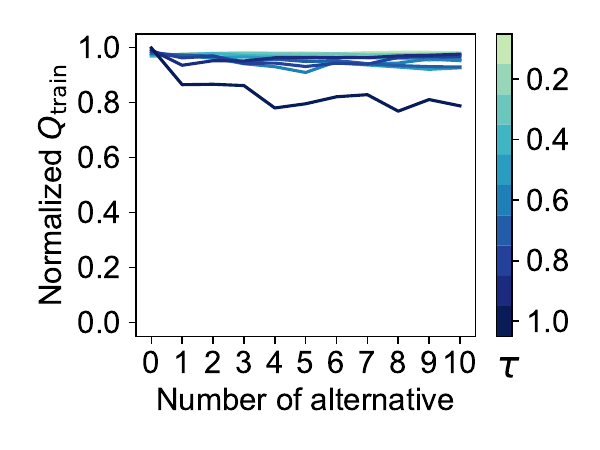}
		\caption{
			\emph{Greedy Wrapper} for feature selection.
		}
		\label{fig:afs:impact-num-alternatives-tau-train-objective-max-greedy-wrapper}
	\end{subfigure}
	\caption{
		Mean training-set objective value over datasets and cross-validation folds, max-normalized per search run for alternatives, by the number of alternative and dissimilarity threshold~$\tau$.
		Results from solver-based sequential search with $k=10$.
		Infeasible feature sets excluded.
	}
	\label{fig:afs:impact-num-alternatives-tau-train-objective-fs-method}
\end{figure}

\paragraph{Influence of feature-selection method}

While we discussed \emph{Model Gain} before, the decrease in objective value over~$a$ and~$\tau$ occurs to different extents for the other feature-selection methods in our experiments, as Figure~\ref{fig:afs:impact-num-alternatives-tau-train-objective-fs-method} displays.
In this figure, we shifted the objectives of \emph{Greedy Wrapper} and \emph{mRMR} to~$[0, 1]$ before max-normalization since their original range is~$[-1,1]$.
\emph{MI} (cf.~Figure~\ref{fig:afs:impact-num-alternatives-tau-train-objective-max-mi}) shows a similar behavior as \emph{Model Gain} (cf.~Figure~\ref{fig:afs:impact-num-alternatives-tau-train-objective-max}), which may result from both feature-selection methods using the same objective function, though with different feature qualities.
In contrast, \emph{mRMR} (cf.~Figure~\ref{fig:afs:impact-num-alternatives-tau-train-objective-max-mrmr}) exhibits a considerably smaller effect of increasing~$\tau$.
For \emph{FCBF} (cf.~Figure~\ref{fig:afs:impact-num-alternatives-tau-train-objective-max-fcbf}), the additional constraints on feature-feature correlation (cf.~Equation~\ref{eq:afs:fcbf}) cause many infeasible results (cf.~Table~\ref{tab:afs:impact-search-fs-method-optimization-status}), so we cannot determine the average feature-set quality for some combinations of~$a$ and~$\tau$.
For \emph{Greedy Wrapper} (cf.~Figure~\ref{fig:afs:impact-num-alternatives-tau-train-objective-max-greedy-wrapper}), $a$ and~$\tau$ barely make any difference, which may be explained by the heuristic, inexact search procedure.

\subsection{Summary}
\label{sec:afs:evaluation:summary}

\paragraph{Feature-selection methods (cf.~Section~\ref{sec:afs:evaluation:feature-selection})}

Among the feature-selection methods, \emph{Model Gain} yielded the best average prediction performance.
The simple univariate \emph{MI} also turned out competitive, while \emph{Greedy Wrapper} and \emph{mRMR} required high optimization times, and our constraint-based version of \emph{FCBF} yielded many infeasible solutions.
Selecting $k=10$ instead of $k=5$ features had only a small impact on prediction performance, so users may stick to smaller feature-set sizes if such a setting benefits interpretability.

\paragraph{Search methods for alternatives (cf.~Section~\ref{sec:afs:evaluation:search-methods})}

Solver-based simultaneous search, particularly with min-aggregation, considerably reduced the variance of the training-set objective value over alternatives compared to solver-based sequential search, as we desired.
However, results were less clear on the test set and when using prediction performance to measure feature-set quality.
Further, the average quality of alternatives was similar to solver-based sequential search.
In addition, the latter was considerably faster and led to fewer solver timeouts, particularly when increasing the number of alternatives.
Also, sequential search allows users to stop searching after each alternative.

The heuristic search methods \emph{Greedy Replacement} and \emph{Greedy Balancing} for univariate feature qualities achieved a good feature-set quality relative to solver-based search, particularly for a low number of alternatives and on the test set.
For a high number of alternatives, the training feature-set quality may differ more, and the heuristics may stop early despite the existence of further alternatives.
As a positive point, both the heuristics' runtime was negligible.
Also, \emph{Greedy Balancing} achieved a low variance of training-set objective value between alternatives, similar to solver-based simultaneous search with min-aggregation.

\paragraph{User parameters $a$ and $\tau$ (cf.~Section~\ref{sec:afs:evaluation:parameters})}

Feature-set quality tended to decrease with a larger number of alternatives~$a$ and dissimilarity threshold~$\tau$, so these parameters give users control over alternatives.
The decrease was highest from the original feature set to the first alternative but smaller beyond, resulting in multiple alternatives of similar quality.
Also, the decrease was more prominent on the training set than on the test set.
Further, the strength of this decrease depended on the feature-selection method;
\emph{MI} and \emph{Model Gain} showed the largest effect.
Independent from the feature-selection method, the frequency of infeasible solutions increased with~$a$ and~$\tau$ due to stronger constraints.

\section{Conclusions and Future Work}
\label{sec:afs:conclusion}

In this section, we summarize our work (cf.~Section~\ref{sec:afs:conclusion:conclusion}) and give an outlook on potential future work (cf.~Section~\ref{sec:afs:conclusion:future-work}).

\subsection{Conclusions}
\label{sec:afs:conclusion:conclusion}

Feature-selection methods are a valuable tool to foster interpretable predictions.
Conventional feature-selection methods typically yield only one feature set.
However, users may be interested in obtaining multiple, sufficiently diverse feature sets of high quality.
Such alternative feature sets may provide alternative explanations for predictions from the data.

In this article, we defined alternative feature selection as an optimization problem.
We formalized alternatives via constraints that are independent of the feature-selection method, can be combined with other constraints on feature sets, and allow users to control diversity according to their needs with two parameters, i.e., the number of alternatives~$a$ and a dissimilarity threshold~$\tau$.
Further, we discussed how to integrate different categories of conventional feature-selection methods as objectives.
We also analyzed the complexity of this optimization problem and proved $\mathcal{NP}$-hardness, even for simple notions of feature-set quality.
Additionally, we showed that the problem gives way to a constant-factor approximation under certain conditions, and we proposed corresponding heuristic search methods.
Finally, we evaluated alternative feature selection with 30 classification datasets and five feature-selection methods.
We compared sequential and simultaneous search for alternatives, both with solver-based and heuristic search methods, and varied the number of alternatives as well as the dissimilarity threshold for alternatives.

\subsection{Future Work}
\label{sec:afs:conclusion:future-work}

\paragraph{Feature selection (objective function)}

One could search for alternatives with other feature-selection methods than the five we analyzed.
In particular, we implemented only one procedure to find alternatives for wrapper feature selection (cf.~Section~\ref{sec:afs:approach:objectives:black-box}).
Embedded feature selection, which we did not evaluate, would also need adapted search methods for alternatives (cf.~Section~\ref{sec:afs:approach:objectives:embedding}).

\paragraph{Alternatives (constraints)}

One could vary the definition of alternatives, e.g., the set-dissimilarity measure (cf.~Section~\ref{sec:afs:approach:constraints:single}), the quality aggregation for simultaneous alternatives (cf.~Appendix~\ref{sec:afs:appendix:simultaneous-objective-aggregation}), or the overall optimization problem (cf.~Section~\ref{sec:afs:approach:problem}).
While we made general and straightforward decisions for each of these points, particular applications might demand other formalizations of alternatives.
E.g., one could use soft instead of hard constraints.

\paragraph{Time complexity}

Appendix~\ref{sec:afs:appendix:complexity:future-work} discusses how one could extend our complexity analysis of alternative feature selection (cf.~Section~\ref{sec:afs:approach:complexity}).

\paragraph{Runtime}

Our experiments (cf.~Section~\ref{sec:afs:evaluation:search-methods}) and theoretical analyses (cf.~Section~\ref{sec:afs:approach:constraints:multiple}) revealed that exact simultaneous search scales poorly with the number of alternatives.
One could conceive a more efficient problem formulation.
Further, one could limit the solver runtime and take the intermediate results once the timeout is reached.
We already used a fixed timeout in our experiments, but studying the exact influence of timeouts on feature-set quality is an open topic.
Next, one could use a different solver, e.g., one for non-linear optimization, so the auxiliary variables from Equation~\ref{eq:afs:product-linear} become superfluous.
Finally, one could develop further simultaneous heuristics (cf.~Section~\ref{sec:afs:approach:univariate-heuristics:greedy-balancing}).

\paragraph{Datasets}

In the current article, we conducted a broad quantitative evaluation of alternative feature selection on datasets from various domains. (cf.~Section~\ref{sec:afs:experimental-design:datasets}).
While we uncovered several general trends, the existence and quality of alternatives naturally depend on the dataset.
Thus, practitioners may employ alternative feature selection in domain-specific case studies and evaluate the alternative feature sets qualitatively, thereby assessing their usefulness for interpreting predictions.

\appendix

\section{Appendix}
\label{sec:afs:appendix}

In this section, we provide supplementary materials.
Section~\ref{sec:afs:appendix:simultaneous-objective-aggregation} discusses possible aggregation operators for the objective of the simultaneous-search problem (cf.~Definition~\ref{def:afs:alternative-feature-selection-simultaneous}).
Section~\ref{sec:afs:appendix:multivariate-filter-objectives} discusses additional objective functions for multivariate filter feature selection (cf.~Section~\ref{sec:afs:approach:objectives:white-box}).
Section~\ref{sec:afs:appendix:univariate-complete-optimization-problem} provides complete definitions of the alternative-feature-selection problem (cf.~Section~\ref{sec:afs:approach:constraints}) for the univariate objective (cf.~Equation~\ref{eq:afs:univariate-filter}).
Section~\ref{sec:afs:appendix:univariate-pre-selection} proposes how to speed up optimization for the univariate objective (cf.~Equation~\ref{eq:afs:univariate-filter}).
Section~\ref{sec:afs:appendix:complexity} complements the complexity analysis (cf.~Section~\ref{sec:afs:approach:complexity}).
Section~\ref{sec:afs:appendix:greedy-depth} proposes another heuristic search method for the univariate objective (cf.~Equation~\ref{eq:afs:univariate-filter}), complementing Section~\ref{sec:afs:approach:univariate-heuristics}.

\subsection{Aggregation Operators for the Simultaneous-Search Problem}
\label{sec:afs:appendix:simultaneous-objective-aggregation}

In this section, we discuss operators to aggregate the feature-set quality of multiple alternatives in the objective of the simultaneous-search problem (cf.~Definition~\ref{def:afs:alternative-feature-selection-simultaneous}).

\paragraph{Sum-aggregation}

The arguably simplest way to aggregate the qualities of multiple feature sets is to sum them up, which we call \emph{sum-aggregation}:
\begin{equation}
	\max_{s^{(0)}, \dots, s^{(a)}} \sum_{l=0}^a Q(s^{(l)},X,y)
	\label{eq:afs:afs-simultaneous-sum-objective}
\end{equation}
While this objective fosters a high average quality of feature sets, it does not guarantee that the alternatives have similar quality:
\begin{example}[Sum-aggregation]
Consider $n=6$~features with univariate feature qualities (cf.~Equation~\ref{eq:afs:univariate-filter}) $q = (9,8,7,3,2,1)$, feature-set size~$k=3$, number of alternatives~$a=2$, the Dice dissimilarity (cf.~Equation~\ref{eq:afs:dice}) as~$d(\cdot)$, and dissimilarity threshold~$\tau = 0.5$, which permits an overlap of one feature between sets here (cf.~Equation~\ref{eq:afs:dice-rearranged-equal-size}).
Exact sequential search (cf.~Definition~\ref{def:afs:alternative-feature-selection-sequential}) yields the selection $s^{(0)} = (1,1,1,0,0,0)$, $s^{(1)} = (1,0,0,1,1,0)$, and $s^{(2)} = (0,1,0,1,0,1)$, with a summed quality of $\,24+14+12=50$.
One possible exact simultaneous-search (cf.~Definition~\ref{def:afs:alternative-feature-selection-simultaneous}) solution consists of the feature sets $s^{(0)} = (1,1,0,1,0,0)$, $s^{(1)} = (1,0,1,0,1,0)$, and $s^{(2)} = (0,1,1,0,0,1)$, with a summed quality of $\,20+18+16=54$.
Another possible exact simultaneous-search solution is $s^{(0)} = (1,1,0,0,0,1)$, $s^{(1)} = (1,0,1,0,1,0)$, and $s^{(2)} = (0,1,1,1,0,0)$, with a summed quality of $\,18+18+18=54$.
\label{ex:afs:sum-aggregation}
\end{example}
This example allows several insights.
First, exact sequential search yields worse quality than exact simultaneous search here, i.e., 50 vs.~54.
Second, the feature-set qualities of the sequential solution, i.e., 24, 14, and~12, differ significantly.
Third, exact simultaneous search can yield multiple solutions whose feature-set quality is differently balanced.
Here, the feature-set qualities in the second simultaneous-search solution, i.e., 18, 18, and~18, are more balanced than in the first, i.e., 20, 18, and~16.
However, both solutions are equally optimal for sum-aggregation.

\paragraph{Min-aggregation}

To actively foster balanced feature-set qualities in simultaneous search, we propose \emph{min-aggregation} in the objective:
\begin{equation}
	\max_{s^{(0)}, \dots, s^{(a)}} \min_{l \in \{0, \dots, a\}} Q(s^{(l)},X,y) \\
	\label{eq:afs:afs-simultaneous-min-objective}
\end{equation}
In the terminology of social choice theory, this objective uses an egalitarian rule instead of a utilitarian one~\cite{myerson1981utilitarianism}.
In particular, \emph{min-aggregation} maximizes the quality of the worst selected alternative.
Thereby, it incentivizes all alternatives to have high quality and implicitly balances their quality.

Note that optimizing the objective with either sum-aggregation or min-aggregation does not necessarily optimize the other.
We already showed a solution optimizing sum-aggregation but not min-aggregation (cf.~Example~\ref{ex:afs:sum-aggregation}).
In the following, we demonstrate the other direction:
\begin{example}[Min-aggregation]
Consider $n=6$~features with univariate feature qualities (cf.~Equation~\ref{eq:afs:univariate-filter}) $q = (11,10,6,5,4,1)$, feature-set size~$k=3$, number of alternatives~$a=1$, the Dice dissimilarity (cf.~Equation~\ref{eq:afs:dice}) as~$d(\cdot)$, and dissimilarity threshold~$\tau = 0.5$, which permits an overlap of one feature between sets here (cf.~Equation~\ref{eq:afs:dice-rearranged-equal-size}).
One simultaneous-search (cf.~Definition~\ref{def:afs:alternative-feature-selection-simultaneous}) solution with min-aggregation (cf.~Equation~\ref{eq:afs:afs-simultaneous-min-objective}) is $s^{(0)} = (1,1,0,0,1,0)$ and $s^{(1)} = (1,0,1,1,0,0)$, with a summed quality of $\,25+22=47$.
Another solution is $s^{(0)} = (1,1,0,0,0,1)$ and $s^{(1)} = (1,0,1,1,0,0)$, with a summed quality of $\,22+22=44$.
\label{ex:afs:min-aggregation}
\end{example}
While both solutions have the same minimum feature-set quality, only the first solution optimizes the objective with sum-aggregation.
In particular, min-aggregation permits reducing the quality of feature sets as long as the latter remains above the minimum quality of all sets.

From the technical perspective, Equation~\ref{eq:afs:afs-simultaneous-min-objective} has the disadvantage of being non-linear regarding the decision variables $s^{(0)}, \dots, s^{(a)}$.
However, we can linearize it with one constraint per feature set and an auxiliary variable~$Q_{\text{min}}$:
\begin{equation}
	\begin{aligned}
		\max_{s^{(0)}, \dots, s^{(a)}} &\quad &Q_{\text{min}} & \\
		\text{subject to:} &\quad \forall l \in \{0, \dots, a\}: &Q_{\text{min}} &\leq Q(s^{(l)},X,y) \\
		&\quad & Q_{\text{min}} &\in \mathbb{R}
	\end{aligned}
	\label{eq:afs:afs-simultaneous-min-objective-linear}
\end{equation}
As we maximize~$Q_{\text{min}}$, this variable will implicitly assume the actual minimum value of~$Q(s^{(l)},X,y)$ with equality since the solution would not be optimal otherwise.
This situation relieves us from introducing further auxiliary variables that are usually necessary when linearizing maximum or minimum expressions~\cite{mosek2022modeling}.

\paragraph{Further approaches for balancing quality}

Min-aggregation provides no control or guarantee of how much the feature-set qualities will actually differ between alternatives since it only incentivizes high quality for all sets.
One can alleviate this issue by adapting the objective or constraints.
First, related work on \textsc{Multi-Way Number Partitioning} also uses other objectives for balancing~\cite{korf2010objective, lawrinenko2017identical} (cf.~Section~\ref{sec:afs:appendix:complexity:related-work}).
E.g., one could minimize the difference between maximum and minimum feature-set quality.
Second, one could use sum-aggregation but constrain the minimum or maximum quality of sets, or the difference between the qualities.
However, such constraint-based approaches introduce one or several parameters bounding feature-set quality, which are difficult to determine a priori.
Third, one could treat balancing qualities as another objective besides maximizing the summed quality.
One can then optimize two objectives simultaneously, filtering results for Pareto-optimal solutions or optimizing a weighted combination of the two objectives.
In both cases, users may need to define an acceptable trade-off between the objectives.
It is an open question if a solution that jointly optimizes min- and sum-aggregation always exists.
If yes, then optimizing a weighted combination of the two objectives would also optimize each of them on its own, assuming positive weights.

\subsection{Further Objectives for Multivariate Filter Methods}
\label{sec:afs:appendix:multivariate-filter-objectives}

While Section~\ref{sec:afs:approach:objectives:white-box} already addressed FCBF and mRMR as multivariate filter feature-selection methods, we discuss the objectives of CFS and Relief here.

\paragraph{CFS}

Correlation-based Feature Selection (CFS)~\cite{hall1999correlation, hall2000correlation} follows a similar principle as mRMR but uses the ratio instead of the difference between a relevance term and a redundancy term for feature-set quality.
Using a bivariate dependency measure $q(\cdot)$ to quantify correlation, the objective is as follows:
\begin{equation}
	Q_{\text{CFS}}(s,X,y) = \frac{\sum_{j=1}^{n} q(X_{\cdot{}j},y) \cdot s_j}{\sqrt{\sum_{j=1}^{n} s_j + \sum_{j_1=1}^{n} \sum_{\substack{j_2=1 \\ j_2 \neq j_1}}^{n} q(X_{\cdot{}j_1}, X_{\cdot{}j_2}) \cdot s_{j_1} \cdot s_{j_2}}}
	\label{eq:afs:cfs}
\end{equation}
One can square this objective to remove the square root in the denominator~\cite{nguyen2010towards}.
Nevertheless, the objective remains non-linear in the decision variables~$s$ since it involves a fraction and multiplications between variables.
However, one can linearize the objective with additional variables and constraints~\cite{nguyen2010improving, nguyen2010towards}, allowing to formulate alternative feature selection for CFS as a linear problem.

\paragraph{Relief}

Relief~\cite{kira1992feature, robnik1997adaptation} builds on the idea that data objects with a similar value of the prediction target should have similar feature values, but data objects that differ in their target should differ in their feature values.
Relief assigns a score to each feature by sampling data objects and quantifying the difference in feature values and target values compared to their nearest neighbors.
We deem Relief to be multivariate since the nearest-neighbor computations involve all features instead of considering them independently.
However, the resulting feature scores can directly be put into the univariate objective (cf.~Equation~\ref{eq:afs:univariate-filter}) to obtain a linear problem.
One can also use Relief scores in CFS to consider feature redundancy~\cite{hall1999correlation, hall2000correlation}, which the default Relief does not.

\subsection{Complete Specifications of the Optimization Problem for the Univariate Objective}
\label{sec:afs:appendix:univariate-complete-optimization-problem}

In this section, we provide complete specifications of the alternative-feature-selection problem for sequential and simultaneous search as integer linear problem.
In particular, we combine all relevant definitions and equations from Section~\ref{sec:afs:approach}.
We use the objective of univariate filter feature selection (cf.~Equation~\ref{eq:afs:univariate-filter}).
The corresponding feature qualities $q(\cdot)$ are constants in the optimization problem.
Further, we use the Dice dissimilarity (cf.~Equation~\ref{eq:afs:dice-rearranged-equal-size}) to measure feature-set dissimilarity for alternatives.
The dissimilarity threshold~$\tau \in [0,1]$ is a user-defined constant.
Finally, we assume fixed, user-defined feature-set sizes~$k \in \mathbb{N}$.

\paragraph{Sequential-search problem}

In the sequential case (cf.~Definition~\ref{def:afs:alternative-feature-selection-sequential} and Equation~\ref{eq:afs:afs-sequential}), only one feature set~$F_s$ is variable in the optimization problem, while the existing feature sets $F_{\bar{s}} \in \mathbb{F}$ with their selection vectors $\bar{s}$ are constants.
\begin{equation}
	\begin{aligned}
		\max_s &\quad & Q_{\text{uni}}(s,X,y) &= \sum_{j=1}^{n} q(X_{\cdot{}j},y) \cdot s_j \\
		\text{subject to:} &\quad \forall F_{\bar{s}} \in \mathbb{F}: & \sum_{j=1}^n s_j \cdot \bar{s}_j &\leq (1 - \tau) \cdot k \\
		&\quad & \sum_{j=1}^n s_j &= k \\
		&\quad & s &\in \{0,1\}^n
	\end{aligned}
	\label{eq:afs:afs-sequential-complete}
\end{equation}
\paragraph{Simultaneous-search problem}

In the simultaneous case (cf.~Definition~\ref{def:afs:alternative-feature-selection-simultaneous} and Equation~\ref{eq:afs:afs-simultaneous}), all feature sets are variable.
$a \in \mathbb{N}_0$ denotes the number of alternatives, which corresponds to the number of feature sets minus one.
Next, we introduce auxiliary variables to linearize products between variables (cf.~Equation~\ref{eq:afs:product-linear}).
Finally, we use sum-aggregation (cf.~Equation~\ref{eq:afs:afs-simultaneous-sum-objective}) over alternatives in the objective here.
\begin{equation}
	\begin{aligned}
		\max_{s^{(0)}, \dots, s^{(a)}} &\quad & \sum_l Q_{\text{uni}}(s^{(l)},X,y) &= \sum_l \sum_j q(X_{\cdot{}j},y) \cdot s^{(l)}_j\\
		\text{subject to:} &\quad \forall l_1~\forall l_2: & \sum_j t^{(l_1,l_2)}_j &\leq (1 - \tau) \cdot k \\
		&\quad \forall l_1~\forall l_2~\forall j: & t^{(l_1,l_2)}_j &\leq s^{(l_1)}_j \\
		&\quad \forall l_1~\forall l_2~\forall j: & t^{(l_1,l_2)}_j &\leq s^{(l_2)}_j \\
		&\quad \forall l_1~\forall l_2~\forall j: & 1 + t^{(l_1,l_2)}_j &\geq s^{(l_1)}_j + s^{(l_2)}_j \\
		&\quad \forall l: & \sum_j s^{(l)}_j &= k \\
		&\quad \forall l: & s^{(l)} &\in \{0,1\}^n \\
		&\quad \forall l_1~\forall l_2: & t^{(l_1,l_2)} &\in \{0,1\}^n \\
		\text{with indices:} &\quad & l &\in \{0, \dots, a\} \\
		&\quad & l_1 &\in \{1, \dots, a\} \\
		&\quad & l_2 &\in \{0, \dots, l_1-1\} \\
		&\quad & j &\in \{1, \dots, n\}
	\end{aligned}
	\label{eq:afs:afs-simultaneous-complete}
\end{equation}

\subsection{Pre-Selection for the Univariate Objective}
\label{sec:afs:appendix:univariate-pre-selection}

In this section, we describe how to potentially speed up the optimization of the univariate objective (cf.~Equation~\ref{eq:afs:univariate-filter}) by \emph{pre-selection} if the user-defined feature-set sizes~$k$ and the number of alternatives~$a$ are small.

The univariate objective is monotonic in the features' qualities~$q(X_{\cdot{}j},y)$ and the selection decisions~$s_j$.
In particular, the objective is non-decreasing when replacing a feature with another one of higher quality.
Further, unless some feature qualities are negative, selecting more features does not decrease the objective.
Sum-aggregation (cf.~Equation~\ref{eq:afs:afs-simultaneous-sum-objective}) and min-aggregation (cf.~Equation~\ref{eq:afs:afs-simultaneous-min-objective}) for the simultaneous-search problem are monotonic as well.

Thus, assuming $(a + 1) \cdot k < n$, it suffices to use the $(a + 1) \cdot k$ highest feature qualities when searching for an optimal solution out of $a + 1$ feature sets.
Due to monotonicity, the remaining feature qualities cannot improve the objective, so one can drop them before optimization.
We call this step \emph{pre-selection}.
While there might also be optimal solutions using the dropped features, their objective value cannot be higher than with pre-selection.
For example, such solutions can arise in case of multiple identical qualities or for min-aggregation in the objective (cf.~Example~\ref{ex:afs:min-aggregation}).
Also, the optimal solution might not contain all pre-selected features, i.e., pre-selection over-approximates the set of selected features.

One can conduct pre-selection before using a solver or any other search mechanism, e.g., exhaustive search.
The latter generally has polynomial runtime regarding~$n$ assuming small, constant $a$ and $k$, i.e., $a \cdot k \in O(1)$ (cf.~Section~\ref{sec:afs:approach:complexity:exhaustive}).
With pre-selection, the pure search cost would even become independent from~$n$, i.e., $O(1)$ under that assumption.
However, one would need to determine the highest feature qualities first, e.g., by sorting all qualities in~$O(n \cdot \log n)$ or iteratively determining the maximum quality in~$O((a+1) \cdot k \cdot n)$.

\subsection{Time Complexity}
\label{sec:afs:appendix:complexity}

In this section, we provide details for our analysis of time complexity (cf.~Section~\ref{sec:afs:approach:complexity}).
In particular, we discuss a special case of exhaustive simultaneous search (cf.~Section~\ref{sec:afs:appendix:complexity:exhaustive-simultaneous-special-case}), outline related work (cf.~Section~\ref{sec:afs:appendix:complexity:related-work}), provide proofs (cf.~Section~\ref{sec:afs:appendix:complexity:proofs}), and describe future work (cf.~Section~\ref{sec:afs:appendix:complexity:future-work}).

\subsubsection{A Special Case of Exhaustive Simultaneous Search}
\label{sec:afs:appendix:complexity:exhaustive-simultaneous-special-case}

The complexity of exhaustive simultaneous search is lower than in Proposition~\ref{prop:afs:complexity-exhaustive-simultaneuos} for the special case~$0 < \tau \cdot k \leq 1$, i.e., if feature sets need to differ in only one feature.
There, each feature set is an alternative to each other unless both sets are identical.
Thus, each set of $a + 1$ distinct feature sets constitutes a valid solution, and further constraint checking is unnecessary.
Hence, instead of iterating over sets of feature sets, one can iterate over individual feature sets and maintain a buffer containing the $a + 1$ feature sets with the highest quality.
For each feature set iterated over, one needs to determine if its quality is higher than the lowest feature-set quality in the buffer and replace it if yes.
This procedure has a runtime of $O((a + 1) \cdot n^k)$ without the cost of evaluating the objective.
I.e., unlike in Proposition~\ref{prop:afs:complexity-exhaustive-simultaneuos}, the number of alternatives~$a$ is not part of the exponent anymore, and the cost corresponds to the search for one feature set times the cost of updating the buffer.
For large $a$, one can implement the buffer as a heap, thereby reducing the linear factor regarding~$a$ to a logarithmic one.

\subsubsection{Related Work}
\label{sec:afs:appendix:complexity:related-work}

In this section, we discuss related work on $\mathcal{NP}$-hard problems that resemble alternative feature-selection with univariate feature qualities (cf.~Equation~\ref{eq:afs:univariate-filter}), providing background for Section~\ref{sec:afs:approach:complexity:univariate}.

\paragraph{Integer programming}

The univariate objective and several other feature-selection methods allow us to phrase alternative feature selection as a 0-1 integer linear program (cf.~Section~\ref{sec:afs:approach:objectives:white-box}).
\textsc{Integer Programming} is $\mathcal{NP}$-complete in general, even for binary decision variables~\cite{garey2003computers, karp1972reducibility}.
Thus, alternative feature selection with a white-box objective suitable for \textsc{Integer Programming} resides in $\mathcal{NP}$.
However, it could still be easier since alternative feature selection only uses particular constraint types instead of expressing arbitrary integer linear problems.
Vice versa, the membership in $\mathcal{NP}$ based on \textsc{Integer Programming} assumes a particular encoding of alternative feature selection, i.e., each constraint is stored separately and counts towards the problem's input size.
If we instead define the input size only as the number of features~$n$ or the total encoding length of the objective function plus parameters~$a$, $k$, and $\tau$, the problem could be harder than $\mathcal{NP}$, e.g., for a high number of alternatives.
In particular, increasing the number of alternatives would increase the encoding length logarithmically but the cost of constraint checking quadratically.

\paragraph{Multi-way number partitioning / multiprocessor scheduling}

The literature provides different formulations of \textsc{Multi-Way Number Partitioning} and \textsc{Multiprocessor Scheduling}.
In particular, different objectives formalize the notion of balanced subset sums and can lead to different optimal solutions~\cite{korf2010objective, lawrinenko2017identical}.
The maximin formulation we use for min-aggregation in the simultaneous-search problem (cf.~Definition~\ref{def:afs:alternative-feature-selection-simultaneous} and Equation~\ref{eq:afs:afs-simultaneous-min-objective}) is one such notion.

There are several exact algorithms to solve \textsc{Multi-Way Number Partitioning}, e.g., using branch-and-bound approaches that might have exponential runtime~\cite{haouari2008maximizing, schreiber2018optimal, walter2017improved}.
For a fixed number of partitions, the problem is weakly $\mathcal{NP}$-complete since it admits pseudo-polynomial algorithms~\cite{garey2003computers, korf2009multi}.
Such algorithms run in polynomial time if the input numbers are bounded to a particular size known in advance.
Since our feature qualities typically are real numbers, one would need to scale and discretize them to apply such an algorithm.
Also, for an arbitrary number of partitions, the problem is strongly $\mathcal{NP}$-complete, so no pseudo-polynomial algorithm can exist unless $\mathcal{P}=\mathcal{NP}$~\cite{garey2003computers}.

However, $\mathcal{NP}$-completeness does not exclude the existence of approximation routines that run in polynomial time and have a guaranteed quality relative to the optimal solution.
For example, \cite{alon1998approximation, deuermeyer1982scheduling, woeginger1997polynomial}~present such algorithms for the maximin formulation of \textsc{Multi-Way Number Partitioning}, which corresponds to our objective with min-aggregation.
In particular, \cite{alon1998approximation, woeginger1997polynomial} describe polynomial-time approximation schemes (PTAS), which can provide a solution arbitrarily close to the optimum.
However, the runtime depends on the desired approximation ratio and can grow exponentially the more precision is desired.
Unless $\mathcal{P}=\mathcal{NP}$, the strong $\mathcal{NP}$-completeness of the problem prevents the existence of a fully polynomial-time approximation scheme (FPTAS), which would only polynomially depend on the precision of approximation~\cite{alon1998approximation, woeginger1997polynomial}.
However, an FPTAS does exist for each fixed number of partitions~\cite{sahni1976algorithms}.
Further, besides approximations, the problem also has polynomial-time exact algorithms if certain parameters of the problem are fixed, e.g., the number of unique numbers to be partitioned or the largest number~\cite{mnich2018parameterized}.
Thus, the problem is fixed-parameter tractable ($\mathcal{FPT}$) for an appropriate definition of `parameter'.

\paragraph{Balanced number partitioning / k-partitioning}

While the previous approaches considered sets of arbitrary sizes, there are number-partitioning problems with constrained~$k$ as well, e.g., called \textsc{Balanced Number Partitioning} or \textsc{K-Partitioning}.
The problem formulations differ in their objective and cardinality constraints, e.g., if equalities or inequalities are used.

For the minimax objective, \cite{babel1998thek, michiels2012computer, zhang2011heuristic} propose heuristic algorithms, some with approximation guarantees.
\cite{babel1998thek} also provides a bound of the objective value relative to the unconstrained case.
Further, there is a PTAS for each fixed set size~$k$~\cite{michiels2012computer}.
Finally, the problem exhibits a polynomial-time exact algorithm for $k=2$~\cite{dellamico2004heuristic, dellamico2001bounds} and an FPTAS for $k=n/2$~\cite{woeginger2005comment}.

One can also loosen the cardinality constraints by requiring $\leq k$ instead of $= k$.
Further, the cardinality~$k$ might vary between partitions.
This generalized problem is strongly $\mathcal{NP}$-hard but has heuristics running in polynomial time~\cite{kellerer2011a32approximation}.
In particular, \cite{chen2016efficient} provides an efficient PTAS (EPTAS).

As another problem formulation, \cite{chen20023partitioning, he2003kappa, lawrinenko2018reduction} use a maximin objective as we do.
This objective was rarely addressed in combination with cardinality constraints in the literature~\cite{lawrinenko2018reduction}.
Also, all these three references use $\leq k$ constraints instead of $= k$.
Again, this problem is strongly $\mathcal{NP}$-hard~\cite{he2003kappa}, but \cite{chen20023partitioning, he2003kappa, lawrinenko2018reduction} propose approximation algorithms, partly with quality guarantees.

\paragraph{Other partitioning problems}

There are other $\mathcal{NP}$-complete problems that partition elements into non-overlapping subsets~\cite{garey2003computers}.
E.g., \textsc{Partition}~\cite{karp1972reducibility} asks if one can partition a set of elements with positive integer weights into two subsets with the same subset sum.
\textsc{3-Partition}~\cite{garey2003computers} demands a partitioning into three-element subsets with an identical, predefined subset sum of the elements' positive integer weights.
In contrast to these two problems, we do not require alternative feature sets to have the same quality.

\paragraph{Bin covering}

\textsc{Bin Covering}~\cite{assmann1984dual} distributes elements with individual weights into bins such that the number of bins is maximal and the summed weights in each bin surpass a predefined limit.
\cite{lawrinenko2017identical} noted a relationship between \textsc{Multi-Way Number Partitioning} and \textsc{Bin Covering}, which may improve solution approaches for either problem~\cite{walter2017lower, walter2017improved}.
In our case, we could maximize the number of alternatives such that each feature set's quality exceeds a threshold.

\paragraph{Multiple knapsack}

The simultaneous-search problem with sum-aggregation, $\tau=1$, and univariate feature qualities is a special case of the \textsc{Multiple Knapsack} problem~\cite{chekuri2005polynomial}.
The latter involves knapsacks, i.e., sets with individual capacities, and elements with individual weights and profits.
The goal is to assign elements to knapsacks such that the summed profit of selected elements is maximal.
Each element can be assigned to at most one knapsack, and the weights of all elements in the knapsack must not violate its capacity.
This problem is strongly $\mathcal{NP}$-complete in general, though it exhibits a PTAS~\cite{chekuri2005polynomial}.
However, our problem is a special case where the feature qualities act as profits, the feature-set sizes are capacities, and each feature has a weight of~1.
These uniform weights enable the polynomial-runtime result stated in Proposition~\ref{prop:afs:complexity-partitioning-sum}.

\subsubsection{Proofs}
\label{sec:afs:appendix:complexity:proofs}

In this section, we provide proofs for propositions from Section~\ref{sec:afs:approach:complexity:univariate}.

\paragraph{Proof of Proposition~\ref{prop:afs:complexity-incomplete-partitioning-min-constrained-k}}
\begin{proof}
Let~$I$ be an arbitrary problem instance of the simultaneous-search problem with min-aggregation, univariate feature qualities, a complete-partitioning scenario, and a fixed feature-set size~$k$ (cf.~Proposition~\ref{prop:afs:complexity-partitioning-min-constrained-k}).
We add a new feature~$f'$ to~$I$ and keep the parameters $a$, $k$, and $\tau$ as before, obtaining an instance~$I'$ of the incomplete-partitioning scenario since one feature will not be selected.
We set the quality~$q'$ of~$f'$ to be lower than all other feature qualities in~$I$.
Since the univariate objective monotonically increases in the selected feature qualities, selecting feature~$f'$ in a solution of~$I'$ does not have any benefit since~$f'$ would replace a feature with higher quality.
If~$f'$ is not selected, this solution of~$I'$ also solves~$I$.
However, if the qualities of the alternatives are not equal, $f'$ might still be chosen in a set that does not have the minimum quality of all sets since only the latter determines the objective value (cf.~Example~\ref{ex:afs:min-aggregation}).
In that case, we replace $f'$ with the remaining unselected feature; the objective value remains the same, and the solution becomes valid for~$I$.
Thus, in any case, we can easily transform a solution for~$I'$ to a solution for~$I$.

Overall, an algorithm for incomplete partitioning instances can also solve arbitrary complete-partitioning instances with negligible computational overhead.
Thus, a polynomial-time algorithm for incomplete partitioning could also solve complete partitioning polynomially.
However, the latter problem is $\mathcal{NP}$-complete (cf.~Proposition~\ref{prop:afs:complexity-partitioning-min-constrained-k}), so incomplete partitioning has to be $\mathcal{NP}$-hard.
Since checking a solution for incomplete partitioning needs only polynomial time, we obtain membership in $\mathcal{NP}$ and thereby $\mathcal{NP}$-completeness.
\end{proof}

\paragraph{Proof of Proposition~\ref{prop:afs:complexity-no-partitioning-min-constrained-k}}
\begin{proof}
Let~$I$ be an arbitrary problem instance of the simultaneous-search problem with min-aggregation, univariate feature qualities, a complete-partitioning scenario, the Dice dissimilarity (cf.~Equation~\ref{eq:afs:dice}) as~$d(\cdot)$, and a fixed feature-set size~$k$ (cf.~Proposition~\ref{prop:afs:complexity-partitioning-min-constrained-k}).
We create a new problem instance~$I'$ by adding a new feature~$f'$ and increasing the feature-set size to $k' = k + 1$.
Further, we set $\tau' = (k' - 1) / k'$, thereby allowing an overlap of at most one feature between feature sets.
Also, we choose~$f'$ to have a considerably higher quality~$q'$ than all other features.
Our goal is to force the selection of~$f'$ in all feature sets, no matter which other features are selected.
One possible choice is $q' = \sum_{j=1}^n q_j + \varepsilon$ for a small $\varepsilon \in \mathbb{R}_{> 0}$
This quality~$q'$ of~$f'$ is higher than of any feature set not containing it.
Thus, a solution for~$I'$ contains~$f'$ in each feature set, while the remaining features are part of exactly one feature set.
Hence, we can remove~$f'$ to get feature sets of size~$k = k' - 1$ that constitute an optimal solution for the original problem instance~$I$.

This transformation shows how an algorithm for problem instances with $\tau < 1$ can help solve arbitrary problem instances with $\tau = 1$.
Given the $\mathcal{NP}$-completeness of the latter problem (cf.~Proposition~\ref{prop:afs:complexity-partitioning-min-constrained-k}), we obtain $\mathcal{NP}$-hardness of the former.
\end{proof}
One can transfer this reduction from $\tau' = (k' - 1) / k'$ to all other $\tau > 0$.
In particular, for a given~$k$, there is only a finite number of $\tau$ values leading to different set overlaps, e.g., $\tau = \{0, 1/k, \dots, (k - 1) / k, 1\}$ for the Dice dissimilarity.
The proof for the highest overlap except~$\tau=0$ requires creating an instance $I'$ with $\tau'= 1/k$ from an instance with $\tau = 1$.
For this purpose, $k^2 - k$ features need to be added since $\tau' = k / k' = k / (k + k^2 -k) = 1/k$.
I.e., $k$ out of $k' = k^2$ features need to form a complete partitioning, while the remaining $k^2 - k$~features occur in each feature set and will be removed after solving~$I'$.
The number of features to be added is polynomial in~$k$ and thereby also polynomial in~$n$.

\paragraph{Proof of Proposition~\ref{prop:afs:complexity-partitioning-sum}}
\begin{proof}
We discuss the simultaneous-search problem (cf.~Definition~\ref{def:afs:alternative-feature-selection-simultaneous}) with sum-aggregation (cf.~Equation~\ref{eq:afs:afs-simultaneous-sum-objective}) first.
We leverage the monotonicity of the univariate objective with sum-aggregation.
In particular, this objective cannot decrease when selecting features of higher quality.
Thus, we order all features decreasingly by their quality, which yields the complexity of~$O(n \cdot \log n)$.
Next, we pick features in this order without replacement and assign them to sets until we have the user-defined number of alternatives with the user-defined feature-set sizes.
Apart from observing cardinality constraints, the actual assignment of the selected features to sets does not matter quality-wise since swapping features between sets does not change the summed objective.
Thus, one can fill the feature sets in an arbitrary order.
Each assignment runs in $O(1)$, e.g., using arrays to store feature-set membership, yielding $O(n)$ for all features.
Without cardinality constraints, only the number of alternatives needs to be satisfied.
Further, if all features need to be selected, i.e., for a complete partitioning, one need not sort the features.
Finally, if only a small fraction of features needs to be selected, one might slightly improve complexity to $O(k \cdot n)$ by iteratively picking the maximum instead of sorting all qualities.

For the sequential-search problem (cf.~Definition~\ref{def:afs:alternative-feature-selection-sequential}), we conduct the same quality-sorting procedure.
In contrast to the simultaneous-search problem, the actual assignment of features to sets matters since the sets have an explicit order.
In particular, each alternative should get the remaining highest-quality features until its user-defined size is reached.
The complexity is still dominated by sorting and therefore~$O(n \cdot \log n)$.
\end{proof}

\subsubsection{Future Work}
\label{sec:afs:appendix:complexity:future-work}

In this section, we outline future work on alternative feature selection from the complexity-theory perspective, supplementing the Sections~\ref{sec:afs:approach:complexity} and~\ref{sec:afs:conclusion:future-work}.

\paragraph{Scenarios of alternative feature selection}

Our prior complexity analyses focused on special cases of alternative feature selection.
E.g., while we obtained $\mathcal{NP}$-hardness for min-aggregation with feature-set overlap (cf.~Proposition~\ref{prop:afs:complexity-no-partitioning-min-constrained-k}), an analysis of sum-aggregation with overlap is open, even for the sequential-search problem.
Sum-aggregation admits polynomial runtime for $\tau=1$ (cf.~Proposition~\ref{prop:afs:complexity-partitioning-sum}), but this result might not extend to~$\tau < 1$.
In particular, $\tau < 1$ increases the number of solution candidates, which could negatively affect the runtime.

Further, our complexity analyses mostly assumed univariate feature qualities (cf.~Equation~\ref{eq:afs:univariate-filter}).
Other feature-selection methods can reside in different complexity classes.

\paragraph{Complexity classes}

For analyzing other scenarios of alternative feature selection, several questions spring to mind.
First, one could establish a complexity result like $\mathcal{NP}$-hardness or membership in~$\mathcal{P}$.
In the former case, there might be pseudo-polynomial approaches or (F)PTAS.
As a first step in that direction, we showed membership in complexity class~$\mathcal{APX}$ under certain conditions (cf.~Proposition~\ref{prop:afs:approximation-apx}), i.e., there are polynomial-time algorithms yielding constant-factor approximations.
One may attempt to tighten the quality bound we derived.
Further, there might be efficient exact or approximate algorithms for certain types of problem instances, e.g., satisfying additional assumptions regarding the feature-set quality or the parameters~$k$, $a$, and $\tau$.
Finally, while we placed alternative feature selection in the parameterized complexity class~$\mathcal{XP}$ (cf.~Proposition~\ref{prop:afs:complexity-simultaneuos-xp}), one might prove membership or hardness for more specific parameterized complexity classes.

\paragraph{Related problem formulations}

We only focused on the optimization problem of alternative feature selection until now.
Another interesting question is how many alternatives exist for a given $n$, $k$, and $\tau$, regardless of their quality.
Also, given the number of alternatives as well, it would be interesting to have an exact or approximate estimate for the number of valid solutions for alternative feature selection, i.e., sets of feature sets.
While both these estimates are straightforward for $\tau = 1$, allowing arbitrary~$\tau$ poses a larger challenge.
Finally, one could re-formulate alternative feature selection similar to \textsc{Bin Covering} (cf.~Section~\ref{sec:afs:appendix:complexity:related-work}) and analyze this problem in detail.

\subsection{Greedy Depth Search for the Univariate Objective}
\label{sec:afs:appendix:greedy-depth}

In this section, we propose another heuristic search method for univariate feature qualities (cf.~Equation~\ref{eq:afs:univariate-filter} and Section~\ref{sec:afs:appendix:univariate-complete-optimization-problem}), complementing the methods discussed in Section~\ref{sec:afs:approach:univariate-heuristics}.
In particular, the new method \emph{Greedy Depth Search} is a sequential search method that generalizes \emph{Greedy Replacement} and allows to obtain more than $\frac{n - k}{\lceil \tau \cdot k \rceil}$ alternatives.

\paragraph{Algorithm}

\begin{algorithm}[tp]
	\DontPrintSemicolon
	\KwIn{Univariate feature qualities~$q \in \mathbb{R}^n$, \newline
		Feature-set size~$k \in \mathbb{N}$, \newline
		Number of alternatives~$a \in \mathbb{N}_0$, \newline
		Dissimilarity threshold~$\tau \in [0, 1]$}
	\KwOut{List of feature-selection decision vectors~$s^{(\cdot)}$}
	\BlankLine
	$\mathit{indices} \leftarrow$ sort\_indices($q$, order=descending) \tcp*{Order by qualities} \label{al:afs:greedy-depth:line:sorting}
	$\mathit{feature\_positions} \leftarrow \{0\}^k$ \tcp*{Indices of indices of features} \label{al:afs:greedy-depth:line:position-init-start}
	\For(\tcp*[f]{Start with top $k$ features}){$p \leftarrow 1$ \KwTo $k$}{
		$\mathit{feature\_positions}[p] \leftarrow p$ \tcp*{Ordered by qualities as well} \label{al:afs:greedy-depth:line:position-init-end}
	}
	$l \leftarrow 0$\ \tcp*{Number of current alternative}
	$has\_next\_solution \leftarrow$ \textbf{true} \;
	\While{$l \leq a$ \textbf{and} $has\_next\_solution$}{ \label{al:afs:greedy-depth:line:main-loop-start} \label{al:afs:greedy-depth:line:stop}
		$s^{(l)} \leftarrow \{0\}^n$ \; \label{al:afs:greedy-depth:line:selection-start}
		\For(\tcp*[f]{Select $k$ features, indexed by quality}){$p \leftarrow 1$ \KwTo $k$}{
			$j \leftarrow \mathit{indices}[\mathit{feature\_positions}[p]]$ \;
			$s^{(l)}_j \leftarrow 1$\;
		} \label{al:afs:greedy-depth:line:selection-end}
		\If{is\_valid\_alternative($s^{(l)}$, $\{s^{(0)}, \dots, s^{(l-1)}\}$)}{ \label{al:afs:greedy-depth:line:check-valid}
			$l \leftarrow l + 1$ \tcp*{Else, $s^{(l)}$ overwritten in next iteration}
		}
		$p \leftarrow k$ \tcp*{Update feature positions, starting with last} \label{al:afs:greedy-depth:line:position-update-start}
		\While{$p \geq 1$}{
			$position \leftarrow \mathit{feature\_positions}[p]$ \;
			\If(\tcp*[f]{Position can be increased}){$position < n + p - k$}{ \label{al:afs:greedy-depth:line:check-position-update}
				\For(\tcp*[f]{Also update later positions}){$\Delta_p \leftarrow 0$ \KwTo $k - p$}{ \label{al:afs:greedy-depth:line:quality-decrease-start}
					$\mathit{feature\_positions}[p + \Delta_p] \leftarrow position + \Delta_p + 1$ \; \label{al:afs:greedy-depth:line:quality-decrease-end}
				}
				$p \leftarrow -1$ \tcp*{Position update finished}
			}
			\Else(\tcp*[f]{Position cannot be increased}){
				$p \leftarrow p - 1$ \tcp*[f]{Also update at least one prior position} \label{al:afs:greedy-depth:line:continue-prior-position}
			}
		}
		\If(\tcp*[f]{Updating positions further would violate $n$}){$p = 0$}{ \label{al:afs:greedy-depth:line:position-update-impossible}
			$has\_next\_solution \leftarrow$ \textbf{false} \; \label{al:afs:greedy-depth:line:main-loop-end} \label{al:afs:greedy-depth:line:position-update-end}
		}
	}
	\Return{$s^{(0)}, \dots, s^{(l)}$}
	\caption{\emph{Greedy Depth Search} for alternative feature selection.}
	\label{al:afs:greedy-depth}
\end{algorithm}

Algorithm~\ref{al:afs:greedy-depth} outlines \emph{Greedy Depth Search}.
As in the other two heuristics, we start by sorting the features decreasingly according to their qualities~$q_j$ (Line~\ref{al:afs:greedy-depth:line:sorting}).
However, instead of keeping the same $\lfloor (1 - \tau) \cdot k \rfloor$~features in each alternative and only replacing the remaining ones, we now allow all features to be replaced.
In particular, we may exhaustively iterate over all feature sets, depending on the number of alternatives~$a$.
Thus, we maintain not only one feature position as before but a length-$k$ array of the feature positions for the current feature set (Lines~\ref{al:afs:greedy-depth:line:position-init-start}--\ref{al:afs:greedy-depth:line:position-init-end}).
This array represents feature indices regarding the sorted qualities and is sorted increasingly, which prevents evaluating the same feature set, only with different feature order, multiple times.

In the main loop of the algorithm, we find alternatives sequentially
(Lines~\ref{al:afs:greedy-depth:line:main-loop-start}--\ref{al:afs:greedy-depth:line:main-loop-end}).
For each potential alternative, we select the features based on the position array (Lines~\ref{al:afs:greedy-depth:line:selection-start}--\ref{al:afs:greedy-depth:line:selection-end}).
We check the resulting feature set against the constraints for alternatives (Line~\ref{al:afs:greedy-depth:line:check-valid}) and only store it if it is valid.
This check was unnecessary in the other two heuristics, which only formed valid alternatives by design.

Next, we update the feature positions for the next potential alternative (Lines~\ref{al:afs:greedy-depth:line:position-update-start}--\ref{al:afs:greedy-depth:line:position-update-end}).
First, we try to replace the lowest-quality feature in the current feature set by advancing one position in the sorted qualities.
This step may not be possible, as the feature set may already contain the feature with the overall lowest quality, i.e., position~$n$ in the array of sorted qualities (Line~\ref{al:afs:greedy-depth:line:check-position-update}).
In this case, we try to replace the second-lowest-quality feature in the current feature set by advancing its position.
If this action is impossible as well, we iterate further over positions in the current feature set by increasing quality (Line~\ref{al:afs:greedy-depth:line:continue-prior-position}).
Once we find a feature position that we can increase, we also advance all subsequent, i.e., lower-quality, positions accordingly.
Hence, the feature positions remain sorted by decreasing quality (Lines~\ref{al:afs:greedy-depth:line:quality-decrease-start}--\ref{al:afs:greedy-depth:line:quality-decrease-end}).

We repeat the main loop until we reach the desired number of alternatives~$a$ or until we cannot update any feature position without exceeding the number of features~$n$, i.e., we cannot form another alternative (Lines~\ref{al:afs:greedy-depth:line:stop} and~\ref{al:afs:greedy-depth:line:position-update-impossible}).
\begin{example}[Algorithm of \emph{Greedy Depth Search}]
	Consider $n=6$~features with univariate feature qualities $q = (9,8,7,3,2,1)$, feature-set size~$k=4$, number of alternatives~$a=1$, and dissimilarity threshold~$\tau = 0.5$, which permits an overlap of two features between sets here.
	Note that the features are already ordered by quality here, i.e., $\mathit{indices} = (1,2,3,4,5,6)$ (Line~\ref{al:afs:greedy-depth:line:sorting}).
	Next, the algorithm initializes $\mathit{feature\_positions} = (1,2,3,4)$ (Line~\ref{al:afs:greedy-depth:line:position-init-start}--\ref{al:afs:greedy-depth:line:position-init-end}).
	$s^{(0)}$ contains these $k$~features, i.e., $s^{(0)} = (1,1,1,1,0,0)$.
	Given that there are no other alternatives yet, this feature set is valid (Line~\ref{al:afs:greedy-depth:line:check-valid})) and the algorithm moves on to $l=1$.
	
	For forming $s^{(1)}$, the position-update step (Lines~\ref{al:afs:greedy-depth:line:position-update-start}--\ref{al:afs:greedy-depth:line:position-update-end}) first tries to only replace the lowest-quality feature in the alternative, i.e., $\mathit{feature\_positions} = (1,2,3,5)$ and $\mathit{feature\_positions} = (1,2,3,6)$.
	However, neither of these feature sets constitutes a valid alternative regarding~$s^{(0)}$.
	Thus, the algorithm attempts to replace the feature with the second-lowest quality as well, evaluating $\mathit{feature\_positions} = (1,2,4,5)$ and $\mathit{feature\_positions} = (1,2,4,6)$.
	However, the overlap with $s^{(0)}$ is still too large.
	The next value is $\mathit{feature\_positions} = (1,2,5,6)$, which yields the valid alternative $s^{(1)} = (1,1,0,0,1,1)$.
	
	\emph{Greedy Replacement} would terminate now since the options for replacing the $\lceil \tau \cdot k \rceil = 2$ lowest-quality features are exhausted.
	In contrast, \emph{Greedy Depth Search} attempts to replace the third-lowest-quality feature, starting with $\mathit{feature\_positions} = (1,3,4,5)$.
	This feature set is not a valid alternative, and neither are the subsequent feature sets with $\mathit{feature\_positions} = (1,3,4,6)$, $\mathit{feature\_positions} = (1,3,5,6)$, etc.
	After more iterations, the algorithm also replaces the highest-quality feature, starting with $\mathit{feature\_positions} = (2,3,4,5)$.
	Eventually, the algorithm reaches $\mathit{feature\_positions} = (3,4,5,6)$, which yields the valid alternative $s^{(2)} = (0,0,1,1,1,1)$.
	After obtaining $s^{(2)}$, there is no valid update of the feature positions left (Line~\ref{al:afs:greedy-depth:line:position-update-impossible}).
	Thus, the algorithm terminates.
	\label{ex:afs:greedy-depth:algorithm}
\end{example}

\paragraph{Time complexity}

The runtime behavior differs from the other two heuristics.
In particular, \emph{Greedy Replacement} has the same runtime cost between subsequent alternatives since it directly creates valid alternatives by design.
In contrast, \emph{Greedy Depth Search} iterates over all possible feature sets, and the runtime between valid alternatives may vary.
For each values of $\mathit{feature\_positions}$, the algorithm creates a feature selection in $O(k \cdot n)$ (Lines~\ref{al:afs:greedy-depth:line:selection-start}--\ref{al:afs:greedy-depth:line:selection-end}), checks constraints in $O(a \cdot n)$ (Line~\ref{al:afs:greedy-depth:line:check-valid}), and updates the position array in~$O(k^2)$ (Lines~\ref{al:afs:greedy-depth:line:position-update-start}--\ref{al:afs:greedy-depth:line:position-update-end}).
However, there are $O(n^k)$ potential $\mathit{feature\_positions}$, and \emph{Greedy Depth Search} may exhaustively iterate over them.
This cost is comparable to exhaustive conventional feature selection (cf.~Proposition~\ref{prop:afs:complexity-exhaustive-conventional}) and exhaustive sequential search (cf.~Proposition~\ref{prop:afs:complexity-exhaustive-sequential}).
Unlike the latter, the search does not restart for each alternative, i.e., it only considers each feature set once instead of $a+1$~times.

On the positive side, \emph{Greedy Depth Search} can yield more alternatives than \emph{Greedy Replacement} with its $O(n^2)$ cost or \emph{Greedy Balancing} with its $O(a \cdot n^2)$ cost.
Nevertheless, in scenarios where the latter two are applicable, i.e., $k + a \cdot \lceil \tau \cdot k \rceil \leq n$, they have a lower cost than \emph{Greedy Depth Search}.
In particular, \emph{Greedy Depth Search} needs $O(n^{\lceil \tau \cdot k \rceil})$~iterations to cover the options for replacing the worst $\lceil \tau \cdot k \rceil$ features in size-$k$ feature sets, which is the search space of the other two heuristics.
In particular, the cost disadvantage relative to the other two heuristics grows with the dissimilarity threshold~$\tau$.
As a remedy, one may use \emph{Greedy Replacement} for as many alternatives as possible and then continue with \emph{Greedy Depth Search}, initializing the $\mathit{feature\_positions}$ (Line~\ref{al:afs:greedy-depth:line:position-init-start}--\ref{al:afs:greedy-depth:line:position-init-end}) based on the results of the former heuristic.

\paragraph{Quality}

\emph{Greedy Depth Search} initially yields the same solutions as \emph{Greedy Replacement}.
Thus, \emph{Greedy Depth Search} also yields a constant-factor approximation of the optimal solution in case $k + a \cdot \lceil \tau \cdot k \rceil \leq n$ (cf.~Proposition~\ref{prop:afs:approximation-greedy-replacement}).
The quality analysis becomes more involved for further alternatives since these do not contain all top $\lfloor (1 - \tau) \cdot k \rfloor$~features anymore, on which our proof of Proposition~\ref{prop:afs:approximation-greedy-replacement} builds.
Thus, we leave this analysis open for future work.
The quality of alternatives may not even be monotonically decreasing anymore, as the following example shows:
\begin{example}[Non-monotonic quality of \emph{Greedy Depth Search}]
	Consider $n=4$~features with univariate feature qualities $q = (9,8,7,1)$, feature-set size~$k=2$, number of alternatives~$a=3$, and dissimilarity threshold~$\tau = 0.5$, which permits an overlap of one feature between sets here.
	\emph{Greedy Depth Search} yields the the selection $s^{(0)} = (1,1,0,0)$, $s^{(1)} = (1,0,1,0)$, $s^{(2)} = (1,0,0,1)$, and $s^{(3)} = (0,1,1,0)$, with the corresponding feature-set qualities~17, 16, 10, and 15.
	\label{ex:afs:greedy-depth:non-monotonic}
\end{example}

\paragraph{Limitations}

Like \emph{Greedy Balancing} and \emph{Greedy Replacement}, \emph{Greedy Depth Search} assumes univariate feature qualities and may be worse than exact search.
As a sequential procedure, it does not balance the alternatives' qualities.
It may yield more alternatives than the former two heuristics but has a higher and more variable runtime.

\renewcommand*{\bibfont}{\small} 
\printbibliography

@article{alon1998approximation,
	title={Approximation schemes for scheduling on parallel machines},
	author={Alon, Noga and Azar, Yossi and Woeginger, Gerhard J. and Yadid, Tal},
	journal={J. Sched.},
	volume={1},
	number={1},
	pages={55--66},
	year={1998},
	doi={10.1002/(SICI)1099-1425(199806)1:1<55::AID-JOS2>3.0.CO;2-J}
}

@inproceedings{artelt2022even,
	title={``Even if ...'' -- Diverse Semifactual Explanations of Reject},
	author={Artelt, Andr{\'e} and Hammer, Barbara},
	booktitle={Proc. SSCI},
	location={Singapore, Singapore},
	pages={854--859},
	year={2022},
	doi={10.1109/SSCI51031.2022.10022139}
}

@article{assmann1984dual,
	title={On a Dual Version of the One-Dimensional Bin Packing Problem},
	author={Assmann, Susan F. and Johnson, David S. and Kleitman, Daniel J. and Leung, Joseph Y.-T.},
	journal={J. Algorithms},
	volume={5},
	number={4},
	pages={502--525},
	year={1984},
	doi={10.1016/0196-6774(84)90004-X}
}

@article{babel1998thek,
	title={The k-Partitioning Problem},
	author={Babel, Luitpold and Kellerer, Hans and Kotov, Vladimir},
	journal={Math. Methods Oper. Res.},
	volume={47},
	number={1},
	pages={59--82},
	year={1998},
	doi={10.1007/BF01193837}
}

@incollection{bacchus2021maximum,
	title={Maximum Satisfiability},
	author={Bacchus, Fahiem and J{\"a}rvisalo, Matti and Martins, Ruben},
	booktitle={Handbook of Satisfiability},
	edition={2},
	chapter={24},
	pages={929--991},
	year={2021},
	publisher={IOS Press},
	doi={10.3233/FAIA201008}
}

@article{bach2022empirical,
	title={An Empirical Evaluation of Constrained Feature Selection},
	author={Bach, Jakob and Zoller, Kolja and Trittenbach, Holger and Schulz, Katrin and B{\"o}hm, Klemens},
	journal={SN Comput. Sci.},
	volume={3},
	number={6},
	year={2022},
	doi={10.1007/s42979-022-01338-z}
}

@article{bach2024alternative,
	title={Alternative feature selection with user control},
	author={Bach, Jakob and B{\"o}hm, Klemens},
	journal={Int. J. Data Sci. Anal.},
	year={2024},
	doi={10.1007/s41060-024-00527-8}
}

@phdthesis{bach2025leveraging,
	title={Leveraging Constraints for User-Centric Feature Selection},
	author={Bach, Jakob},
	year={2025},
	school={Karlsruhe Institute of Technology (KIT)},
	doi={10.5445/IR/1000178649}
}

@inproceedings{bae2006coala,
	title={{COALA}: A Novel Approach for the Extraction of an Alternate Clustering of High Quality and High Dissimilarity},
	author={Bae, Eric and Bailey, James},
	booktitle={Proc. ICDM},
	location={Hong Kong, China},
	pages={53--62},
	year={2006},
	doi={10.1109/ICDM.2006.37}
}

@article{bae2010clustering,
	title={A clustering comparison measure using density profiles and its application to the discovery of alternate clusterings},
	author={Bae, Eric and Bailey, James and Dong, Guozhu},
	journal={Data Min. Knowl. Disc.},
	volume={21},
	number={3},
	pages={427--471},
	year={2010},
	doi={10.1007/s10618-009-0164-z}
}

@incollection{bailey2014alternative,
	title={Alternative Clustering Analysis: A Review},
	author={Bailey, James},
	booktitle={Data Clustering: Algorithms and Applications},
	edition={1},
	chapter={21},
	pages={535--550},
	year={2014},
	publisher={{CRC} Press},
	doi={10.1201/9781315373515}
}

@incollection{barrett2018satisfiability,
	title={Satisfiability Modulo Theories},
	author={Barrett, Clark and Tinelli, Cesare},
	booktitle={Handbook of Model Checking},
	edition={1},
	chapter={11},
	pages={305--343},
	year={2018},
	publisher={Springer},
	doi={10.1007/978-3-319-10575-8_11}
}

@techreport{bestuzheva2021scip,
	title = {{The SCIP Optimization Suite 8.0}},
	author = {Ksenia Bestuzheva and Mathieu Besan{\c{c}}on and Wei-Kun Chen and Antonia Chmiela and Tim Donkiewicz and Jasper van Doornmalen and Leon Eifler and Oliver Gaul and Gerald Gamrath and Ambros Gleixner and Leona Gottwald and Christoph Graczyk and Katrin Halbig and Alexander Hoen and Christopher Hojny and Rolf van der Hulst and Thorsten Koch and Marco L{\"u}bbecke and Stephen J. Maher and Frederic Matter and Erik M{\"u}hmer and Benjamin M{\"u}ller and Marc E. Pfetsch and Daniel Rehfeldt and Steffan Schlein and Franziska Schl{\"o}sser and Felipe Serrano and Yuji Shinano and Boro Sofranac and Mark Turner and Stefan Vigerske and Fabian Wegscheider and Philipp Wellner and Dieter Weninger and Jakob Witzig},
	institution = {Zuse Institute Berlin},
	year = {2021},
	url = {http://nbn-resolving.de/urn:nbn:de:0297-zib-85309}
}

@article{borboudakis2021extending,
	title={Extending greedy feature selection algorithms to multiple solutions},
	author={Borboudakis, Giorgos and Tsamardinos, Ioannis},
	journal={Data Min. Knowl. Disc.},
	volume={35},
	number={4},
	pages={1393--1434},
	year={2021},
	doi={10.1007/s10618-020-00731-7}
}

@article{breiman2001random,
	title={Random Forests},
	author={Breiman, Leo},
	journal={Mach. Learn.},
	volume={45},
	number={1},
	pages={5--32},
	year={2001},
	doi={10.1023/A:1010933404324}
}

@book{breiman1984classification,
	title={Classification and Regression Trees},
	author={Breiman, Leo and Friedman, Jerome H. and Olshen, Richard A. and Stone, Charles J.},
	edition={1},
	year={1984},
	publisher={Chapman and Hall},
	doi={10.1201/9781315139470}
}

@article{carvalho2019machine,
	title={Machine Learning Interpretability: A Survey on Methods and Metrics},
	author={Carvalho, Diogo V. and Pereira, Eduardo M. and Cardoso, Jaime S.},
	journal={Electronics},
	volume={8},
	number={8},
	year={2019},
	doi={10.3390/electronics8080832}
}

@article{chandrashekar2014survey,
	title={A survey on feature selection methods},
	author={Chandrashekar, Girish and Sahin, Ferat},
	journal={Comput. Electr. Eng.},
	volume={40},
	number={1},
	pages={16--28},
	year={2014},
	doi={10.1016/j.compeleceng.2013.11.024}
}

@article{chekuri2005polynomial,
	title={A Polynomial Time Approximation Scheme for the Multiple Knapsack Problem},
	author={Chekuri, Chandra and Khanna, Sanjeev},
	journal={SIAM J. Comput.},
	volume={35},
	number={3},
	pages={713--728},
	year={2005},
	doi={10.1137/S0097539700382820}
}

@inproceedings{chen2016efficient,
	title={An Efficient {PTAS} for Parallel Machine Scheduling with Capacity Constraints},
	author={Chen, Lin and Jansen, Klaus and Luo, Wenchang and Zhang, Guochuan},
	booktitle={Proc. COCOA},
	location={Hong Kong, China},
	pages={608--623},
	year={2016},
	doi={10.1007/978-3-319-48749-6_44}
}

@article{chen20023partitioning,
	title={3-Partitioning Problems for Maximizing the Minimum Load},
	author={Chen, Shi Ping and He, Yong and Lin, Guohui},
	journal={J. Comb. Optim.},
	volume={6},
	number={1},
	pages={67--80},
	year={2002},
	doi={10.1023/A:1013370208101}
}

@article{choi2010survey,
	title={A Survey of Binary Similarity and Distance Measures},
	author={Choi, Seung-Seok and Cha, Sung-Hyuk and Tappert, Charles C.},
	journal={J. Syst. Cybern. Inf.},
	volume={8},
	number={1},
	pages={43--48},
	year={2010},
	url={http://www.iiisci.org/Journal/pdv/sci/pdfs/GS315JG.pdf}
}

@inproceedings{covert2020understanding,
	title={Understanding Global Feature Contributions With Additive Importance Measures},
	author={Covert, Ian and Lundberg, Scott M. and Lee, Su-In},
	booktitle={Proc. NeurIPS},
	location={Virtual-only Conference},
	pages={17212--17223},
	year={2020},
	url={https://proceedings.neurips.cc/paper_files/paper/2020/hash/c7bf0b7c1a86d5eb3be2c722cf2cf746-Abstract.html}
}

@inproceedings{dandl2020multi,
	title={Multi-Objective Counterfactual Explanations},
	author={Dandl, Susanne and Molnar, Christoph and Binder, Martin and Bischl, Bernd},
	booktitle={Proc. PPSN},
	location={Leiden, The Netherlands},
	pages={448--469},
	year={2020},
	doi={10.1007/978-3-030-58112-1_31}
}

@article{dellamico2001bounds,
	title={Bounds for the cardinality constrained $P||\textrm{C}_{\textrm{max}}$ problem},
	author={Dell'Amico, Mauro and Martello, Silvano},
	journal={J. Sched.},
	volume={4},
	number={3},
	pages={123--138},
	year={2001},
	doi={10.1002/jos.68}
}

@article{dellamico2004heuristic,
	title={Heuristic Algorithms and Scatter Search for the Cardinality Constrained $P||\textrm{C}_{\textrm{max}}$ Problem},
	author={Dell'Amico, Mauro and Iori, Manuel and Martello, Silvano},
	journal={J. Heuristics},
	volume={10},
	number={2},
	pages={169--204},
	year={2004},
	doi={10.1023/B:HEUR.0000026266.07036.da}
}

@article{deuermeyer1982scheduling,
	title={Scheduling to Maximize the Minimum Processor Finish Time in a Multiprocessor System},
	author={Deuermeyer, Bryan L. and Friesen, Donald K. and Langston, Michael A.},
	journal={SIAM J. Algebraic Discrete Methods},
	volume={3},
	number={2},
	pages={190--196},
	year={1982},
	doi={10.1137/0603019}
}

@inproceedings{downey1997parameterized,
	title={Parameterized Complexity: A Framework for Systematically Confronting Computational Intractability},
	author={Downey, Rodney G. and Fellows, Michael R. and Stege, Ulrike},
	booktitle={Contemporary Trends in Discrete Mathematics: From {DIMACS} and {DIMATIA} to the Future},
	location={{\v{S}}ti{\v{r}}{\'i}n Castle, Czech Republic},
	pages={49--99},
	year={1997},
	doi={10.1090/dimacs/049/04}
}

@article{egghe2009new,
	title={New Relations Between Similarity Measures for Vectors Based on Vector Norms},
	author={Egghe, Leo},
	journal={J. Am. Soc. Inf. Sci. Technol.},
	volume={60},
	number={2},
	pages={232--239},
	year={2009},
	doi={10.1002/asi.20949}
}

@inproceedings{emmanouilidis1999selecting,
	title={Selecting Features in Neurofuzzy Modelling by Multiobjective Genetic Algorithms},
	author={Emmanouilidis, Christos and Hunter, Andrew and MacIntyre, John and Cox, Chris},
	booktitle={Proc. ICANN},
	location={Edinburgh, United Kingdom},
	pages={749--754},
	year={1999},
	doi={10.1049/cp:19991201}
}

@inproceedings{ermon2012uniform,
	title={Uniform Solution Sampling Using a Constraint Solver As an Oracle},
	author={Ermon, Stefano and Gomes, Carla and Selman, Bart},
	booktitle={Proc. UAI},
	location={Catalina Island, CA, USA},
	pages={255--264},
	year={2012},
	url={https://www.auai.org/uai2012/papers/160.pdf}
}

@article{fisher2019all,
	title={All Models are Wrong, but Many are Useful: Learning a Variable's Importance by Studying an Entire Class of Prediction Models Simultaneously},
	author={Fisher, Aaron and Rudin, Cynthia and Dominici, Francesca},
	journal={J. Mach. Learn. Res.},
	volume={20},
	number={177},
	year={2019},
	url={https://jmlr.org/papers/v20/18-760.html}
}

@article{fouche2021efficient,
	title={Efficient subspace search in data streams},
	author={Fouch{\'e}, Edouard and Kalinke, Florian and B{\"o}hm, Klemens},
	journal={Inf. Syst.},
	volume={97},
	year={2021},
	doi={10.1016/j.is.2020.101705}
}

@article{fryer2021shapley,
	title={Shapley Values for Feature Selection: The Good, the Bad, and the Axioms},
	author={Fryer, Daniel and Str{\"u}mke, Inga and Nguyen, Hien},
	journal={IEEE Access},
	volume={9},
	pages={144352--144360},
	year={2021},
	doi={10.1109/ACCESS.2021.3119110}
}

@book{garey2003computers,
	title={Computers and Intractibility: A Guide to the Theory of {NP}-Completeness},
	author={Garey, Michael R. and Johnson, David S.},
	edition={24},
	year={2003},
	publisher={{W. H. F}reeman and Company},
	url={https://www.worldcat.org/title/440655898}
}

@book{graham1994concrete,
	title={Concrete Mathematics: A Foundation for Computer Science},
	author={Graham, Ronald L. and Knuth, Donald E. and Patashnik, Oren},
	edition={2},
	year={1994},
	publisher={Addison-Wesley},
	url={https://www.worldcat.org/title/1085703509}
}

@phdthesis{groves2015toward,
	title={Toward Automating and Systematizing the Use of Domain Knowledge in Feature Selection},
	author={Groves, William Christopher},
	year={2015},
	school={University of Minnesota},
	url={https://hdl.handle.net/11299/175444}
}

@inproceedings{gunnemann2009detection,
	title={Detection of Orthogonal Concepts in Subspaces of High Dimensional Data},
	author={G{\"u}nnemann, Stephan and M{\"u}ller, Emmanuel and F{\"a}rber, Ines and Seidl, Thomas},
	location={Hong Kong, China},
	booktitle={Proc. CIKM},
	pages={1317--1326},
	year={2009},
	doi={10.1145/1645953.1646120}
}

@article{guidotti2022counterfactual,
	title={Counterfactual explanations and how to find them: literature review and benchmarking},
	author={Guidotti, Riccardo},
	journal={Data Min. Knowl. Disc.},
	pages={1--55},
	year={2022},
	doi={10.1007/s10618-022-00831-6}
}

@inproceedings{guo2018preserve,
	title={To Preserve or Not to Preserve Invalid Solutions in Search-Based Software Engineering: A Case Study in Software Product Lines},
	author={Guo, Jianmei and Shi, Kai},
	booktitle={Proc. ICSE},
	location={Gothenburg, Sweden},
	pages={1027--1038},
	year={2018},
	doi={10.1145/3180155.3180163}
}

@article{guru2018alternative,
	title={An alternative framework for univariate filter based feature selection for text categorization},
	author={Guru, D. S. and Suhil, Mahamad and Raju, Lavanya Narayana and Kumar, N. Vinay},
	journal={Pattern Recognit. Lett.},
	volume={103},
	pages={23--31},
	year={2018},
	doi={10.1016/j.patrec.2017.12.025}
}

@article{guyon2003introduction,
	title={An Introduction to Variable and Feature Selection},
	author={Guyon, Isabelle and Elisseeff, Andr{\'e}},
	journal={J. Mach. Learn. Res.},
	volume={3},
	number={Mar},
	pages={1157--1182},
	year={2003},
	url={https://www.jmlr.org/papers/v3/guyon03a.html}
}

@phdthesis{hall1999correlation,
	title={Correlation-based Feature Selection for Machine Learning},
	author={Hall, Mark A.},
	school={University of Waikato},
	year={1999},
	url={https://hdl.handle.net/10289/15043}
}

@techreport{hall2000correlation,
	title={Correlation-based Feature Selection of Discrete and Numeric Class Machine Learning},
	author={Hall, Mark A.},
	institution={University of Waikato, Department of Computer Science},
	year={2000},
	url={https://hdl.handle.net/10289/1024}
}

@article{haouari2008maximizing,
	title={Maximizing the minimum completion time on parallel machines},
	author={Haouari, Mohamed and Jemmali, Mahdi},
	journal={4OR},
	volume={6},
	number={4},
	pages={375--392},
	year={2008},
	doi={10.1007/s10288-007-0053-5}
}

@article{he2003kappa,
	title={k-Partitioning Problems for Maximizing the Minimum Load},
	author={He, Yong and Tan, Zhiyi and Zhu, Jing and Yao, Enyu},
	journal={Comput. Math. Appl.},
	volume={46},
	number={10-11},
	pages={1671--1681},
	year={2003},
	doi={10.1016/S0898-1221(03)90201-X}
}

@inproceedings{henard2015combining,
	title={Combining Multi-Objective Search and Constraint Solving for Configuring Large Software Product Lines},
	author={Henard, Christopher and Papadakis, Mike and Harman, Mark and Le Traon, Yves},
	booktitle={Proc. ICSE},
	location={Florence, Italy},
	pages={517--528},
	year={2015},
	doi={10.1109/ICSE.2015.69}
}

@article{hu2018subspace,
	title={Subspace multi-clustering: a review},
	author={Hu, Juhua and Pei, Jian},
	journal={Knowl. Inf. Sys.},
	volume={56},
	number={2},
	pages={257--284},
	year={2018},
	doi={10.1007/s10115-017-1110-9}
}

@inproceedings{jain2019attention,
	title={Attention is not Explanation},
	author={Jain, Sarthak and Wallace, Byron C.},
	booktitle={Proc. NAACL-HLT},
	location={Minneapolis, MN, USA},
	pages={3543--3556},
	year={2019},
	doi={10.18653/v1/N19-1357}
}

@inproceedings{karimi2020model,
	title={Model-Agnostic Counterfactual Explanations for Consequential Decisions},
	author={Karimi, Amir-Hossein and Barthe, Gilles and Balle, Borja and Valera, Isabel},
	booktitle={Proc. AISTATS},
	location={Online},
	pages={895--905},
	year={2020},
	url={https://proceedings.mlr.press/v108/karimi20a.html}
}

@incollection{karp1972reducibility,
	title={Reducibility among Combinatorial Problems},
	author={Karp, Richard M.},
	booktitle={Complexity of Computer Computations},
	edition={1},
	chapter={9},
	pages={85--103},
	year={1972},
	publisher={Plenum Press},
	doi={10.1007/978-1-4684-2001-2_9}
}

@article{kellerer2011a32approximation,
	title={A 3/2-approximation algorithm for $k_i$-partitioning},
	author={Kellerer, Hans and Kotov, Vladimir},
	journal={Oper. Res. Lett.},
	volume={39},
	number={5},
	pages={359--362},
	year={2011},
	doi={10.1016/j.orl.2011.06.005}
}

@inproceedings{khanna1997complete,
	title={A Complete Classification of the Approximability of Maximization Problems Derived from Boolean Constraint Satisfaction},
	author={Khanna, Sanjeev and Sudan, Madhu and Williamson, David P.},
	booktitle={Proc. STOC},
	location={El Paso, TX, USA},
	pages={11--20},
	year={1997},
	doi={10.1145/258533.258538}
}

@article{khanna1998syntactic,
	title={On Syntactic Versus Computational Views of Approximability},
	author={Khanna, Sanjeev and Motwani, Rajeev and Sudan, Madhu and Vazirani, Umesh},
	journal={SIAM J. Comput.},
	volume={28},
	number={1},
	pages={164--191},
	year={1998},
	doi={10.1137/S0097539795286612}
}

@inproceedings{kim2016examples,
	title={Examples are not Enough, Learn to Criticize! Criticism for Interpretability},
	author={Kim, Been and Khanna, Rajiv and Koyejo, Oluwasanmi},
	booktitle={Proc. NIPS},
	location={Barcelona, Spain},
	year={2016},
	url={https://proceedings.neurips.cc/paper_files/paper/2016/hash/5680522b8e2bb01943234bce7bf84534-Abstract.html}
}

@article{kim2021multi,
	title={A Multi-Component Framework for the Analysis and Design of Explainable Artificial Intelligence},
	author={Kim, Mi-Young and Atakishiyev, Shahin and Babiker, Housam Khalifa Bashier and Farruque, Nawshad and Goebel, Randy and Za{\"\i}ane, Osmar R. and Motallebi, Mohammad-Hossein and Rabelo, Juliano and Syed, Talat and Yao, Hengshuai and Chun, Peter},
	journal={Mach. Learn. Knowl. Extr.},
	volume={3},
	number={4},
	pages={900--921},
	year={2021},
	doi={10.3390/make3040045}
}

@inproceedings{kira1992feature,
	title={The Feature Selection Problem: Traditional Methods and a New Algorithm},
	author={Kira, Kenji and Rendell, Larry A.},
	booktitle={Proc. AAAI},
	location={San Jose, CA, USA},
	pages={129--134},
	year={1992},
	url={https://aaai.org/papers/00129-aaai92-020-the-feature-selection-problem-traditional-methods-and-a-new-algorithm/}
}

@article{kohavi1997wrappers,
	title={Wrappers for feature subset selection},
	author={Kohavi, Ron and John, George H.},
	journal={Artif. Intell.},
	volume={97},
	number={1-2},
	pages={273--324},
	year={1997},
	doi={10.1016/S0004-3702(97)00043-X}
}

@inproceedings{korf2009multi,
	title={Multi-Way Number Partitioning},
	author={Korf, Richard E.},
	booktitle={Proc. IJCAI},
	location={Pasadena, CA, USA},
	pages={538--543},
	year={2009},
	url={https://www.ijcai.org/Proceedings/09/Papers/096.pdf}
}

@inproceedings{korf2010objective,
	title={Objective Functions for Multi-Way Number Partitioning},
	author={Korf, Richard E.},
	booktitle={Proc. SoCS},
	location={Atlanta, GA, USA},
	pages={71--72},
	year={2010},
	doi={10.1609/socs.v1i1.18172}
}

@article{kraskov2004estimating,
	title={Estimating mutual information},
	author={Kraskov, Alexander and St{\"o}gbauer, Harald and Grassberger, Peter},
	journal={Phys. Rev. E},
	volume={69},
	number={6},
	year={2004},
	doi={10.1103/PhysRevE.69.066138}
}

@article{laberge2023partial,
	title={Partial Order in Chaos: Consensus on Feature Attributions in the Rashomon Set},
	author={Laberge, Gabriel and Pequignot, Yann and Mathieu, Alexandre and Khomh, Foutse and Marchand, Mario},
	journal={J. Mach. Learn. Res.},
	volume={24},
	number={364},
	year={2023},
	url={http://jmlr.org/papers/v24/23-0149.html}
}

@article{lagani2017feature,
	title={Feature Selection with the {R} Package {MXM}: Discovering Statistically Equivalent Feature Subsets},
	author={Lagani, Vincenzo and Athineou, Giorgos and Farcomeni, Alessio and Tsagris, Michail and Tsamardinos, Ioannis},
	journal={J. Stat. Software},
	volume={80},
	number={7},
	year={2017},
	doi={10.18637/jss.v080.i07}
}

@phdthesis{lawrinenko2017identical,
	title={Identical Parallel Machine Scheduling Problems: Structural patterns, bounding techniques and solution procedures},
	author={Lawrinenko, Alexander},
	year={2017},
	school={Friedrich-Schiller-Universit{\"a}t Jena},
	url={https://nbn-resolving.org/urn:nbn:de:gbv:27-dbt-20170427-0956483}
}

@article{lawrinenko2018reduction,
	title={Reduction criteria, upper bounds, and a dynamic programming based heuristic for the max--min $k_i$-partitioning problem},
	author={Lawrinenko, Alexander and Schwerdfeger, Stefan and Walter, Rico},
	journal={J. Heuristics},
	volume={24},
	number={2},
	pages={173--203},
	year={2018},
	doi={10.1007/s10732-017-9362-9}
}

@article{leeuwen2012diverse,
	title={Diverse subgroup set discovery},
	author={van Leeuwen, Matthijs and Knobbe, Arno},
	journal={Data Min. Knowl. Disc.},
	volume={25},
	number={2},
	pages={208--242},
	year={2012},
	doi={10.1007/s10618-012-0273-y}
}

@incollection{li2021maxsat,
	title={Max{SAT}, Hard and Soft Constraints},
	author={Li, Chu Min and Many{\`a}, Felip},
	booktitle={Handbook of Satisfiability},
	edition={2},
	chapter={23},
	pages={903--927},
	year={2021},
	publisher={IOS Press},
	doi={10.3233/FAIA201007}
}

@article{li2017feature,
	title={Feature Selection: A Data Perspective},
	author={Li, Jundong and Cheng, Kewei and Wang, Suhang and Morstatter, Fred and Trevino, Robert P. and Tang, Jiliang and Liu, Huan},
	journal={ACM Comput. Surv.},
	volume={50},
	number={6},
	year={2017},
	doi={10.1145/3136625}
}

@inproceedings{liu2019subspace,
	title={Subspace Learning with an Archive-Based Genetic Algorithm},
	author={Liu, Kai and Tian, Jin},
	booktitle={Proc. IEEM},
	location={Changsha, China},
	pages={181--188},
	year={2018},
	doi={10.1007/978-981-13-3402-3_20}
}

@inproceedings{lundberg2017unified,
	title={A Unified Approach to Interpreting Model Predictions},
	author={Lundberg, Scott M. and Lee, Su-In},
	booktitle={Proc. NIPS},
	location={Long Beach, CA, USA},
	year={2017},
	url={https://proceedings.neurips.cc/paper_files/paper/2017/hash/8a20a8621978632d76c43dfd28b67767-Abstract.html}
}

@article{matthews1975comparison,
	title={Comparison of the predicted and observed secondary structure of {T4} phage lysozyme},
	author={Matthews, Brian W.},
	journal={Biochim. Biophys. Acta - Protein Struct.},
	volume={405},
	number={2},
	pages={442--451},
	year={1975},
	doi={10.1016/0005-2795(75)90109-9}
}

@article{michiels2012computer,
	title={Computer-assisted proof of performance ratios for the Differencing Method},
	author={Michiels, Wil and Aarts, Emile and Korst, Jan and van Leeuwen, Jan and Spieksma, Frits C.R.},
	journal={Discrete Optim.},
	volume={9},
	number={1},
	pages={1--16},
	year={2012},
	doi={10.1016/j.disopt.2011.10.001}
}

@article{mnich2018parameterized,
	title={Parameterized complexity of machine scheduling: 15 open problems},
	author={Mnich, Matthias and van Bevern, Ren{\'e}},
	journal={Comput. Oper. Res.},
	volume={100},
	pages={254--261},
	year={2018},
	doi={10.1016/j.cor.2018.07.020}
}

@inproceedings{mohammadi2021scaling,
	title={Scaling Guarantees for Nearest Counterfactual Explanations},
	author={Mohammadi, Kiarash and Karimi, Amir-Hossein and Barthe, Gilles and Valera, Isabel},
	booktitle={Proc. AIES},
	location={Virtual Event, USA},
	pages={177--187},
	year={2021},
	doi={10.1145/3461702.3462514}
}

@inproceedings{molnar2020interpretable,
	title={Interpretable Machine Learning -- A Brief History, State-of-the-Art and Challenges},
	author={Molnar, Christoph and Casalicchio, Giuseppe and Bischl, Bernd},
	booktitle={Proc. ECML PKDD Workshops},
	location={Ghent, Belgium},
	pages={417--431},
	year={2020},
	doi={10.1007/978-3-030-65965-3_28}
}

@misc{mosek2022modeling,
	author={{MOSEK ApS}},
	title={{MOSEK} Modeling Cookbook : Mixed integer optimzation},
	year={2022},
	url={https://docs.mosek.com/modeling-cookbook/mio.html},
	note={Accessed: 2022-10-18}
}

@article{moser1958stirling,
	author={Moser, L. and Wyman, M.},
	title={Stirling numbers of the second kind},
	journal={Duke Math. J.},
	volume={25},
	number={1},
	pages={29--43},
	year={1958},
	doi={10.1215/S0012-7094-58-02504-3}
}

@inproceedings{mothilal2020explaining,
	title={Explaining Machine Learning Classifiers through Diverse Counterfactual Explanations},
	author={Mothilal, Ramaravind K. and Sharma, Amit and Tan, Chenhao},
	booktitle={Proc. FAT*},
	location={Barcelona, Spain},
	pages={607--617},
	year={2020},
	doi={10.1145/3351095.3372850}
}

@inproceedings{mueller2009relevant,
	title={Relevant Subspace Clustering: Mining the Most Interesting Non-Redundant Concepts in High Dimensional Data},
	author={M{\"u}ller, Emmanuel and Assent, Ira and G{\"u}nnemann, Stephan and Krieger, Ralph and Seidl, Thomas},
	booktitle={Proc. ICDM},
	location={Miami Beach, FL, USA},
	pages={377--386},
	year={2009},
	doi={10.1109/ICDM.2009.10}
}

@article{mueller2021feature,
	title={Feature selection for energy system modeling: Identification of relevant time series information},
	author={M{\"u}ller, Inga M.},
	journal={Energy AI},
	volume={4},
	year={2021},
	doi={10.1016/j.egyai.2021.100057}
}

@article{myerson1981utilitarianism,
	title={Utilitarianism, Egalitarianism, and the Timing Effect in Social Choice Problems},
	author={Myerson, Roger B.},
	journal={Econometrica},
	volume={49},
	number={4},
	pages={883--897},
	year={1981},
	doi={10.2307/1912508}
}

@inproceedings{narodytska2018learning,
	title={Learning Optimal Decision Trees with {SAT}},
	author={Narodytska, Nina and Ignatiev, Alexey and Pereira, Filipe and Marques-Silva, Joao},
	booktitle={Proc. IJCAI},
	location={Stockholm, Sweden},
	pages={1362--1368},
	year={2018},
	doi={10.24963/ijcai.2018/189}
}

@inproceedings{nguyen2009optimizing,
	title={Optimizing a Class of Feature Selection Measures},
	author={Nguyen, Hai and Franke, Katrin and Petrovi{\'c}, Slobodan},
	booktitle={Proc. DISCML},
	location={Vancouver, BC, Canada},
	year={2009},
	url={https://www.researchgate.net/publication/231175763}
}

@inproceedings{nguyen2010improving,
	title={Improving Effectiveness of Intrusion Detection by Correlation Feature Selection},
	author={Nguyen, Hai Thanh and Franke, Katrin and Petrovi{\'c}, Slobodan},
	booktitle={Proc. ARES},
	location={Krakow, Poland},
	pages={17--24},
	year={2010},
	doi={10.1109/ARES.2010.70}
}

@inproceedings{nguyen2010towards,
	title={Towards a Generic Feature-Selection Measure for Intrusion Detection},
	author={Nguyen, Hai Thanh and Franke, Katrin and Petrovi{\'c}, Slobodan},
	booktitle={Proc. ICPR},
	location={Istanbul, Turkey},
	pages={1529--1532},
	year={2010},
	doi={10.1109/ICPR.2010.378}
}

@inproceedings{nguyen20134s,
	title={{4S}: Scalable Subspace Search Scheme Overcoming Traditional {A}priori Processing},
	author={Nguyen, Hoang Vu and M{\"u}ller, Emmanuel and B{\"o}hm, Klemens},
	booktitle={Proc. Big Data},
	location={Santa Clara, CA, USA},
	pages={359--367},
	year={2013},
	doi={10.1109/BigData.2013.6691596}
}

@inproceedings{nguyen2014effective,
	title={Effective Global Approaches for Mutual Information Based Feature Selection},
	author={Nguyen, Xuan Vinh and Chan, Jeffrey and Romano, Simone and Bailey, James},
	booktitle={Proc. KDD},
	location={New York, NY, USA},
	pages={512--521},
	year={2014},
	doi={10.1145/2623330.2623611}
}

@inproceedings{nieuwenhuis2006sat,
	title={On {SAT} Modulo Theories and Optimization Problems},
	author={Nieuwenhuis, Robert and Oliveras, Albert},
	booktitle={Proc. SAT},
	location={Seattle, WA, USA},
	pages={156--169},
	year={2006},
	doi={10.1007/11814948_18}
}

@inproceedings{njoku2023wrapper,
	title={Wrapper Methods for Multi-Objective Feature Selection},
	author={Njoku, Uchechukwu F. and Abell{\'o}, Alberto and Bilalli, Besim and Bontempi, Gianluca},
	booktitle={Proc. EDBT},
	location={Ioannina, Greece},
	pages={697--709},
	year={2023},
	doi={10.48786/edbt.2023.58}
}

@article{olson2017pmlb,
	author={Olson, Randal S. and La Cava, William and Orzechowski, Patryk and Urbanowicz, Ryan J. and Moore, Jason H.},
	title={{PMLB}: a large benchmark suite for machine learning evaluation and comparison},
	journal={BioData Min.},
	volume={10},
	year={2017},
	doi={10.1186/s13040-017-0154-4}
}

@inproceedings{paclik2002feature,
	title={On Feature Selection with Measurement Cost and Grouped Features},
	author={Pacl{\'i}k, Pavel and Duin, Robert P. W. and van Kempen, Geert M. P. and Kohlus, Reinhard},
	booktitle={Proc. SSPR /SPR},
	location={Windsor, ON, Canada},
	pages={461--469},
	year={2002},
	doi={10.1007/3-540-70659-3_48}
}

@article{pedregosa2011scikit-learn,
	title={Scikit-learn: Machine Learning in {P}ython},
	author={Fabian Pedregosa and Ga{{\"e}}l Varoquaux and Alexandre Gramfort and Vincent Michel and Bertrand Thirion and Olivier Grisel and Mathieu Blondel and Peter Prettenhofer and Ron Weiss and Vincent Dubourg and Jake Vanderplas and Alexandre Passos and David Cournapeau and Matthieu Brucher and Matthieu Perrot and {{\'E}}douard Duchesnay},
	journal={J. Mach. Learn. Res.},
	volume={12},
	number={85},
	pages={2825--2830},
	year={2011},
	url={http://jmlr.org/papers/v12/pedregosa11a.html}
}

@article{peng2005feature,
	title={Feature Selection Based on Mutual Information Criteria of Max-Dependency, Max-Relevance, and Min-Redundancy},
	author={Peng, Hanchuan and Long, Fuhui and Ding, Chris},
	journal={IEEE Trans. Pattern Anal. Mach. Intell.},
	volume={27},
	number={8},
	pages={1226--1238},
	year={2005},
	doi={10.1109/TPAMI.2005.159}
}

@misc{perron2022or-tools,
	title={{OR-Tools}},
	author={Laurent Perron and Vincent Furnon},
	organization={Google},
	year={2022},
	url={https://developers.google.com/optimization/},
	note={Accessed: 2022-10-18}
}

@inproceedings{ribeiro2016should,
	title={``Why Should I Trust You?'' Explaining the Predictions of Any Classifier},
	author={Ribeiro, Marco Tulio and Singh, Sameer and Guestrin, Carlos},
	booktitle={Proc. KDD},
	location={San Francisco, CA, USA},
	pages={1135--1144},
	year={2016},
	doi={10.1145/2939672.2939778}
}

@inproceedings{robnik1997adaptation,
	title={An adaptation of {R}elief for attribute estimation in regression},
	author={Robnik-{\v{S}}ikonja, Marko and Kononenko, Igor},
	booktitle={Proc. ICML},
	location={Nashville, TN, USA},
	pages={296--304},
	year={1997},
	url={https://www.researchgate.net/publication/2635627}
}

@article{rodriguez2010quadratic,
	title={Quadratic Programming Feature Selection},
	author={Irene Rodriguez-Lujan and Ramon Huerta and Charles Elkan and Carlos Santa Cruz},
	journal={J. Mach. Learn. Res.},
	volume={11},
	number={49},
	pages={1491--1516},
	year={2010},
	url={http://jmlr.org/papers/v11/rodriguez-lujan10a.html}
}

@misc{romano2021pmlb,
	title={{PMLB} v1.0: An open source dataset collection for benchmarking machine learning methods},
	author={Romano, Joseph D. and Le, Trang T. and La Cava, William and Gregg, John T. and Goldberg, Daniel J. and Ray, Natasha L. and Chakraborty, Praneel and Himmelstein, Daniel and Fu, Weixuan and Moore, Jason H.},
	howpublished={arXiv:2012.00058v3 [cs.LG]},
	year={2021},
	doi={10.48550/arXiv.2012.00058}
}

@inproceedings{russell2019efficient,
	title={Efficient Search for Diverse Coherent Explanations},
	author={Russell, Chris},
	booktitle={Proc. FAT*},
	location={Atlanta, GA, USA},
	pages={20--28},
	year={2019},
	doi={10.1145/3287560.3287569}
}

@inproceedings{saeys2008robust,
	title={Robust Feature Selection Using Ensemble Feature Selection Techniques},
	author={Saeys, Yvan and Abeel, Thomas and Peer, Yves Van de},
	booktitle={Proc. ECML PKDD},
	location={Antwerp, Belgium},
	pages={313--325},
	year={2008},
	doi={10.1007/978-3-540-87481-2_21}
}

@article{sahni1976algorithms,
	title={Algorithms for Scheduling Independent Tasks},
	author={Sahni, Sartaj K.},
	journal={J. ACM},
	volume={23},
	number={1},
	pages={116--127},
	year={1976},
	doi={10.1145/321921.321934}
}

@inproceedings{schidler2021sat,
	title={{SAT}-based Decision Tree Learning for Large Data Sets},
	author={Schidler, Andr{\'e} and Szeider, Stefan},
	booktitle={Proc. AAAI},
	location={Virtual Conference},
	pages={3904--3912},
	year={2021},
	doi={10.1609/aaai.v35i5.16509}
}

@article{schreiber2018optimal,
	title={Optimal Multi-Way Number Partitioning},
	author={Schreiber, Ethan L. and Korf, Richard E. and Moffitt, Michael D.},
	journal={J. ACM},
	volume={65},
	number={4},
	pages={1--61},
	year={2018},
	doi={10.1145/3184400}
}

@article{seijo2017ensemble,
	title={Ensemble feature selection: Homogeneous and heterogeneous approaches},
	author={Seijo-Pardo, Borja and Porto-D{\'i}az, Iago and Bol{\'o}n-Canedo, Ver{\'o}nica and Alonso-Betanzos, Amparo},
	journal={Knowl.-Based Syst.},
	volume={118},
	pages={124--139},
	year={2017},
	doi={10.1016/j.knosys.2016.11.017}
}

@inproceedings{shekar2017diverse,
	title={Diverse Selection of Feature Subsets for Ensemble Regression},
	author={Shekar, Arvind Kumar and S{\'a}nchez, Patricia Iglesias and M{\"u}ller, Emmanuel},
	booktitle={Proc. DaWaK},
	location={Lyon, France},
	pages={259--273},
	year={2017},
	doi={10.1007/978-3-319-64283-3_19}
}

@article{siddiqi2020genetic,
	title={Genetic Algorithm for the Mutual Information-Based Feature Selection in Univariate Time Series Data},
	author={Siddiqi, Umair F. and Sait, Sadiq M. and Kaynak, Okyay},
	journal={IEEE Access},
	volume={8},
	pages={9597--9609},
	year={2020},
	doi={10.1109/ACCESS.2020.2964803}
}

@inproceedings{silva2019produce,
	title={How to produce complementary explanations using an Ensemble model},
	author={Silva, Wilson and Fernandes, Kelwin and Cardoso, Jaime S.},
	booktitle={Proc. IJCNN},
	location={Budapest, Hungary},
	year={2019},
	doi={10.1109/IJCNN.2019.8852409}
}

@inproceedings{sinz2005towards,
	title={Towards an Optimal {CNF} Encoding of Boolean Cardinality Constraints},
	author={Sinz, Carsten},
	booktitle={Proc. CP},
	location={Sitges, Spain},
	pages={827--831},
	year={2005},
	doi={10.1007/11564751_73}
}

@article{sosa2024feature,
	title={Feature selection: a perspective on inter-attribute cooperation},
	author={Sosa-Cabrera, Gustavo and G{\'o}mez-Guerrero, Santiago and Garc{\'\i}a-Torres, Miguel and Schaerer, Christian E.},
	journal={Int. J. Data Sci. Anal.},
	volume={17},
	number={2},
	pages={139--151},
	year={2024},
	doi={10.1007/s41060-023-00439-z}
}

@article{stepin2021survey,
	title={A Survey of Contrastive and Counterfactual Explanation Generation Methods for Explainable Artificial Intelligence},
	author={Stepin, Ilia and Alonso, Jose M. and Catala, Alejandro and Pereira-Fari{\~n}a, Mart{\'i}n},
	journal={IEEE Access},
	volume={9},
	pages={11974--12001},
	year={2021},
	doi={10.1109/ACCESS.2021.3051315}
}

@inproceedings{tao2012novel,
	title={A Novel Approach for Finding Alternative Clusterings Using Feature Selection},
	author={Tao, Vinh Thanh and Lee, JongHyeok},
	booktitle={Proc. DASFAA},
	location={Busan, South Korea},
	pages={482--493},
	year={2012},
	doi={10.1007/978-3-642-29038-1_35}
}

@article{trittenbach2019dimension,
	title={Dimension-based subspace search for outlier detection},
	author={Trittenbach, Holger and B{\"o}hm, Klemens},
	journal={Int. J. Data Sci. Anal.},
	volume={7},
	number={2},
	pages={87--101},
	year={2019},
	doi={10.1007/s41060-018-0137-7}
}

@inproceedings{ulrich2022selecting,
	title={Selecting {SAT} Encodings for Pseudo-Boolean and Linear Integer Constraints},
	author={Ulrich-Oltean, Felix and Nightingale, Peter and Walker, James Alfred},
	booktitle={Proc. CP},
	location={Haifa, Israel},
	year={2022},
	doi={10.4230/LIPIcs.CP.2022.38}
}

@misc{verma2020counterfactual,
	title={Counterfactual Explanations for Machine Learning: A Review},
	author={Verma, Sahil and Boonsanong, Varich and Hoang, Minh and Hines, Keegan E. and Dickerson, John P. and Shah, Chirag},
	howpublished={arXiv:2010.10596v3 [cs.LG]},
	year={2022},
	doi={10.48550/arXiv.2010.10596}
}

@article{wachter2017counterfactual,
	title={Counterfactual Explanations without Opening the Black Box: Automated Decisions and the {GDPR}},
	author={Wachter, Sandra and Mittelstadt, Brent and Russell, Chris},
	journal={Harv. J. Law Technol.},
	volume={31},
	number={2},
	pages={841--887},
	year={2017},
	url={https://jolt.law.harvard.edu/assets/articlePDFs/v31/Counterfactual-Explanations-without-Opening-the-Black-Box-Sandra-Wachter-et-al.pdf}
}

@article{walter2017lower,
	title={Lower bounds and algorithms for the minimum cardinality bin covering problem},
	author={Walter, Rico and Lawrinenko, Alexander},
	journal={Eur. J. Oper. Res.},
	volume={256},
	number={2},
	pages={392--403},
	year={2017},
	doi={10.1016/j.ejor.2016.06.068}
}

@article{walter2017improved,
	title={Improved approaches to the exact solution of the machine covering problem},
	author={Walter, Rico and Wirth, Martin and Lawrinenko, Alexander},
	journal={J. Sched.},
	volume={20},
	number={2},
	pages={147--164},
	year={2017},
	doi={10.1007/s10951-016-0477-x}
}

@inproceedings{wang2019designing,
	title={Designing Theory-Driven User-Centric Explainable {AI}},
	author={Wang, Danding and Yang, Qian and Abdul, Ashraf and Lim, Brian Y.},
	booktitle={Proc. CHI},
	location={Glasgow, United Kingdom},
	year={2019},
	doi={10.1145/3290605.3300831}
}

@article{white2010automated,
	title={Automated diagnosis of feature model configurations},
	author={White, Jules and Benavides, David and Schmidt, Douglas C. and Trinidad, Pablo and Dougherty, Brian and Ruiz-Cort{\'e}s, Antonio},
	journal={J. Syst. Software},
	volume={83},
	number={7},
	pages={1094--1107},
	year={2010},
	doi={10.1016/j.jss.2010.02.017}
}

@article{wilson1931semi,
	title={On Semi-Metric Spaces},
	author={Wilson, Wallace Alvin},
	journal={Am. J. Math.},
	volume={53},
	number={2},
	pages={361--373},
	year={1931},
	doi={10.2307/2370790}
}

@article{woeginger1997polynomial,
	title={A polynomial-time approximation scheme for maximizing the minimum machine completion time},
	author={Woeginger, Gerhard J.},
	journal={Oper. Res. Lett.},
	volume={20},
	number={4},
	pages={149--154},
	year={1997},
	doi={10.1016/S0167-6377(96)00055-7}
}

@article{woeginger2005comment,
	title={A comment on scheduling two parallel machines with capacity constraints},
	author={Woeginger, Gerhard J.},
	journal={Discrete Optim.},
	volume={2},
	number={3},
	pages={269--272},
	year={2005},
	doi={10.1016/j.disopt.2005.06.005}
}

@inproceedings{woznica2012model,
	title={Model Mining for Robust Feature Selection},
	author={Woznica, Adam and Nguyen, Phong and Kalousis, Alexandros},
	booktitle={Proc. KDD},
	location={Beijing, China},
	pages={913--921},
	year={2012},
	doi={10.1145/2339530.2339674}
}

@article{yu2021learning,
	title={Learning Optimal Decision Sets and Lists with {SAT}},
	author={Yu, Jinqiang and Ignatiev, Alexey and Stuckey, Peter J. and Le Bodic, Pierre},
	journal={J. Artif. Intell. Res.},
	volume={72},
	pages={1251--1279},
	year={2021},
	doi={10.1613/jair.1.12719}
}

@inproceedings{yu2003feature,
	title={Feature Selection for High-Dimensional Data: A Fast Correlation-Based Filter Solution},
	author={Yu, Lei and Liu, Huan},
	booktitle={Proc. ICML},
	location={Washington DC, USA},
	pages={856--863},
	year={2003},
	url={https://aaai.org/papers/icml03-111-feature-selection-for-high-dimensional-data-a-fast-correlation-based-filter-solution/}
}

@article{yuan2006model,
	title={Model selection and estimation in regression with grouped variables},
	author={Yuan, Ming and Lin, Yi},
	journal={J. R. Stat. Soc. B},
	volume={68},
	number={1},
	pages={49--67},
	year={2006},
	doi={10.1111/j.1467-9868.2005.00532.x}
}

@inproceedings{zhang2011heuristic,
	title={Heuristic Algorithms for Balanced Multi-Way Number Partitioning},
	author={Zhang, Jilian and Mouratidis, Kyriakos and Pang, HweeHwa},
	booktitle={Proc. IJCAI},
	location={Barcelona, Spain},
	pages={693--698},
	year={2011},
	doi={10.5591/978-1-57735-516-8/IJCAI11-122}
}

\end{document}